\DeclareMathAlphabet\mathbb{U}{msb}{m}{n}
\definecolor{Gray}{gray}{0.85}
\newcolumntype{g}{>{\columncolor{Gray}}c}
\def\Rset{\mathbb{R}}
\DeclareMathOperator*{\E}{\mathbb{E}}
\DeclareMathOperator*{\argmax}{\rm argmax}
\DeclareMathOperator*{\argmin}{\rm argmin}
\newcommand{\nrm}[1]{{\left\vert\kern-0.25ex\left\vert\kern-0.25ex\left\vert #1 
    \right\vert\kern-0.25ex\right\vert\kern-0.25ex\right\vert}}
\DeclarePairedDelimiter{\abs}{\lvert}{\rvert} 
\DeclarePairedDelimiter{\bracket}{[}{]}
\DeclarePairedDelimiter{\curl}{\{}{\}}
\DeclarePairedDelimiter{\paren}{(}{)}
\newcommand{\sA}{{\mathscr A}}
\newcommand{\sC}{{\mathscr C}}
\newcommand{\sD}{{\mathscr D}}
\newcommand{\sE}{{\mathscr E}}
\newcommand{\sF}{{\mathscr F}}
\newcommand{\sH}{{\mathscr H}}
\newcommand{\sM}{{\mathscr M}}
\newcommand{\sT}{{\mathscr T}}
\newcommand{\sX}{{\mathscr X}}
\newcommand{\sY}{{\mathscr Y}}
\newcommand{\sfL}{{\mathsf L}}
\newcommand{\h}{\widehat}
\newcommand{\ov}{\overline}
\newcommand{\uv}{\underline}
\newcommand{\e}{\epsilon}
\newcommand{\ignore}[1]{}
\newcommand{\Rad}{\mathfrak R}
\newcommand{\hh}{{\sf h}}
\newcommand{\compsum}{{cross-entropy}}
\newcommand{\1}{\mathds{1}}
\newcommand{\labs}{{\sfL_{\rm{abs}}}}
\newcommand{\labsc}{{\sfL_{\rm{abs}}}}
\newcommand{\lsc}{{\sfL}}
\title[Theoretically Grounded Score-Based Multi-Class Abstention]{Theoretically Grounded Loss Functions and Algorithms for Score-Based Multi-Class Abstention}
\begin{document}

\maketitle

\begin{abstract}
\ignore{
We analyze the important framework of learning with abstention in the
multi-class classification setting, where the learner has the option
of abstaining at some cost. We present a series of new theoretical and
algorithmic results for the score-based formulation. We derive and
introduce new families of surrogate losses, which include the
state-of-the-art surrogate losses as special cases in the single-stage
setting and a novel family of loss functions in the two-stage
setting. We prove $\sH$-consistency bounds for these surrogate losses,
which are strong non-asymptotic and hypothesis set-specific guarantees
upper-bounding the estimation error of the abstention loss function in
terms of the estimation error of the surrogate loss. Our bounds can
guide the design of novel abstention algorithms by minimizing the
proposed surrogate losses and by enabling the comparison of different
\compsum\ score-based surrogates, which is aligned with our empirical
findings. We carried out extensive experiments on CIFAR-10, CIFAR-100,
and SVHN datasets to compare the performance of these new algorithms
with state-of-the-art ones. The results demonstrate the practical
significance of our new surrogate losses and two-stage abstention
algorithms. They also show that the relative performance of the
state-of-the-art \compsum\ score-based surrogate losses can vary
across datasets.}
Learning with abstention is a key scenario where the learner can
abstain from making a prediction at some cost.  In this paper, we
analyze the score-based formulation of learning with abstention in the
multi-class classification setting.
We introduce new families of surrogate losses for the abstention loss
function, which include the state-of-the-art surrogate losses in the
single-stage setting and a novel family of loss functions in the
two-stage setting. We prove strong non-asymptotic and hypothesis
set-specific consistency guarantees for these surrogate losses, which
upper-bound the estimation error of the abstention loss function in
terms of the estimation error of the surrogate loss.
Our bounds can help compare different score-based surrogates and guide
the design of novel abstention algorithms by minimizing the proposed
surrogate losses. We experimentally evaluate our new algorithms on
CIFAR-10, CIFAR-100, and SVHN datasets and the practical significance
of our new surrogate losses and two-stage abstention algorithms. Our
results also show that the relative performance of the
state-of-the-art score-based surrogate losses can vary across
datasets.
\end{abstract}



\section{Introduction}
\label{sec:intro}

In many applications, incorrect predictions can be costly and it is
then preferable to abstain from making predictions for some input
instances, since the cost of abstention is typically less significant.
As an example, in medical diagnosis, the cost of an incorrect
diagnosis is incommensurable since the patient's health may be
jeopardized. In contrast, the cost of abstention is typically that of
resorting to some additional laboratory tests.  For a spoken-dialog
system, an incorrect prediction may result in canceling a credit card,
for a bank, or shipping the wrong medication to the patient, for a
pharmacy, while the cost of abstaining is limited to that of switching
to a human operator.

A related problem arises in applications where a learning model
distilled from a very complex one is used, due to its more modest
inference cost.  However, since it is less accurate, one may need to
resort to abstention for some inputs and instead predict using the
more complex model, despite its higher inference cost. This problem of
\emph{deferring} to an alternative model, in fact to a human in some
cases, can also be viewed as a special case of the general abstention
scenario \citep{madras2018predict,raghu2019algorithmic,
  mozannar2020consistent,
  okati2021differentiable,wilder2021learning,verma2022calibrated,
  narasimhanpost,verma2023learning}.  In other applications such as
information extraction or natural language text generation or
question-answering, the output is sometimes not factual
\citep{Filippova2020, maynez2020}. It can then be important to learn
to abstain from responding to avoid such \emph{hallucinations} and
instead defer to a more costly predictor.

The scenario of classification with abstention is very broad and
admits increasingly many important applications, including as a
subroutine for other algorithms such as active learning
\citep{ZhangChaudhuri2016} or dual purpose learning
\citep{AminDeSalvoRostamizadeh2021}. But, how should we formulate the
problem of multi-class classification with abstention and when should
we abstain?

There is a vast literature related to the problem of abstention or
rejection. Here, we briefly discuss work directly related to this
study and give a more detailed discussion in
Appendix~\ref{app:related-work}.  A standard method for abstention
adopted in the past, which covers a very large number of publications
(e.g., \citet{HerbeiWegkamp2005,bartlett2008classification,
  yuan2010classification,lei2014classification,denis2020consistency})
and dates back to the early work of \citet{Chow1957,chow1970optimum},
is the so-called \emph{confidence-based abstention}. This consists of
first learning a predictor and then abstaining when the score returned
by the predictor falls below some fixed
threshold. \citet{HerbeiWegkamp2005} examined binary classification
with abstention by giving the optimal rule for these ternary
functions. \citet{bartlett2008classification} formulated a loss
function for this setting taking into consideration the abstention
cost $c$ and suggested to learn a predictor using a \emph{double hinge
loss} that they showed benefits from consistency
results. \citet{yuan2010classification} investigated the necessary and
sufficient condition for consistency of convex risk minimization with
respect to the abstention loss and obtained the corresponding excess
error bounds in the same setting. Other variants of this framework
have also been studied in \citep{lei2014classification,
  denis2020consistency}.

However, \citet*{CortesDeSalvoMohri2016,CortesDeSalvoMohri2023} argued
that, in general, confidence-based abstention is suboptimal, unless
the predictor learned is the Bayes classifier. They showed that, in
general, even in simple cases, no threshold-based abstention can
achieve the desired result. They introduced a novel framework for
abstention that consists of learning \emph{simultaneously} both a
predictor $h$ and a rejector $r$ that, in general, can be distinct
from a threshold-based function. They further defined a
\emph{predictor-rejector formulation} loss function for the pair $(h,
r)$, taking into consideration the abstention cost $c$. The authors
gave Rademacher complexity-based generalization bounds for this
learning problem. They also suggested several surrogate loss functions
for the abstention loss in the binary classification setting, and
further showed that these surrogate losses benefitted from consistency
guarantees. They designed algorithms based on these surrogate losses,
which they showed empirically outperform confidence-based abstention
baselines.  This work had multiple follow-up studies, including a
theoretical and algorithmic study of boosting with abstention
\citep{CortesDeSalvoMohri2016bis} and a study of the extension of the
results to multi-class setting \citep{NiCHS19}. These authors argued
that the design of calibrated or Bayes-consistent surrogate losses in
the multi-class classification setting based on the predictor-rejector
abstention loss of \citet{CortesDeSalvoMohri2016} was difficult and
left that as an open problem. Recently,
\citet{MaoMohriZhong2024predictor} introduced several new theoretical
and algorithmic findings within this framework, effectively addressing
the open question. Furthermore,
\citet{MohriAndorChoiCollinsMaoZhong2024learning} explored the
framework from the perspective of learning with a fixed predictor,
applying their novel algorithms to decontextualization tasks.

\citet{mozannar2020consistent} proposed instead for the multi-class
abstention setting a \emph{score-based formulation}, where, in
addition to the standard scoring functions associated to each label, a
new scoring function is associated to a new rejection label.
Rejection takes places when the score given to the rejection label is
higher than other scores and the rejector is therefore implicitly
defined via this specific rule. The authors suggested a surrogate loss
for their approach based on the cross-entropy (logistic loss with
softmax applied to neural networks outputs), which they proved to be
Bayes-consistent. More recently, \citet{caogeneralizing} gave a more
general family of Bayes-consistent surrogate losses for the
score-based formulation that can be built upon any consistent loss for
the standard multi-class classification problem. Most recent research
by \citet{pmlr-v206-mozannar23a} demonstrates that cross-entropy
score-based surrogate losses are not realizable $\sH$-consistent \citep{long2013consistency,zhang2020bayes}, in relation to
abstention loss. Instead, the authors propose a novel surrogate
loss that is proved to be realizable $\sH$-consistent when $\sH$ is
\emph{closed under scaling}, although its Bayes-consistency remains
unclear. The challenge of devising a surrogate loss that exhibits both
Bayes-consistency and realizable $\sH$-consistency remains an open
problem.

This paper presents a series of new theoretical and algorithmic
results for multi-class classification for the score-based abstention
formulation. In Section~\ref{sec:preliminary}, we formalize the
setting and first define explicitly the underlying abstention loss. We
then show how the general family of surrogate losses introduced by
\citet{caogeneralizing} can be naturally derived from that expression
in Section~\ref{sec:score-general}.

More importantly, we prove \emph{$\sH$-consistency bounds} for these
surrogate losses (Section~\ref{sec:score-bounds}), which are
non-asymptotic and hypothesis set-specific guarantees upper-bounding
the estimation error of the abstention loss function in terms of the
estimation error of the surrogate loss \citep{awasthi2022multi}. These
provide stronger guarantees than Bayes-consistency guarantees, which
only provide an asymptotic guarantee and hold only for the full family
of measurable functions. We first derive our guarantees for a broad
family of score-based abstention surrogates, which we name
\emph{\compsum\ score-based surrogate losses}. These include the
surrogate losses in \citep{mozannar2020consistent,caogeneralizing},
for which our guarantees admit their Bayes-consistency as a special
case. Our theory can also help compare different surrogate losses. To
make it more explicit, we give an explicit analysis of the
minimizability gaps appearing in our bounds. We further prove a
general result showing that an $\sH$-consistency bound in standard
classification yields immediately an $\sH$-consistency bound for
score-based abstention losses. Minimization of these new surrogate
losses directly leads to new algorithm for multi-class abstention.

In Section~\ref{sec:two-stage}, we analyze a two-stage algorithmic
scheme often more relevant in practice, for which we give surrogate
losses that we prove to benefit from $\sH$-consistency bounds. These
are also non-asymptotic and hypothesis set-specific guarantees
upper-bounding the estimation error of the abstention loss function in
terms of the estimation error of the first-stage surrogate loss and
second-stage one. Minimizing these new surrogate losses directly leads
to new algorithm for multi-class abstention.

In Section~\ref{sec:realizable}, we demonstrate that our proposed
two-stage score-based surrogate losses are not only Bayes-consistent,
but also realizable $\sH$-consistent. This effectively addresses the
open question posed by \citet{pmlr-v206-mozannar23a} and highlights
the benefits of the two-stage formulation.

In Section~\ref{sec:finite-sample}, we show that our $\sH$-consistency
bounds can be directly used to derive finite sample estimation bounds
for a surrogate loss minimizer of the abstention loss. These are more
favorable and more relevant guarantee than a similar finite sample
guarantee that could be derived from an excess error bound.

In Section~\ref{sec:experiments}, we report the results of several
experiments comparing these algorithms and discuss them in light of
our theoretical guarantees.  Our empirical results show, in
particular, that the two-stage score-based abstention surrogate loss
consistently outperforms the state-of-the-art cross-entropy
scored-based abstention surrogate losses on CIFAR-10, CIFAR-100 and
SVHN, while highlighting that the relative performance of the
state-of-the-art \compsum\ scored-based abstention losses varies by
the datasets. We present a summary of our main contribution as follows
and start with a formal description of the problem formulations.
\begin{itemize}

    \item Derivation of the cross-entropy score-based surrogate loss
      from first principles, which include the state-of-the-art
      surrogate losses as special cases.

    \item $\sH$-consistency bounds for cross-entropy score-based
      surrogate losses, which can help theoretically compare different
      cross-entropy score-based surrogate losses and guide the design
      of a multi-class abstention algorithm in comparison to the
      existing asymptotic consistency guarantees.

    \item A novel family of surrogate loss functions in the two-stage
      setting and their strong $\sH$-consistency bounds guarantees.

    \item Realizable $\sH$-consistency guarantees of proposed
      two-stage score-based surrogate loss, which effectively
      addresses the open question posed by
      \citet{pmlr-v206-mozannar23a} and highlights the benefits of the
      two-stage formulation.

    \item Extensive experiments demonstrating the practical
      significance of our new surrogate losses and the varying
      relative performance of the state-of-the-art cross-entropy
      score-based surrogate losses across datasets.
\end{itemize}

\section{Preliminary}
\label{sec:preliminary}

We consider the standard multi-class classification setting with an
input space $\sX$ and a set of $n \geq 2$ classes or labels $\sY =
\curl*{1, \ldots, n}$. We will denote by $\sD$ a distribution over
$\sX \times \sY$ and by $p(x, y)$, the conditional probability of $Y =
y$ given $X = x$, that is $p(x, y) = \sD(Y = y \!\mid\! X = x)$. We
will also use $p(x) = \paren*{p(x, 1), \ldots, p(x, n)}$ to denote the
vectors of these probabilities for a given $x$.

We study the learning scenario of multi-class classification with
abstention in the
\emph{score-based formulation} proposed by
\citet{mozannar2020consistent} and recently studied by
\citet{caogeneralizing}.

\paragraph{Score-Based Abstention Formulation}

In this formulation of the abstention problem, the label set $\sY$ is
augmented with an additional category $(n + 1)$ corresponding to
abstention. We denote by $\sY \cup \curl*{n+1} = \curl*{1, \ldots, n,
  n + 1}$ the augmented set and consider a hypothesis set $ \sH$ of
functions mapping from $\sX \times (\sY \cup \curl*{n + 1})$ to
$\Rset$.
The label associated by $ h \in \sH$ to an input $x \in \sX$ is
denoted by $ \hh(x)$ and defined by $ \hh(x) = n + 1$ if $ h(x, n + 1)
\geq \max_{y \in \sY} h(x, y)$; otherwise, $ \hh(x)$ is defined as an
element in $\sY$ with the highest score, $ \hh(x) = \argmax_{y \in
  \sY} h(x, y)$, with an arbitrary but fixed deterministic strategy
for breaking ties.  When $ \hh(x) = n + 1$, the learner abstains from
making a prediction for $x$ and incurs a cost $c(x)$. Otherwise, it
predicts the label $y = \hh(x)$. The \emph{score-based abstention
loss} $\labsc$ for this formulation is defined as follows for any $ h
\in \sH$ and $(x, y) \in \sX \times \sY$:
\begin{equation}
\label{eq:abs-score}
\labsc( h, x, y)
= \1_{ \hh(x)\neq y}\1_{ \hh(x)\neq n + 1} + c(x) \1_{ \hh(x) = n + 1}.
\end{equation}
Thus, when it does not abstain, $ \hh(x) \neq n + 1$, the learner
incurs the familiar zero-one classification loss and when it abstains,
$ \hh(x) = n + 1$, the cost $c(x)$.  Given a finite sample drawn
i.i.d.\ from $\sD$, the learning problem consists of selecting a
hypothesis $ h$ in $ \sH$ with small expected score-based abstention
loss, $\E_{(x, y) \sim \sD}[\labsc( h, x, y)]$. Note that the cost $c$
implicitly controls the rejection rate when minimizing the abstention
loss.

Optimizing the score-based abstention loss is intractable for most
hypothesis sets. Thus, instead, learning algorithms for this scenario
must resort to a surrogate loss $\lsc$ for $\labsc$. In the next
sections, we will define score-based surrogate losses and analyze
their properties. Given a loss function $ \sfL$, we denote by
$\sE_{\lsc}( h) = \E_{(x, y) \sim \sD}\bracket*{\lsc( h, x, y)}$ the
generalization error or expected loss of $ h$ and by $\sE_{\lsc}^*(
\sH) = \inf_{ h \in \sH} \sE_{\lsc}( h)$ the minimal generalization
error. In the following, to simplify the presentation, we assume that
the cost function $c\in (0,1)$ is constant. However, many of our
results extend straightforwardly to the general case.

\paragraph{$\sH$-Consistency Bounds}

We will seek to derive \emph{$\sH$-consistency bounds} for
$\lsc$. These are strong guarantees that take the form of inequalities
establishing a relationship between the abstention loss $\labsc$ of
any hypothesis $h \in \sH$ and the surrogate loss $\lsc$ associated
with it
\citep{awasthi2021calibration,awasthi2021finer,awasthi2022Hconsistency,
  awasthi2022multi,AwasthiMaoMohriZhong2023theoretically,awasthi2024dc,
  MaoMohriZhong2023cross,MaoMohriZhong2023ranking,
  MaoMohriZhong2023rankingabs,zheng2023revisiting,
  MaoMohriZhong2023characterization,MaoMohriZhong2023structured,mao2024top,mao2024h}. These
are bounds of the form $\sE_{\labsc}( h) - \sE_{\labsc}^*( \sH) \leq
f\paren*{\sE_{\lsc}( h) - \sE_{\lsc}^*( \sH)}$, for some
non-decreasing function $f$, that upper-bound the estimation error of
the loss $\labsc$ in terms of that of $\lsc$ for a given hypothesis
set $ \sH$. Thus, they show that if we can reduce the surrogate
estimation error $(\sE_{\lsc}( h) - \sE_{\lsc}^*( \sH))$ to $\e > 0$,
then the estimation error of $\labsc$ is guaranteed to be at most
$f(\e)$. These guarantees are non-asymptotic and take into
consideration the specific hypothesis set $\sH$ used.

\paragraph{Minimizability Gaps} 

A key quantity appearing in these bounds
is the \emph{minimizability gap}, denoted by $\sM_{\lsc}(\sH)$ and
defined by $\sM_{\lsc}(\sH) = \sE^*_{\lsc}( \sH) - \E_x
\bracket[\big]{\inf_{ h \in \sH} \E_y\bracket*{\lsc( h, X, y) \mid X =
    x}}$ for a given hypothesis set $\sH$. Thus, the minimizability
gap for a hypothesis set $\sH$ and loss function $\lsc$ measures the
difference of the best-in-class expected loss and the expected
pointwise infimum of the loss.  Since the infimum is super-additive,
it follows that the minimizability gap is always non-negative.
When the loss function $\lsc$ depends only on $h(x, \cdot)$ for all
$h$, $x$, and $y\in \sY$, that is, $\sfL(h, x, y) = \Psi(h(x, 1),
\ldots, h(x, n + 1), y)$ for some function $\Psi$, it can be shown
that the minimizability gap vanishes for the family of all measurable
functions: $\sM( \sH_{\rm{all}}) = 0$
\citep[lemma~2.5]{steinwart2007compare}. However, in general, the
minimizability gap is non-zero for restricted hypothesis sets $\sH$
and is therefore essential to analyze. It is worth noting that the
minimizability gap can be upper bounded by the approximation error
$\sA_{\lsc}(\sH) = \sE^*_{\lsc}(\sH) - \E_x\bracket[\big]{\inf_{ h \in
    \sH_{\rm{all}}} \E_y \bracket{\lsc( h, X, y) \mid X =
    x}}$. However, the minimizability gap is a more refined quantity
than the approximation error and can lead to more favorable
guarantees (see Appendix~\ref{app:better-bounds}).

\section{Single-stage score-based formulation}
\label{sec:score}

In this section, we first derive the general form of a family of
surrogate loss functions $\lsc$ for $\labsc$ by analyzing the
abstention loss $\labsc$. Next, we give $ \sH$-consistency bounds for
these surrogate losses, which provide non-asymptotic hypothesis
set-specific guarantees upper-bounding the estimation error of the
loss function $\labsc$ in terms of estimation error of $\lsc$.

\subsection{General Surrogate Losses}
\label{sec:score-general}
Consider a hypothesis $ h$ in the score-based setting.  Note that
for any $(x, y) \in \sX \times \sY$, $ \hh(x) = n + 1$ implies $
\hh(x) \neq y$, therefore, we have: $\1_{ \hh(x)\neq y}\1_{
  \hh(x) = n + 1}= \1_{ \hh(x) = n + 1}$.  Thus,
$\labsc( h, x, y)$ can be rewritten as follows:
\begin{align*}
\labsc( h, x, y)
& = \1_{ \hh(x)\neq y}\paren*{1 - \1_{ \hh(x) = n + 1}} + c \1_{ \hh(x) = n + 1}\\
& = \1_{ \hh(x)\neq y} - \1_{ \hh(x)\neq y} \1_{ \hh(x) = n + 1} + c \1_{ \hh(x) = n + 1}\\
& = \1_{ \hh(x)\neq y} - \1_{ \hh(x) = n + 1} + c \1_{ \hh(x) = n + 1}\\
& = \1_{ \hh(x)\neq y} + (c - 1) \1_{ \hh(x) = n + 1}\\
& = \1_{ \hh(x)\neq y} + (1 - c) \1_{ \hh(x)\neq n + 1} + c - 1.
\end{align*}
In view of this expression, since the last term $(c - 1)$ is a
constant, if $ \ell$ is a surrogate loss for the zero-one
multi-class classification loss over the set of labels $ \sY$, then
$\lsc$ defined as follows is a natural surrogate loss for $\labsc$:
for all $(x, y) \in \sX \times \sY$,
\begin{equation}
\label{eq:sur-score}
\lsc \paren*{ h, x, y}
=  \ell \paren*{ h, x, y} + (1 - c) \,  \ell\paren*{ h, x, n + 1}.
\end{equation}
This is precisely the form of the surrogate losses proposed by
\citet{mozannar2020consistent}, for which the analysis just presented
gives a natural derivation. This is also the form of the surrogate
losses adopted by \citet{caogeneralizing}.

\subsection{$\sH$-Consistency Bounds Guarantees}
\label{sec:score-bounds}

\citet{caogeneralizing} presented a nice study of the surrogate loss
$\lsc$ for a specific family of zero-one loss surrogates $ \ell$.
The authors showed that the surrogate loss $\lsc$ is Bayes-consistent
with respect to the score-based abstention loss $\labsc$ when $
\ell$ is Bayes-consistent with respect to the multi-class zero-one
classification loss $\ell_{0-1}$.  Bayes-consistency guarantees that,
asymptotically, a nearly optimal minimizer of $\lsc$ over the family
of all measurable functions is also a nearly optimal minimizer of
$\labsc$. However, this does not provide any guarantee for a restricted
subset $ \sH$ of the family of all measurable functions.  It also
provides no guarantee for approximate minimizers since convergence
could be arbitrarily slow and the result is only asymptotic.

In the following, we will prove $\sH$-consistency bounds guarantees,
which are stronger results that are non-asymptotic and that hold for a
restricted hypothesis set $ \sH$. The specific instance of our results
where $\sH$ is the family of all measurable functions directly implies
the Bayes-consistency results of \citet{caogeneralizing}.

\paragraph{$\sH$-Consistency Bounds for Cross-Entropy Abstention Losses}

We first prove $\sH$-consistency bounds for a broad family of
score-based abstention surrogate losses $\lsc_{\mu}$, that we will
refer to as \emph{\compsum\ score-based surrogate losses}. These are
loss functions defined by
\begin{equation}
\label{eq:L-mu}
\lsc_{\mu} \paren*{ h, x, y}
=  \ell_{\mu} \paren*{ h, x, y} + (1 - c) \, \ell_{\mu}\paren*{ h, x, n + 1},    
\end{equation}
where, for any $ h\in  \sH$, $x\in \sX$, $y\in\sY$ and $\mu\geq 0$,
\begin{align*}
\ell_{\mu}(h,x, y) =\begin{cases}
\frac{1}{1 - \mu} \paren*{\bracket*{\sum_{y'\in\sY \cup \curl*{n+1}} e^{{ h(x, y') -  h(x, y)}}}^{1 - \mu} - 1} & \mu\neq 1  \\
\log\paren*{\sum_{y'\in \sY\cup \curl*{n+1}} e^{ h(x, y') -  h(x, y)}} & \mu = 1.
\end{cases}    
\end{align*}
The loss function $\ell_{\mu}$ coincides with the (multinomial)
logistic loss
\citep{Verhulst1838,Verhulst1845,Berkson1944,Berkson1951} when
$\mu=1$, matches the generalized cross-entropy loss
\citep{zhang2018generalized} when $\mu\in (1,2)$, and the mean
absolute loss \citep{ghosh2017robust} when $\mu=2$. Thus, the
\compsum\ score-based surrogate losses $ \sfL_{\mu}$ include the
abstention surrogate losses proposed in \citep{mozannar2020consistent}
which correspond to the special case of $\mu=1$ and the abstention
surrogate losses adopted in \citep{caogeneralizing}, which correspond
to the special case of $\mu\in [1, 2]$.

We say that a hypothesis set $\sH$ is \emph{symmetric} when the
scoring functions it induces do not depend on any particular
ordering of the labels, that is when there exists a family $\sF$ of
functions $f$ mapping from $\sX$ to $\Rset$ such that, for any $x \in
\sX$, $\curl*{\bracket*{h(x,1),\ldots,h(x,n),h(x,n+1)}\colon h\in \sH}
= \curl*{\bracket*{f_1(x),\ldots, f_n(x),f_{n+1}(x)}\colon f_1,
  \ldots, f_{n+1}\in \sF}$. We say that a hypothesis set $\sH$ is
\emph{complete} if the set of scores it generates spans $\Rset$, that
is, $\curl*{h(x, y)\colon h\in \sH} = \Rset$, for any $(x, y)\in \sX
\times \sY \cup \curl*{n+1}$. Common hypothesis sets used in practice,
such as the family of linear models, that of neural networks and of
course that of all measurable functions are all symmetric and
complete.  The guarantees given in the following result are thus
general and widely applicable.

\begin{restatable}[\textbf{$\sH$-consistency bounds for \compsum\
      score-based surrogates}]
  {theorem}{BoundCompSum}
\label{Thm:bound_comp_sum}
Assume that $\sH$ is symmetric and complete. Then, for any hypothesis
$h \in \sH$ and any distribution $\sD$, the following inequality holds:
\ifdim\columnwidth=\textwidth
\begin{equation*}
\sE_{\labsc}( h) - \sE_{\labsc}^*( \sH) + \sM_{\labsc}( \sH)
\leq \Gamma_{\mu}
  \paren*{\sE_{\lsc_{\mu}}( h) - \sE_{\lsc_{\mu}}^*(\sH)
    + \sM_{\lsc_{\mu}}(\sH)},
\end{equation*}
\else
\begin{multline*}
 \sE_{\labsc}( h) - \sE_{\labsc}^*( \sH) + \sM_{\labsc}( \sH)\\
\leq \Gamma_{\mu}
  \paren*{\sE_{\lsc_{\mu}}( h) - \sE_{\lsc_{\mu}}^*(\sH)
    + \sM_{\lsc_{\mu}}(\sH)},
\end{multline*}
\fi
where $\Gamma_{\mu}(t)=\begin{cases}
\sqrt{(2-c)2^{\mu}(2-\mu) t} & \mu\in [0,1)\\
\sqrt{2(2-c)(n+1)^{\mu-1}t } & \mu\in [1,2) \\
(\mu - 1)(n+1)^{\mu - 1} t & \mu \in [2,\plus \infty).
\end{cases}$
\end{restatable}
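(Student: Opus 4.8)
The plan is to reduce the abstention problem pointwise to a standard $(n+1)$-class classification problem and then invoke the known $\sH$-consistency bound for the comp-sum loss $\ell_\mu$ in that setting. Fix $x$ and write $p_y = p(x,y)$. Following the standard recipe for $\sH$-consistency bounds, I would first pass to the conditional risks $\cC_{\lsc_\mu}(h,x) = \E_y[\lsc_\mu(h,X,y)\mid X=x]$ and $\cC_{\labsc}(h,x)$, and to the corresponding conditional regrets $\Delta\cC_{\lsc_\mu}(h,x)$ and $\Delta\cC_{\labsc}(h,x)$ obtained by subtracting the pointwise infimum over $h\in\sH$. By symmetry and completeness of $\sH$, these pointwise infima may be taken over all score vectors $(h(x,1),\dots,h(x,n+1))\in\Rset^{n+1}$, exactly as in the standard comp-sum analysis.

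The key observation is a reweighting identity. Since $\sum_{y\in\sY}p_y = 1$, the conditional surrogate risk is $\cC_{\lsc_\mu}(h,x) = \sum_{y\in\sY}p_y\,\ell_\mu(h,x,y) + (1-c)\,\ell_\mu(h,x,n+1)$. Introducing the normalized distribution $\wt p$ over $\sY\cup\{n+1\}$ with $\wt p_y = p_y/(2-c)$ for $y\in\sY$ and $\wt p_{n+1} = (1-c)/(2-c)$, this is exactly $(2-c)$ times the conditional risk of the ordinary $(n+1)$-class comp-sum loss $\ell_\mu$ under $\wt p$; taking infima gives $\Delta\cC_{\lsc_\mu}(h,x) = (2-c)\,\Delta\cC_{\ell_\mu}(h,x;\wt p)$. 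I would then check that the abstention regret scales the same way: writing $\min\{c, 1-\max_{y\in\sY} p_y\} = 1 - (2-c)\max_{y\in\sY\cup\{n+1\}}\wt p_y$ and splitting on whether $\hh(x)\in\sY$ or $\hh(x)=n+1$, a short computation yields the exact identity $\Delta\cC_{\labsc}(h,x) = (2-c)\,\Delta\cC_{\ell_{0-1}}(h,x;\wt p)$, where the right-hand side is the zero-one conditional regret of the $(n+1)$-class problem under $\wt p$. Crucially, both problems use the same decision rule---the argmax of the scores $h(x,\cdot)$, with $n+1$ read as abstention---so an arbitrary $h$ induces identical predictions in the two problems, and the Bayes-optimal decisions likewise coincide (abstain iff $\wt p_{n+1}\ge\max_{y\in\sY}\wt p_y$).

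With both regrets expressed through the same auxiliary distribution $\wt p$, the pointwise bound follows by composing with the standard-classification calibration inequality for $\ell_\mu$ over $K=n+1$ classes, $\Delta\cC_{\ell_{0-1}}(\cdot;\wt p)\le\gamma_\mu(\Delta\cC_{\ell_\mu}(\cdot;\wt p))$, where $\gamma_\mu$ is the square-root calibration function (for $\mu\in[0,2)$) or the linear one (for $\mu\ge 2$) established for comp-sum losses. Substituting the two scaling identities gives $\Delta\cC_{\labsc}(h,x) \le (2-c)\,\gamma_\mu\!\big(\tfrac{1}{2-c}\Delta\cC_{\lsc_\mu}(h,x)\big)$, and the factors of $(2-c)$ either combine under the square root (in the regimes $\mu\in[0,1)$ and $\mu\in[1,2)$) or pass through the linear map (for $\mu\ge 2$) to produce precisely $\Gamma_\mu(\Delta\cC_{\lsc_\mu}(h,x))$ in each of the three cases, matching the stated constants $(2-c)2^{\mu}(2-\mu)$, $2(2-c)(n+1)^{\mu-1}$, and $(\mu-1)(n+1)^{\mu-1}$.

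Finally, since each branch of $\Gamma_\mu$ is concave, non-decreasing, and vanishes at $0$, I would lift the pointwise inequality to the claimed expected-error bound by the general tool relating conditional-regret bounds to $\sH$-consistency bounds with minimizability gaps: taking the expectation over $X$ and applying Jensen's inequality through the concave $\Gamma_\mu$ yields $\sE_{\labsc}(h)-\sE_{\labsc}^*(\sH)+\sM_{\labsc}(\sH)\le\Gamma_\mu(\sE_{\lsc_\mu}(h)-\sE_{\lsc_\mu}^*(\sH)+\sM_{\lsc_\mu}(\sH))$. The main obstacle is the regret identity of the second step: verifying that the abstention conditional regret scales exactly by $(2-c)$ relative to the augmented zero-one regret in both the ``predict'' and ``abstain'' cases, and that the argmax decision together with its tie-breaking transfers faithfully under symmetry and completeness. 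Once that identity is in hand, the comp-sum calibration bound and the concavity argument are essentially mechanical.
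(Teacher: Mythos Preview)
Your proposal is correct and yields exactly the stated $\Gamma_\mu$ in each regime, but it proceeds differently from the paper's proof of this theorem. The paper proves Theorem~\ref{Thm:bound_comp_sum} directly: for each range of $\mu$ it constructs an explicit competitor $h_\lambda\in\sH$ (swapping and reweighting the scores at $y_{\max}$ and $\hh(x)$), lower-bounds $\Delta\sC_{\lsc_\mu,\sH}(h,x)$ by $\sC_{\lsc_\mu}(h,x)-\sC_{\lsc_\mu}(h_\lambda,x)$, and reduces this to a function of $p(x,y_{\max})-p(x,\hh(x))=\Delta\sC_{\labsc,\sH}(h,x)$ via elementary inequalities and a Taylor-type estimate; only then does it take expectations and apply Jensen. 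Your route---normalize by $2-c$ to obtain a bona fide $(n{+}1)$-class distribution $\wt p$, observe $\Delta\sC_{\labsc}=(2-c)\Delta\sC_{\ell_{0-1}}(\cdot;\wt p)$ and $\Delta\sC_{\lsc_\mu}=(2-c)\Delta\sC_{\ell_\mu}(\cdot;\wt p)$, then plug in the pointwise comp-sum calibration inequality for standard classification---is precisely the mechanism behind the paper's Theorem~\ref{Thm:bound-score} (the ``general transformation''), instantiated with the comp-sum bounds of \citet{MaoMohriZhong2023cross}. The paper even remarks after Theorem~\ref{Thm:bound-score} that such a derivation is possible. What your approach buys is modularity and brevity: one reweighting identity plus a citation replaces four separate case analyses. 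What the paper's direct proof buys is self-containment (no dependence on the exact constants in an external reference) and transparency about where the $(n{+}1)^{\mu-1}$ and $2-c$ factors originate. Both arguments pass through the same pointwise-to-global step via concavity of $\Gamma_\mu$, and the constants agree because the transformation $t\mapsto(2-c)\gamma_\mu(t/(2-c))$ sends $\sqrt{Ct}$ to $\sqrt{(2-c)Ct}$ and leaves linear maps unchanged.
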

The proof is given in Appendix~\ref{app:bound_comp_sum}. It consists
of\ignore{using the general $\sH$-consistency theorems of
  \cite{awasthi2022Hconsistency,awasthi2022multi},} analyzing the
calibration gap of the score-based abstention loss $\labsc$ and that
of $\lsc_{\mu}$, and of finding a concave function $\Gamma_\mu$
relating these two quantities. Note that our proofs and results are
distinct, original, and more complex than those in the standard
setting \citep{MaoMohriZhong2023cross}, where the standard loss
$\ell_{\mu}$ is analyzed. Establishing $\sH$-consistency bounds for
$\lsc_{\mu}$ is more intricate compared to $\ell_{\mu}$. This is
because the target loss in the score-based multi-class abstention is
inherently different from that of the standard multi-class scenario
(the multi-class zero-one loss). Thus, we need to tackle a more
complex calibration gap, integrating both the conditional probability
vector and the cost function. This complexity presents an added layer
of challenge when attempting to establish a lower bound for the
calibration gap of the surrogate loss in relation to the target loss
in the score-based abstention setting.

To understand the result, consider first the case where the
minimizability gaps are zero. As mentioned earlier, this would be the
case, for example, when $ \sH$ is the family of all measurable
functions or when $ \sH$ contains the Bayes classifier.
In that case, the theorem shows that if the estimation loss
$(\sE_{\lsc_{\mu}}( h) - \sE_{\lsc_{\mu}}^*(\sH))$ is reduced to $\e$,
then, for $\mu \in [0, 2)$, in particular for the logistic score-based
  surrogate ($\mu = 1$) and the generalized cross-entropy score-based
  surrogate ($\mu \in (1, 2)$), modulo a multiplicative constant, the
  score-based abstention estimation loss $(\sE_{\labsc}( h) -
  \sE_{\labsc}^*( \sH))$ is bounded by $\sqrt{\e}$. The bound is even
  more favorable for the mean absolute error score-based surrogate
  ($\mu = 2$) or for \compsum\ score-based surrogate $\lsc_{\mu}$ with
  $\mu \in (2, +\infty)$ since in that case, modulo a multiplicative
  constant, the score-based abstention estimation loss $(\sE_{\labsc}(
  h) - \sE_{\labsc}^*( \sH))$ is bounded by $\e$.
  
These are strong results since they are not asymptotic and are
hypothesis set-specific. In particular,
Theorem~\ref{Thm:bound_comp_sum} provides stronger guarantees than the
Bayes-consistency results of \cite{mozannar2020consistent} or
\cite{caogeneralizing} for cross-entropy abstention surrogate losses
\eqref{eq:L-mu} with the logistic loss ($\mu = 1$), generalized
cross-entropy loss ($\mu\in (1,2)$) and mean absolute error loss
($\mu=2$) adopted for $\ell$. These Bayes-consistency results can be
obtained by considering the special case of $ \sH$ being the family of
all measurable functions and taking the limit.
  
Moreover, Theorem~\ref{Thm:bound_comp_sum} also provides similar
guarantees for other types of \compsum\ score-based surrogate losses,
such as $\mu\in [0,1)$ and $\mu\in [2,\plus\infty)$, which are new
    surrogate losses for score-based multi-class abstention that, to
    the best of our knowledge, have not been previously studied in the
    literature. In particular, our $\sH$-consistency bounds can help
    theoretically compare different \compsum\ score-based surrogate
    losses and guide the design of a multi-class abstention
    algorithm. In contrast, asymptotic consistency guarantees given
    for a subset of \compsum\ score-based surrogate losses in
    \citep{mozannar2020consistent,caogeneralizing} do not provide any
    such comparative information.
    
    Recall that the minimizability gap is always upper bounded by the
    approximation error. By Lemma~\ref{lemma:calibration_gap_score} in
    Appendix~\ref{app:score}, the minimizability gap for the
    abstention loss $\sM_{\labs}(\sH)$ coincides with the
    approximation error $\sA_{\labs}(\sH)$ when the labels generated
    by the hypothesis set encompass all possible outcomes, which
    naturally holds true for typical hypothesis sets. However, for a
    surrogate loss, the minimizability gap is in general a more
    refined quantity than the approximation error and can lead to more
    favorable guarantees. More precisely, $\sH$-consistency bounds
    expressed in terms of minimizability gaps are better and more
    significant than the excess error bounds expressed in terms of
    approximation errors (See Appendix~\ref{app:better-bounds} for a
    more detailed discussion).

\subsection{Analysis of Minimizability Gaps}
\label{sec:min-gaps}

In general, the minimizability gaps do not vanish and their magnitude,
$\sM_{\lsc_{\mu}}(\sH)$, is important to take into account when
comparing \compsum\ score-based surrogate losses, in addition to the
functional form of $\Gamma_\mu$. Thus, we will specifically analyze
them below.  Note that the dependency of the multiplicative constant
on the number of classes in some of these bounds ($\mu \in (1,
+\infty)$) makes them less favorable, while for $\mu \in [0, 1]$, the
bounds do not depend on the number of classes.

\ignore{The minimizability gap $\sM_{\lsc}( \sH) = \sE^*_{\lsc}( \sH)
  - \E_x \bracket[\big]{\inf_{ h \in \sH} \E_y\bracket*{\lsc( h, X, y)
      \mid X = x}}$ measures the difference between the best-in-class
  expected loss and the expected infimum of the pointwise expected
  loss. }
In the deterministic cases where for any $x\in \sX$ and $y\in \sY$,
either $p(x, y)=0$ or $1$, the pointwise expected loss admits an
explicit form. Thus, the following result characterizes the
minimizability gaps directly in those cases.  \ignore{We will consider
  the deterministic case where for any $x\in \sX$ and $y\in \sY$,
  either $p(x, y)=0$ or $1$.  We will specifically study the
  $\Lambda$-bounded hypothesis sets $ \sH_{\Lambda}$, that is, for
  fixed $x \in \sX$, we have $\curl*{\paren*{ h(x, 1), \ldots, h(x,
      n), h(x, n+1)}\colon h \in \sH}$ = $[-\Lambda, +\Lambda]^{n+1}$
  where $\Lambda \in [0,\plus \infty]$. The family of all measurable
  functions is a special $\Lambda$-bounded hypothesis set which
  corresponds to $\Lambda=\plus \infty$. The following theorem
  characterizes the minimizability gaps in those cases.
\begin{restatable}[\textbf{Characterization of minimizability gaps}]
  {theorem}{GapUpperBoundDetermi}
\label{Thm:gap-upper-bound-determi}
Assume that $ \sH$ is $\Lambda$-bounded. Then, for the
\compsum\ score-based surrogate losses $\lsc_{\mu}$ and any
deterministic distribution, the minimizability gaps can be upper
bounded as $\sM_{\lsc_{\mu}}( \sH) \leq
\sM_{\lsc_{\mu}}(\sH)(\Lambda)$, where
\begin{align*}
 \sM_{\lsc_{\mu}}(\sH)(\Lambda)=
 \min\curl*{\sT_{\mu}\paren*{\sE^*_{\lsc_{0}}( \sH)},c}
 - \min\curl*{\sT_{\mu}\paren*{\uv \sE^*(\sH)},c}
\end{align*}
with $\uv \sE^*(\sH)=e^{-2 \Lambda}\,n\leq \sE^*_{\lsc_{0}}( \sH)$ and
$\sT_{\mu}(t)$ defined as
\begin{equation*}
\sT_{\mu}(t)
=
\begin{cases}
\frac{1}{1 - \mu} \paren*{(1 + t)^{1 - \mu} - 1} & \mu \geq 0, \mu \neq 1 \\
\log(1 + t) & \mu = 1.
\end{cases}
\end{equation*}
\end{restatable}
See Appendix~\ref{app:score} for the proof.
}

\begin{restatable}[\textbf{Characterization of minimizability gaps}]
  {theorem}{GapUpperBoundDetermi}
\label{Thm:gap-upper-bound-determi}
Assume that $ \sH$ is symmetric and complete. Then, for the \compsum\ score-based surrogate losses $\lsc_{\mu}$ and any deterministic distribution, the minimizability gaps can be characterized as follows:
\begin{align*}
\sM_{\lsc_{\mu}}( \sH) = \sE_{\lsc_{\mu}}^*(\sH) -  \begin{cases}
\frac{1}{1 - \mu} \bracket*{\bracket*{1+\paren*{1-c}^{\frac{1}{2-\mu}}}^{2 - \mu} \mspace{-20mu} - (2-c)} & \mu \notin \curl*{1,2}\\
-\log \paren*{\frac{1}{2-c}}-(1-c)\log \paren*{\frac{1-c}{2-c}}
 & \mu=1\\
1-c & \mu =2.
\end{cases}
\end{align*}
\end{restatable}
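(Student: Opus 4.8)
The plan is to peel the minimizability gap down to a single one-dimensional minimization. By definition $\sM_{\lsc_{\mu}}(\sH) = \sE_{\lsc_{\mu}}^*(\sH) - \E_x\bracket*{\inf_{h \in \sH}\E_y\bracket*{\lsc_{\mu}(h,X,y)\mid X = x}}$, so everything reduces to evaluating the conditional infimum for each $x$. In a deterministic distribution there is a single label $y_x$ with $p(x,y_x) = 1$, hence the conditional risk collapses to $\lsc_{\mu}(h,x,y_x) = \ell_{\mu}(h,x,y_x) + (1-c)\,\ell_{\mu}(h,x,n+1)$. Introducing the induced softmax weights $q_y = e^{h(x,y)}/\sum_{y'\in\sY\cup\curl*{n+1}}e^{h(x,y')}$ and using $\sum_{y'}e^{h(x,y')-h(x,y)} = 1/q_y$, each atom rewrites as $\ell_{\mu}(h,x,y) = \frac{1}{1-\mu}\paren*{q_y^{\mu-1}-1}$ for $\mu\neq 1$ and $\ell_{\mu}(h,x,y) = -\log q_y$ for $\mu = 1$.

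I would then invoke that $\sH$ is symmetric and complete. Symmetry makes the achievable score vectors at $x$ a product set $\sF(x)^{n+1}$, and completeness forces $\sF(x) = \Rset$; thus $\paren*{h(x,1),\ldots,h(x,n+1)}$ sweeps all of $\Rset^{n+1}$ and the probability vector $q$ sweeps the entire open simplex. The conditional infimum becomes an infimum over the simplex of $\frac{1}{1-\mu}\paren*{q_{y_x}^{\mu-1}-1} + (1-c)\frac{1}{1-\mu}\paren*{q_{n+1}^{\mu-1}-1}$. Since each $\ell_{\mu}$ atom is strictly decreasing in its own $q_y$ (its derivative is $-q_y^{\mu-2}$, and $-1/q_y$ when $\mu=1$) and only the two coordinates $q_{y_x}, q_{n+1}$ enter, it is optimal to push every other coordinate to zero, collapsing the problem onto the edge $q_{y_x}+q_{n+1} = 1$. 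Writing $u = q_{y_x}$, the task is to minimize $\psi(u) = \frac{1}{1-\mu}\bracket*{u^{\mu-1} + (1-c)(1-u)^{\mu-1}} - \frac{2-c}{1-\mu}$ over $u\in(0,1)$.

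The core computation is this scalar minimization. The stationarity condition $\psi'(u)=0$ simplifies to $\paren*{\frac{u}{1-u}}^{\mu-2} = 1-c$, giving $u^\star = \bracket*{1 + (1-c)^{\frac{1}{2-\mu}}}^{-1}$ and $1 - u^\star = (1-c)^{\frac{1}{2-\mu}}\,u^\star$. Substituting back, one factors out $(1+w)^{1-\mu}$ with $w = (1-c)^{\frac{1}{2-\mu}}$ and uses the identity $(1-c)\,w^{\mu-1} = w$ to collapse the residual sum into a single $1+w$; this yields the optimal value $\frac{1}{1-\mu}\curl*{\bracket*{1+(1-c)^{\frac{1}{2-\mu}}}^{2-\mu} - (2-c)}$, precisely the quantity subtracted in the statement. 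Because this value does not depend on which coordinate plays the role of $y_x$, it is constant in $x$, the outer expectation is trivial, and subtracting it from $\sE_{\lsc_{\mu}}^*(\sH)$ gives the claimed gap.

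The two omitted exponents follow the same template. For $\mu=1$ the edge objective is $-\log u - (1-c)\log(1-u)$, with stationary point $u^\star = \frac{1}{2-c}$, producing the logarithmic expression; for $\mu=2$ the atom $\ell_2 = 1 - q_y$ is linear, so the simplex minimum sits at the vertex $q_{y_x}=1$ and equals $1-c$. I expect the main obstacle to be the scalar step, specifically certifying that $u^\star$ is a minimizer rather than a spurious stationary point: the sign of $\psi''$ is controlled by that of $\mu-2$ (the prefactor $\frac{1}{1-\mu}$ competes with the curvature of $t\mapsto t^{\mu-1}$), so the second-order analysis must be carried out with care, and the nontrivial algebraic collapse of the optimal value into the stated closed form is the other place where the real work concentrates.
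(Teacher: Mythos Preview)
Your proposal is correct and follows essentially the same route as the paper: both reduce the minimizability gap to the pointwise conditional infimum, parameterize via the softmax weights $q_y$ (the paper's $s_h(x,y)$), and solve the resulting two-variable optimization to obtain the same optimal pair $\paren*{u^\star,1-u^\star}$. You spell out the monotonicity argument for collapsing to the edge and the scalar stationarity computation more explicitly than the paper, which simply asserts the minimizer ``by taking the partial derivative,'' but the structure is the same.
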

See Appendix~\ref{app:gap-upper-bound-determi} for the proof. By
l’H\^opital's rule, $\sE_{\lsc_{\mu}}^*(\sH)-\sM_{\lsc_{\mu}}( \sH)$
is continuous as a function of $\mu$ at $\mu = 1$. In light of the
equality $\lim_{x\to 0^{+}}\paren[\big]{1 + u^{\frac1x}}^x =
\max\curl*{1, u} = 1$, for $u \in [0, 1]$,
$\sE_{\lsc_{\mu}}^*(\sH)-\sM_{\lsc_{\mu}}( \sH)$ is continuous as a
function of $\mu$ at $\mu = 2$. Moreover, for any $c\in (0, 1)$,
$\sE_{\lsc_{\mu}}^*(\sH)-\sM_{\lsc_{\mu}}( \sH)$ is decreasing with
respect to $\mu$. On the other hand, since the function $\mu \mapsto
\frac{1}{1-\mu}\paren*{t^{1-\mu}-1} \1_{\mu \neq 1} + \log(t) \1_{\mu
  = 1}$ is decreasing for any $t > 0$, we obtain that $ \ell_{\mu}$ is
decreasing with respect to $\mu$, which implies that $\lsc_{\mu}$ is
decreasing and then $\sE_{\lsc_{\mu}}^*(\sH)$ is decreasing with
respect to $\mu$ as well. For a specific problem, a favorable
$\mu\in[0, \infty)$ is one that minimizes $\sM_{\lsc_{\mu}}( \sH)$,
  which, in practice, can be selected via cross-validation.

\subsection{General Transformation}

More generally, we prove the following result, which shows that an
$\sH$-consistency bound for $ \ell$ with respect to the
zero-one loss, yields immediately an $\sH$-consistency bound
for $\lsc$ with respect to $\labsc$.

\begin{restatable}{theorem}{BoundScore}
\label{Thm:bound-score}
Assume that $ \ell$ admits an $ \sH$-consistency bound with respect to
the multi-class zero-one classification loss $ \ell_{0-1}$ with a
concave function $\Gamma$, that is, for all $ h \in \sH$, the
following inequality holds: \ifdim\columnwidth=\textwidth
\begin{equation*}
\sE_{\ell_{0-1}}( h) - \sE_{\ell_{0-1}}^*( \sH) + \sM_{\ell_{0-1}}( \sH)
\leq \Gamma\paren*{\sE_{ \ell}( h)-\sE_{ \ell}^*( \sH) +\sM_{ \ell}( \sH)}.
\end{equation*}
\else\begin{multline*}
\sE_{\ell_{0-1}}( h) - \sE_{\ell_{0-1}}^*( \sH) + \sM_{\ell_{0-1}}( \sH)\\
\leq \Gamma\paren*{\sE_{ \ell}( h)-\sE_{ \ell}^*( \sH) +\sM_{ \ell}( \sH)}.
\end{multline*}
\fi Then, $\lsc$ defined by \eqref{eq:sur-score} admits an
$\sH$-consistency bound with respect to $\labsc$ with the functional
form $(2 - c)\Gamma\paren{\frac{t}{2 - c}}$, that is, for all $ h\in
\sH$, we have \ifdim\columnwidth=\textwidth
\begin{equation*}
\sE_{\labsc}( h) - \sE_{\labsc}^*( \sH) + \sM_{\labsc}( \sH)
\leq (2 - c) \Gamma\paren*{\frac{\sE_{\lsc}( h) - \sE_{\lsc}^*( \sH) +\sM_{\lsc}( \sH)}{2 - c}}.
\end{equation*}
\else
\begin{multline*}
\sE_{\labsc}( h) - \sE_{\labsc}^*( \sH) + \sM_{\labsc}( \sH)\\
\leq (2 - c) \Gamma\paren*{\frac{\sE_{\lsc}( h) - \sE_{\lsc}^*( \sH) +\sM_{\lsc}( \sH)}{2 - c}}.
\end{multline*}
\fi
\end{restatable}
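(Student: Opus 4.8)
The plan is to deduce the bound from the hypothesized bound for $\ell$ by applying the latter to a suitably reweighted auxiliary distribution, working entirely at the level of pointwise conditional errors. I would first record the standard identity that, for any loss $L$ and any distribution, $\sE_L(h) - \sE_L^*(\sH) + \sM_L(\sH) = \E_x\bracket*{\Delta\cC_L(h, x)}$, where $\cC_L(h, x) = \sum_y p(x, y) L(h, x, y)$ is the conditional error and $\Delta\cC_L(h, x) = \cC_L(h, x) - \inf_{h'\in\sH}\cC_L(h', x)$ is the conditional regret; this follows directly from the definition of $\sM_L(\sH)$ in the Preliminary. It converts both sides of the target inequality into expectations of conditional regrets, so it suffices to relate the conditional regret of $\labsc$ to that of an augmented zero-one loss and the conditional regret of $\lsc$ to that of $\ell$.

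Next, fixing $x$ and using the rewriting $\labsc(h, x, y) = \1_{\hh(x)\neq y} + (1 - c)\1_{\hh(x)\neq n + 1} + (c - 1)$ established in Section~\ref{sec:score-general}, I would compute the conditional abstention error. Summing against $p(x, \cdot)$ over $\sY$ and using $\sum_{y\in\sY} p(x, y) = 1$ gives $\cC_{\labsc}(h, x) = \sum_{y\in\sY} p(x, y)\1_{\hh(x)\neq y} + (1 - c)\1_{\hh(x)\neq n + 1} + (c - 1)$. The key observation is that this right-hand side, up to the factor $2 - c$ and the additive constant $c - 1$, is exactly the conditional zero-one error over the augmented label set $\sY\cup\curl*{n + 1}$ under the reweighted conditional distribution $q(x, y) = \frac{p(x, y)}{2 - c}$ for $y\in\sY$ and $q(x, n + 1) = \frac{1 - c}{2 - c}$; this $q$ is a genuine probability vector precisely because $c\in(0, 1)$, its entries being nonnegative and summing to $\frac{1 + (1 - c)}{2 - c} = 1$. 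Writing $\cC_L^{q}$ for conditional errors taken over the augmented set under $q$, I would thus obtain $\cC_{\labsc}(h, x) = (2 - c)\,\cC_{\ell_{0-1}}^{q}(h, x) + (c - 1)$ and, applying the same reweighting to $\lsc(h, x, y) = \ell(h, x, y) + (1 - c)\ell(h, x, n + 1)$, the analogous identity $\cC_{\lsc}(h, x) = (2 - c)\,\cC_{\ell}^{q}(h, x)$.

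Because $2 - c > 0$, these two affine relations preserve pointwise infima over $\sH$, so the additive constant cancels and the positive factor passes through to the conditional regrets: $\Delta\cC_{\labsc}(h, x) = (2 - c)\,\Delta\cC_{\ell_{0-1}}^{q}(h, x)$ and $\Delta\cC_{\lsc}(h, x) = (2 - c)\,\Delta\cC_{\ell}^{q}(h, x)$. To finish, I would invoke the hypothesis: since an $\sH$-consistency bound holds for every distribution, I apply the assumed bound for $\ell$ relative to $\ell_{0-1}$ to the auxiliary distribution with the same $\sX$-marginal as $\sD$ and conditional $q$, which in aggregated form reads $\E_x\bracket*{\Delta\cC_{\ell_{0-1}}^{q}(h, x)} \leq \Gamma\paren*{\E_x\bracket*{\Delta\cC_{\ell}^{q}(h, x)}}$. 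Multiplying by $2 - c$, substituting the two pointwise identities, and re-translating $\E_x\bracket*{\Delta\cC_L}$ back into the $\sE_L - \sE_L^* + \sM_L$ form yields the claimed bound with functional form $(2 - c)\Gamma(\,\cdot\,/(2 - c))$.

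The argument is largely bookkeeping once the auxiliary distribution is identified, so the one conceptual step I expect to be the crux is pinning down the normalization $2 - c = 1 + (1 - c)$, the total of the classification weight and the abstention weight, and checking that the conditional abstention error is exactly an affine, slope-$(2 - c)$ image of the augmented conditional zero-one error. The remaining points requiring care are verifying that $q$ is a bona fide probability vector (where $c\in(0, 1)$ is used) and confirming that the assumed bound for $\ell$ may legitimately be applied to the auxiliary distribution, i.e., that the $\sH$-consistency bound is distribution-independent.
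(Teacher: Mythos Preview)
Your proposal is correct and follows essentially the same route as the paper's proof: both hinge on the reweighted conditional distribution $q(x,y)=\frac{p(x,y)}{2-c}$ for $y\in\sY$ and $q(x,n+1)=\frac{1-c}{2-c}$ (the paper writes it as $\ov p$) and the identities $\Delta\cC_{\labsc}(h,x)=(2-c)\Delta\cC^{q}_{\ell_{0-1}}(h,x)$ and $\Delta\cC_{\lsc}(h,x)=(2-c)\Delta\cC^{q}_{\ell}(h,x)$. The only difference is cosmetic: the paper invokes the assumed bound pointwise in $x$ and then applies Jensen via concavity of $\Gamma$, whereas you apply the aggregated bound directly to the auxiliary distribution and multiply through by $2-c$; your variant therefore never actually uses concavity, which is a mild bonus but not a different idea.
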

The proof is given in Appendix~\ref{app:bound-score}.
\cite{awasthi2022multi} recently presented a series of results
providing $\sH$-consistency bounds for common surrogate losses in the
standard multi-class classification, including max losses such as
those of \citet{crammer2001algorithmic}, sum losses such as those of
\citet{weston1998multi} and constrained losses such as the loss
functions adopted by \citet{lee2004multicategory}. Thus, plugging in
any of those $\sH$-consistency bounds in Theorem~\ref{Thm:bound-score}
yields immediately a new $ \sH$-consistency bound for the
corresponding score-based abstention surrogate losses.

\section{Two-stage score-based formulation}
\label{sec:two-stage}

In the single-stage scenario discussed in Section~\ref{sec:score}, the
learner simultaneously learns when to abstain and how to make
predictions otherwise. However, in practice often there is already a
predictor available and retraining can be very costly. A two-stage
solution is thus much more relevant for those critical applications,
where the learner only learns when to abstain in the second stage
based on the predictor trained in the first stage. With the two stage
solution, we can improve the performance of a large pre-trained model
by teaching it the option of abstaining without having to retrain the
model. In this section, we analyze a two-stage algorithmic scheme, for
which we propose surrogate losses that we prove to benefit from
$\sH$-consistency bounds.

Given a hypothesis set $\sH$ of functions mapping from $\sX \times
(\sY \cup \curl*{n + 1})$ to $\Rset$, it can be decomposed into $ \sH=
\sH_{\sY}\times \sH_{n+1}$, where $\sH_{\sY}$ denotes the hypothesis
set spanned by the first $n$ scores corresponding to the labels, and
$\sH_{n+1}$ represents the hypothesis set spanned by the last score
corresponding to the additional category.\ignore{The corresponding
  hypotheses are denoted by $h_{\sY} \in \sH_{\sY}$ and $h_{n+1} \in
  \sH_{n+1}$, respectively.}  We consider the following two-stage
algorithmic scheme: in the first stage, we learn a hypothesis $h_{\sY}
\in \sH_{\sY}$ by optimizing a surrogate loss $\ell$ for standard
multi-class classification; in the second stage, we fix the $h_{\sY}$
learned in the first stage and then learn a hypothesis $h_{n+1} \in
\sH_{n+1}$ by optimizing a surrogate loss function $\ell_{h_{\sY}}$
defined for any $h_{n+1} \in \sH_{n+1}$ and $(x, y) \in \sX \times
\sY$ by
\begin{align}
\label{eq:ell-Phi-h}
\ell_{h_{\sY}}\paren*{h_{n+1}, x, y}  
= \1_{\hh_{\sY}(x) \neq y} \Phi\paren*{h_{n+1}(x) - \max_{y\in \sY}h_{\sY}(x, y)} + c \Phi\paren*{\max_{y\in \sY}h_{\sY}(x, y) - h_{n+1}(x)},
\end{align}
where $\Phi$ is a decreasing function.  The learned hypothesis $h \in
\sH$ corresponding to those two stages can be expressed as $h =
(h_{\sY}, h_{n+1})$.  We note that the first stage consists of the
familiar task of finding a predictor using a standard surrogate loss
such as the logistic loss $\ell(h,x, y)=\log\paren*{\sum_{y'\in
    \sY}e^{h(x, y')-h(x, y)}}$ (or cross-entropy combined with the
softmax). Recall that the learner abstains from making a prediction
for $x$ and incurs a cost $c$ when $ h_{n+1}(x) \geq \max_{y \in \sY}
h_{\sY}(x, y)$.  In the second stage, the first term of
\eqref{eq:ell-Phi-h} encourages abstention for an input instance whose
prediction made by the pre-trained predictor $h_{\sY}$ is incorrect,
while the second term penalizes abstention according to the cost
$c$. The function $\Phi$ can be chosen as any margin-based loss
function in binary classification, including the exponential loss or
the logistic loss.

Let $\ell_{0-1}^{\rm{binary}}$ be the binary zero-one classification loss. Then, the two-stage surrogate losses benefit from the $\sH$-consistency bounds shown in Theorem~\ref{Thm:bound-general-two-step}. For a fixed parameter $\tau$, we define the $\tau$-translated hypothesis set of $\sH_{n+1}$ by $\sH_{n+1}^{\tau} =\curl*{ h_{n+1} - \tau : h_{n+1} \in \sH_{n+1}}$.
\begin{restatable}[\textbf{$\sH$-consistency bounds for
      two-stage surrogates}]{theorem}{BoundGenralTwoStep}
\label{Thm:bound-general-two-step}
Given a hypothesis set $\sH=\sH_{\sY}\times \sH_{n+1}$. Assume that
$\ell$ admits an $\sH_{\sY}$-consistency bound with respect to the
multi-class zero-one classification loss $ \ell_{0-1}$ and that $\Phi$
admits an $\sH_{n+1}^{\tau}$-consistency bound with respect to the
binary zero-one classification loss $\ell_{0-1}^{\rm{binary}}$ for any
$\tau \in \Rset$.  Thus, there are non-decreasing concave functions
$\Gamma_1$ and $\Gamma_2$ such that, for all $h_{\sY}\in \sH_{\sY}$,
$h_{n+1}^{\tau} \in \sH_{n+1}^{\tau}$ and $\tau \in \Rset$, we have
\begin{align*}
 \sE_{\ell_{0-1}}(h_{\sY}) - \sE_{\ell_{0-1}}^*(\sH_{\sY}) + \sM_{\ell_{0-1}}(\sH_{\sY}) &\leq \Gamma_1\paren*{\sE_{\ell}(h_{\sY})-\sE_{\ell}^*(\sH_{\sY}) +\sM_{\ell}(\sH_{\sY})}\\
 \sE_{\ell_{0-1}^{\rm{binary}}}(h_{n+1}^{\tau}) - \sE_{\ell_{0-1}^{\rm{binary}}}^*(\sH_{n+1}^{\tau}) + \sM_{\ell_{0-1}^{\rm{binary}}}(\sH_{n+1}^{\tau}) &\leq \Gamma_2\paren*{\sE_{\Phi}(h_{n+1}^{\tau})-\sE_{\Phi}^*(\sH_{n+1}^{\tau}) +\sM_{\Phi}(\sH_{n+1}^{\tau})}.
\end{align*}
Then, the following holds for all $ h=(h_{\sY},h_{n+1})\in \sH$:
\begin{align*}
\sE_{\labs}(h) - \sE_{\labs}^*(\sH) + \sM_{\labs}(\sH)
& \leq \Gamma_1\paren*{\sE_{\ell}(h_{\sY})-\sE_{\ell}^*(\sH_{\sY}) +\sM_{\ell}(\sH_{\sY})}\\
& \quad + (1+c)\Gamma_2\paren[\bigg]{\frac{\sE_{\ell_{h_{\sY}}}(h_{n+1})-\sE_{\ell_{h_{\sY}}}^*(\sH_{n+1}) +\sM_{\ell_{h_{\sY}}}(\sH_{n+1})}{c}},
\end{align*}
where the
constant factors $(1 + c)$ and $\frac{1}{c}$ can be removed 
when $\Gamma_2$ is linear.
\end{restatable}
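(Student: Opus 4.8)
\emph{Proof plan.} The plan is to pass to conditional risks and split the abstention calibration gap into a first-stage (classification) contribution and a second-stage (rejection) contribution. Writing $\cC_{\labs}(h,x) = \E_y\bracket*{\labs(h,x,y)\mid X=x}$ for the conditional $\labs$-risk and recalling that $\sE_{\labs}^*(\sH) - \sM_{\labs}(\sH) = \E_x\bracket*{\inf_{h'\in\sH}\cC_{\labs}(h',x)}$, the left-hand side equals the expected conditional calibration gap. Since the learner predicts $\hh_{\sY}(x)$ whenever it does not abstain, a direct computation gives $\cC_{\labs}(h,x) = \paren*{1 - p(x,\hh_{\sY}(x))}\1_{r(x)<0} + c\,\1_{r(x)\ge 0}$, where $r(x) = h_{n+1}(x) - \max_{y\in\sY}h_{\sY}(x,y)$. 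Introducing the best-given-$h_{\sY}$ conditional risk $\cC_{\labs}^{\star}(h_{\sY},x) = \inf_{h_{n+1}\in\sH_{n+1}}\cC_{\labs}((h_{\sY},h_{n+1}),x)$, I would decompose
\begin{align*}
\sE_{\labs}(h) - \sE_{\labs}^*(\sH) + \sM_{\labs}(\sH)
&= \E_x\bracket*{\cC_{\labs}(h,x) - \cC_{\labs}^{\star}(h_{\sY},x)}\\
&\quad + \E_x\bracket*{\cC_{\labs}^{\star}(h_{\sY},x) - \inf_{h'\in\sH}\cC_{\labs}(h',x)},
\end{align*}
and bound the two non-negative terms separately.

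For the second (classification) term, I would use the richness of $\sH_{n+1}$ to realize both the abstaining and non-abstaining options pointwise, which gives $\cC_{\labs}^{\star}(h_{\sY},x) = \min\curl*{1 - p(x,\hh_{\sY}(x)),\,c}$ and $\inf_{h'\in\sH}\cC_{\labs}(h',x) = \min\curl*{1 - q^*(x),\,c}$, where $q^*(x) = \sup_{h_{\sY}'\in\sH_{\sY}}p(x,\hh_{\sY}'(x))$ is the best-in-class label probability. The $1$-Lipschitzness and monotonicity of $t\mapsto\min\curl*{t,c}$ then yield $\cC_{\labs}^{\star}(h_{\sY},x) - \inf_{h'\in\sH}\cC_{\labs}(h',x) \le q^*(x) - p(x,\hh_{\sY}(x))$, whose right-hand side is exactly the conditional calibration gap of the multi-class zero-one loss at $h_{\sY}$ over $\sH_{\sY}$. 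Taking expectations and invoking the assumed $\sH_{\sY}$-consistency bound for $\ell$ produces the $\Gamma_1$ term.

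For the first (rejection) term, I would observe that, at each $x$ with threshold $\tau(x) = \max_{y\in\sY}h_{\sY}(x,y)$, both $\cC_{\labs}(h,x) - \cC_{\labs}^{\star}(h_{\sY},x)$ and the second-stage surrogate gap $S(x) = \cC_{\ell_{h_{\sY}}}(h_{n+1},x) - \inf_{h_{n+1}'}\cC_{\ell_{h_{\sY}}}(h_{n+1}',x)$ are, after factoring out the total weight $w(x) = 1 - p(x,\hh_{\sY}(x)) + c \in [c,\,1+c]$, precisely the conditional calibration gaps of the binary zero-one loss and of $\Phi$ for the score $r(x)=h_{n+1}(x)-\tau(x)$ ranging over $\sH_{n+1}^{\tau(x)}$, at conditional probability $\eta(x) = \paren*{1 - p(x,\hh_{\sY}(x))}/w(x)$; this is an identity obtained through the bijection $h_{n+1}\leftrightarrow h_{n+1}-\tau(x)$ with matching tie-breaking at $r(x)=0$. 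The assumed $\sH_{n+1}^{\tau}$-consistency bound for $\Phi$, being valid for every $\tau\in\Rset$ and every distribution, furnishes the pointwise inequality controlling the former by $\Gamma_2$ of the latter, giving $\cC_{\labs}(h,x) - \cC_{\labs}^{\star}(h_{\sY},x) \le w(x)\,\Gamma_2\!\paren*{S(x)/w(x)}$. To clean up the weight I would use that $t\mapsto t\,\Gamma_2\!\paren*{S(x)/t}$, the perspective of the concave function $\Gamma_2$ with $\Gamma_2(0)=0$, is non-decreasing, together with the monotonicity of $\Gamma_2$ and $c\le w(x)\le 1+c$, to obtain
\begin{equation*}
w(x)\,\Gamma_2\!\paren*{\tfrac{S(x)}{w(x)}} \;\le\; (1+c)\,\Gamma_2\!\paren*{\tfrac{S(x)}{c}};
\end{equation*}
when $\Gamma_2$ is linear the weight cancels identically, removing the $(1+c)$ and $\tfrac1c$ factors. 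Taking expectations, Jensen's inequality (concavity of $\Gamma_2$) turns $\E_x\bracket*{\Gamma_2(S(x)/c)}$ into $\Gamma_2\!\paren*{\E_x\bracket*{S(x)}/c}$, and $\E_x\bracket*{S(x)} = \sE_{\ell_{h_{\sY}}}(h_{n+1}) - \sE_{\ell_{h_{\sY}}}^*(\sH_{n+1}) + \sM_{\ell_{h_{\sY}}}(\sH_{n+1})$. Combining the two bounds gives the claimed inequality.

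The step I expect to be the main obstacle is the second stage: one must recognize the fixed-$h_{\sY}$ abstention problem as a cost-weighted binary classification with an input-dependent threshold $\tau(x)=\max_{y\in\sY}h_{\sY}(x,y)$ and input-dependent total weight $w(x)$ — which is exactly why the assumption is phrased for the translated family $\sH_{n+1}^{\tau}$ uniformly in $\tau$ — and then push the varying weight $w(x)\in[c,1+c]$ through the concave $\Gamma_2$ while keeping the constants clean. The perspective-function monotonicity argument is the crux that produces the factors $(1+c)$ and $\tfrac1c$ and explains their disappearance in the linear case.
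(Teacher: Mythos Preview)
Your proposal is correct and follows essentially the same approach as the paper's proof: the same add-and-subtract decomposition at $\inf_{h_{n+1}}\cC_{\labs}(h,x)$, the same reduction of the classification piece via $\min\{\cdot,c\}$ to the zero-one calibration gap, the same recognition of the rejection piece as a cost-weighted binary problem with weight $w(x)\in[c,1+c]$ and threshold $\tau(x)=\max_{y}h_{\sY}(x,y)$, and the same concluding Jensen step. The only cosmetic difference is that you invoke the perspective-function monotonicity of $t\mapsto t\,\Gamma_2(S/t)$ (which needs $\Gamma_2(0)=0$), whereas the paper bounds $w(x)\Gamma_2(S/w(x))\le (1+c)\Gamma_2(S/w(x))\le (1+c)\Gamma_2(S/c)$ directly from $w(x)\le 1+c$, $\Gamma_2\ge 0$, and monotonicity; both routes yield the same constants.
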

The proof is given in Appendix~\ref{app:bound-general-two-step}. The
assumptions in Theorem~\ref{Thm:bound-general-two-step} are mild and
hold for common hypothesis sets such as linear models and neural
networks with common surrogate losses in the binary and multi-class
classification, as shown by
\citep{awasthi2022Hconsistency,awasthi2022multi}. Recall that the
minimizability gaps vanish when $\sH_{\sY}$ and $\sH_{n+1}$ are the
family of all measurable functions or when $\sH_{\sY}$ and $\sH_{n+1}$
contain the Bayes predictors.  In their absence, the theorem shows
that if the estimation loss
$(\sE_{\ell}(h_{\sY})-\sE_{\ell}^*(\sH_{\sY}))$ is reduced to $\e_1$
and the estimation loss
$(\sE_{\ell_{h_{\sY}}}(h_{n+1})-\sE_{\ell_{h_{\sY}}}^*(\sH_{n+1}))$ to
$\e_2$, then, modulo constant factors, the score-based abstention
estimation loss $(\sE_{\labs}(h) - \sE_{\labs}^*(\sH))$ is bounded by
$\Gamma_1(\e_1) + \Gamma_2(\e_2)$.
Thus, this gives a strong guarantee for the surrogate losses described
in this two-stage setting.

\section{Realizable $\sH$-consistency and benefits of two-stage surrogate losses}
\label{sec:realizable}

\citet{pmlr-v206-mozannar23a} recently showed that cross-entropy
score-based surrogate losses are not realizable $\sH$-consistent, as
defined by \citet{long2013consistency,zhang2020bayes}, in relation to
abstention loss. Instead, the authors proposed a novel surrogate
loss that is proved to be realizable $\sH$-consistent when $\sH$ is
\emph{closed under scaling}, although its Bayes-consistency remains
unclear. Devising a surrogate loss that exhibits both
Bayes-consistency and realizable $\sH$-consistency remains an open
problem. A hypothesis set $\sH$ is said to be \emph{closed under
scaling} if, for any hypothesis $h$ belonging to $\sH$, the scaled
hypothesis $\alpha h$ also belongs to $\sH$ for all $\alpha \in \Rset$.

We prove in Theorem~\ref{Thm:bound-general-two-step-realizable} of
Appendix~\ref{app:bound-general-two-step-realizable}, that for any
realizable distribution, when both the first-stage surrogate
estimation loss $\sE_{\ell}(h_{\sY}) - \sE_{\ell}^*(\sH_{\sY})$ and
the second-stage surrogate estimation loss
$\sE_{\ell_{h_{\sY}}}(h_{n+1}) - \sE_{\ell_{h_{\sY}}}^*(\sH_{n+1})$
converge to zero, the abstention estimation loss $\sE_{\labs}(h) -
\sE_{\labs}^*(\sH)$ also approaches zero. This implies that the
two-stage score-based surrogate loss is realizable $\sH$-consistent
with respect to $\labsc$, which provides a significant advantage over
the single-stage cross-entropy score-based surrogate loss. It is
important to note that Theorem~\ref{Thm:bound-general-two-step} shows
that the two-stage formulation is also Bayes-consistent. This
addresses the open problem in \citep{pmlr-v206-mozannar23a} and
highlights the benefits of the two-stage formulation. In the following
section, our empirical results further demonstrate that the two-stage
score-based surrogate loss outperforms the state-of-the-art
cross-entropy score-based surrogate loss.

\section{Finite sample guarantees}
\label{sec:finite-sample}

Our $\sH$-consistency bounds enable the direct derivation of
finite-sample estimation bounds for a surrogate loss minimizer. These
are expressed in terms of the Rademacher complexity of the hypothesis
set $\sH$, the loss function, and the minimizability gaps. Here, we
provide a simple illustration based on
Theorem~\ref{Thm:bound_comp_sum}.

Let $\h h_S$ be the empirical minimizer of the surrogate loss
$\lsc_{\mu}$: $ \h h_S = \argmin_{h \in \sH} \frac{1}{m} \sum_{i =
  1}^m \lsc_{\mu}(h, x_i, y_i)$, for an i.i.d sample $S =
\paren*{(x_1, y_1), \ldots, (x_m, y_m)}$ of size $m$. Let
$\Rad_m^{\lsc_{\mu}}(\sH)$ be the Rademacher complexity of the set
$\sH_{\lsc_{\mu}} = \curl*{(x, y) \mapsto \lsc_{\mu}(h, x, y) \colon h
  \in \sH}$ and $B_{\lsc_{\mu}}$ an upper bound on the surrogate loss
$\lsc_{\mu}$. By using the standard Rademacher complexity bounds
\citep{MohriRostamizadehTalwalkar2018}, for any $\delta>0$, with
probability at least $1 - \delta$, the following holds for all $h \in
\sH$:
\[\abs*{\sE_{\lsc_{\mu}}(h) - \h \sE_{\lsc_{\mu}, S}(h)}
\leq 2 \Rad_m^{\lsc_{\mu}}(\sH) +
B_{\lsc_{\mu}} \sqrt{\tfrac{\log (2/\delta)}{2m}}.\]
Fix $\e > 0$. By the definition of the infimum, there exists $h^* \in
\sH$ such that $\sE_{\lsc_{\mu}}(h^*) \leq
\sE_{\lsc_{\mu}}^*(\sH) + \e$. By definition of
$\h h_S$, we have
\begin{align*}
\sE_{\lsc_{\mu}}(\h h_S) - \sE_{\lsc_{\mu}}^*(\sH)
& = \sE_{\lsc_{\mu}}(\h h_S) - \h\sE_{\lsc_{\mu}, S}(\h h_S) + \h\sE_{\lsc_{\mu}, S}(\h h_S) - \sE_{\lsc_{\mu}}^*(\sH)\\
& \leq \sE_{\lsc_{\mu}}(\h h_S) - \h\sE_{\lsc_{\mu}, S}(\h h_S) + \h\sE_{\lsc_{\mu}, S}(h^*) - \sE_{\lsc_{\mu}}^*(\sH)\\
& \leq \sE_{\lsc_{\mu}}(\h h_S) - \h\sE_{\lsc_{\mu}, S}(\h h_S) + \h\sE_{\lsc_{\mu}, S}(h^*) - \sE_{\lsc_{\mu}}^*(h^*) + \e\\
& \leq
  2 \bracket*{2 \Rad_m^{\lsc_{\mu}}(\sH) +
B_{\lsc_{\mu}} \sqrt{\tfrac{\log (2/\delta)}{2m}}} + \e.    
\end{align*}
Since the inequality holds for all $\e > 0$, it implies:
\[
\sE_{\lsc_{\mu}}(\h h_S) - \sE_{\lsc_{\mu}}^*(\sH)
\leq 
4 \Rad_m^{\lsc_{\mu}}(\sH) +
2 B_{\lsc_{\mu}} \sqrt{\tfrac{\log (2/\delta)}{2m}}.
\]
Plugging in this inequality in the bound of
Theorem~\ref{Thm:bound_comp_sum}, we obtain that for any $\delta > 0$,
with probability at least $1 - \delta$ over the draw of an i.i.d
sample $S$ of size $m$, the following finite sample guarantee holds
for $\h h_S$:
\begin{align*}
\sE_{\labsc}(\h h_S) - \sE_{\labsc}^*( \sH) \leq \Gamma_{\mu}
  \paren[\Big]{4 \Rad_m^{\lsc_{\mu}}(\sH)
  +
2 B_{\lsc_{\mu}} \textstyle \sqrt{\tfrac{\log \frac{2}{\delta}}{2m}}
    + \sM_{\lsc_{\mu}}(\sH)} - \sM_{\labsc}( \sH).    
\end{align*}
To our knowledge, these are the first abstention estimation loss
guarantees for empirical minimizers of a cross-entropy score-based
surrogate loss. Our comments about the properties of $\Gamma_{\mu}$
below Theorem~\ref{Thm:bound_comp_sum}, in particular its functional
form or its dependency on the number of classes $n$, similarly apply
here. Similar finite sample guarantees can also be derived based on
Theorems~\ref{Thm:bound-score} and \ref{Thm:bound-general-two-step}.

As commented before Section~\ref{sec:min-gaps}, for a surrogate loss,
the minimizability gap is in general a more refined quantity than the
approximation error, while for the abstention loss, these two
quantities coincide for typical hypothesis sets (See
Appendix~\ref{app:better-bounds}). Thus, our bound can be rewritten as
follows for typical hypothesis sets:
\begin{align*}
\sE_{\labsc}(\h h_S) - \sE_{\labsc}^*( \sH_{\rm{all}}) \leq \Gamma_{\mu}
  \paren*{4 \Rad_m^{\lsc_{\mu}}(\sH) +
    2 B_{\lsc_{\mu}}
    \sqrt{\frac{\log \frac{2}{\delta}}{2m}}
    + \sM_{\lsc_{\mu}}(\sH)}.
\end{align*}
Our guarantee is thus more favorable and more relevant than a similar
finite sample guarantee where $\sM_{\lsc_{\mu}}( \sH)$ is replaced
with $\sA_{\lsc_{\mu}}(\sH)$, which could be derived from an excess
error bound.

\section{Experiments}
\label{sec:experiments}

In this section, we report the results of experiments comparing the
single-stage and two-stage score-based abstention surrogate losses,
for three widely used datasets CIFAR-10, CIFAR-100
\citep{Krizhevsky09learningmultiple} and SVHN \citep{Netzer2011}.

\paragraph{Experimental Settings} 

As with \citep{mozannar2020consistent,caogeneralizing}, we use ResNet
\citep{he2016deep} and WideResNet (WRN) \citep{zagoruyko2016wide} with
ReLU activations. Here, ResNet-$n$ denotes a residual network with $n$
convolutional layers and WRN-$n$-$k$ denotes a residual network with
$n$ convolutional layers and a widening factor $k$. We trained
ResNet-$34$ for CIFAR-10 and SVHN, and WRN-$28$-$10$ for CIFAR-100.
We applied standard data augmentations, 4-pixel padding with $32
\times 32$ random crops and random horizontal flips for CIFAR-10 and
CIFAR-100. We used Stochastic Gradient Descent (SGD) with Nesterov
momentum \citep{nesterov1983method} and set batch size $1\mathord,024$
and weight decay $1\times 10^{-4}$ in the training. We trained for
$200$ epochs using the cosine decay learning rate schedule
\citep{loshchilov2016sgdr} with the initial learning rate of $0.1$.

For each dataset, the cost value $c$ was selected to be close to the
best-in-class zero-one classification loss, which are $\curl*{0.05,
  0.15, 0.03}$ for CIFAR-10, CIFAR-100 and SVHN respectively, since a
too small value leads to abstention on almost all points and a too
large one leads to almost no abstention. Other neighboring values for
$c$ lead to similar results.

The abstention surrogate loss proposed in
\citep{mozannar2020consistent} corresponds to the special case of
\compsum\ score-based surrogate losses $ \sfL_{\mu}$ with $\mu = 1$,
and meanwhile the abstention surrogate loss adopted in
\citep{caogeneralizing} corresponds to the special case of
\compsum\ score-based surrogate losses $ \sfL_{\mu}$ with $\mu =
1.7$. Note that the simple confidence-based approach by thresholding
estimators of conditional probability typically does not perform as
well as these state-of-the-art surrogate losses
\citep{caogeneralizing}. For our two-stage score-based abstention
surrogate loss, we adopted the logistic loss in the first stage and
the exponential loss $\Phi(t) = \exp(-t)$ in the second stage.

\paragraph{Evaluation} 

We evaluated all the models based on the abstention loss $\labsc$, and
reported the mean and standard deviation over three trials.

\paragraph{Results}

\begin{table*}[t]
\caption{Abstention Loss for Models Obtained with Different Surrogate
  Losses; Mean $\pm$ Standard Deviation\ignore{ over three runs} for
  Both Two-Stage Score-Based Abstention Surrogate Loss and The
  State-Of-The-Art Cross-Entropy Score-Based Surrogate Losses in
  \citep{mozannar2020consistent} ($\mu = 1.0$) and
  \citep{caogeneralizing} ($\mu=1.7$).}
  \vskip -0.1in
    \label{tab:comparison}
\begin{center}
    \begin{tabular}{@{\hspace{0pt}}lll@{\hspace{0pt}}}
      METHOD & DATASET & ABSTENTION LOSS \\
    \midrule
    Cross-entropy score-based ($\mu=1.0$) & \multirow{3}{*}{CIFAR-10} & 4.48\% $\pm$ 0.10\% \\
    cross-entropy score-based ($\mu=1.7$)  & & 3.62\% $\pm$ 0.07\%  \\
    \textbf{Two-stage score-based}  & & \textbf{3.22\% \!$\pm$ 0.04\%}    \\
    \midrule
    Cross-entropy score-based ($\mu=1.0$) & \multirow{3}{*}{CIFAR-100} & 10.40\% $\pm$ 0.10\% \\
    Cross-entropy score-based ($\mu=1.7$)  & & 14.99\% $\pm$ 0.01\%\\
    \textbf{Two-stage score-based} & & \textbf{\phantom{0}9.54\% \!$\pm$ 0.07\%}   \\
    \midrule
    Cross-entropy score-based ($\mu=1.0$) & \multirow{3}{*}{SVHN} & 1.61\% $\pm$ 0.06\% \\
    Cross-entropy score-based ($\mu=1.7$) & & 2.16\% $\pm$ 0.04\%\\
    \textbf{Two-stage score-based}  & & \textbf{0.93\% \!$\pm$ 0.02\%}  \\
    \end{tabular}
\end{center}
    \vskip -0.3in
\end{table*}

Table~\ref{tab:comparison} shows that the two-stage score-based
surrogate losses consistently outperform the cross-entropy score-based
surrogate losses used in the state-of-the-art algorithms
\citep{mozannar2020consistent,caogeneralizing} for all the
datasets. Table~\ref{tab:comparison} also shows the relative
performance of the cross-entropy surrogate \eqref{eq:L-mu} with
$\ell_{\mu}$ adopted as the generalized cross-entropy loss ($\mu=1.7$)
and that with $\ell_{\mu}$ adopted as the logistic loss ($\mu=1.0$)
varies by the datasets.

As show in Section~\ref{sec:two-stage} and
Section~\ref{sec:realizable}, the two-stage surrogate losses benefit
from the guarantees of both realizable $\sH$-consistency and
Bayes-consistency while the cross-entropy surrogate loss does not
exhibit realizable $\sH$-consistency, as shown by
\citet{pmlr-v206-mozannar23a}. This explains the superior performance
of two-stage surrogate losses over the cross-entropy surrogate
loss. It is worth noting that the hypothesis set we used for each
dataset is sufficiently rich, and the experimental setup closely
resembles a realizable scenario.

As our theoretical analysis (Theorem~\ref{Thm:bound_comp_sum} and
Theorem~\ref{Thm:gap-upper-bound-determi}) suggests, the relative
performance variation between the cross-entropy surrogate loss with
$\mu = 1.0$ used in \citep{mozannar2020consistent} and the
cross-entropy surrogate loss with $\mu = 1.7$ used in
\citep{caogeneralizing} can be explained by the functional forms of
their $\sH$-consistency bounds and the magnitude of their
minimizability gaps. Specifically, the dependency of the
multiplicative constant on the number of classes in $\sH$-consistency
bounds (Theorem~\ref{Thm:bound_comp_sum}) for the cross-entropy
surrogate loss with $\mu = 1.7$ makes it less favorable when dealing
with a large number of classes, such as in the case of CIFAR-100.
This suggests that the recent observation made in
\citep{caogeneralizing} that the cross-entropy surrogate with $\mu =
1.7$ outperforms the one with $\mu = 1.0$ does not apply to the
scenario where the evaluation involves datasets like CIFAR-100.  For a
more comprehensive discussion of our experimental results, please
refer to Appendix~\ref{app:experimemts}.
\ignore{ This agrees with our theoretical analysis based on
  $\sH$-consistency bounds in Theorem~\ref{Thm:bound_comp_sum} and
  Theorem~\ref{Thm:gap-upper-bound-determi}, since both losses have
  the same square-root functional form while on CIFAR-10, the
  magnitude of the minimizability gap decreases with $\mu$ in light of
  the fact that $\sE_{\lsc_{\mu}}^*(\sH)$ is close for both losses,
  and since on SVHN and CIFAR-100, the dependency of the
  multiplicative constant on the number of classes appears for
  $\mu=1.7$, which makes it less favorable, particularly clear when
  $n$ is large (CIFAR-100).
}

\section{Conclusion}

Our comprehensive study of score-based multi-class abstention
introduced novel surrogate loss families with strong hypothesis
set-specific and non-asymptotic theoretical guarantees. Empirical
results demonstrate the practical advantage of these surrogate losses
and their derived algorithms. This work establishes a powerful
framework for designing new, more reliable abstention-aware algorithms
applicable across diverse domains.

\ignore{
We presented a detailed study of score-based multi-class
abstention. We introduced new families of surrogate losses within the
framework and provided strong theoretical guarantees for them, which
are specific to the hypothesis set and non-asymptotic. Our empirical
results further illustrate the practical significance of these
surrogate losses and the new algorithms based on them. We believe that
our analysis can be leveraged in a wide range of scenarios to design
new algorithms.
}


\bibliography{mabsc}

\begin{thebibliography}{110}
\providecommand{\natexlab}[1]{#1}
\providecommand{\url}[1]{\texttt{#1}}
\expandafter\ifx\csname urlstyle\endcsname\relax
  \providecommand{\doi}[1]{doi: #1}\else
  \providecommand{\doi}{doi: \begingroup \urlstyle{rm}\Url}\fi

\bibitem[Amin et~al.(2021)Amin, DeSalvo, and
  Rostamizadeh]{AminDeSalvoRostamizadeh2021}
Kareem Amin, Giulia DeSalvo, and Afshin Rostamizadeh.
\newblock Learning with labeling induced abstentions.
\newblock In \emph{Advances in Neural Information Processing}, pages
  12576--12586, 2021.

\bibitem[Awasthi et~al.(2021{\natexlab{a}})Awasthi, Frank, Mao, Mohri, and
  Zhong]{awasthi2021calibration}
Pranjal Awasthi, Natalie Frank, Anqi Mao, Mehryar Mohri, and Yutao Zhong.
\newblock Calibration and consistency of adversarial surrogate losses.
\newblock In \emph{Advances in Neural Information Processing Systems},
  2021{\natexlab{a}}.

\bibitem[Awasthi et~al.(2021{\natexlab{b}})Awasthi, Frank, and
  Mohri]{awasthi2021existence}
Pranjal Awasthi, Natalie Frank, and Mehryar Mohri.
\newblock On the existence of the adversarial bayes classifier.
\newblock In \emph{Advances in Neural Information Processing Systems}, pages
  2978--2990, 2021{\natexlab{b}}.

\bibitem[Awasthi et~al.(2021{\natexlab{c}})Awasthi, Mao, Mohri, and
  Zhong]{awasthi2021finer}
Pranjal Awasthi, Anqi Mao, Mehryar Mohri, and Yutao Zhong.
\newblock A finer calibration analysis for adversarial robustness.
\newblock \emph{arXiv preprint arXiv:2105.01550}, 2021{\natexlab{c}}.

\bibitem[Awasthi et~al.(2022{\natexlab{a}})Awasthi, Mao, Mohri, and
  Zhong]{awasthi2022Hconsistency}
Pranjal Awasthi, Anqi Mao, Mehryar Mohri, and Yutao Zhong.
\newblock {${\mathscr H}$}-consistency bounds for surrogate loss minimizers.
\newblock In \emph{International Conference on Machine Learning},
  2022{\natexlab{a}}.

\bibitem[Awasthi et~al.(2022{\natexlab{b}})Awasthi, Mao, Mohri, and
  Zhong]{awasthi2022multi}
Pranjal Awasthi, Anqi Mao, Mehryar Mohri, and Yutao Zhong.
\newblock Multi-class {${\mathscr H}$}-consistency bounds.
\newblock In \emph{Advances in neural information processing systems},
  2022{\natexlab{b}}.

\bibitem[Awasthi et~al.(2023)Awasthi, Mao, Mohri, and
  Zhong]{AwasthiMaoMohriZhong2023theoretically}
Pranjal Awasthi, Anqi Mao, Mehryar Mohri, and Yutao Zhong.
\newblock Theoretically grounded loss functions and algorithms for adversarial
  robustness.
\newblock In \emph{International Conference on Artificial Intelligence and
  Statistics}, pages 10077--10094, 2023.

\bibitem[Awasthi et~al.(2024)Awasthi, Mao, Mohri, and Zhong]{awasthi2024dc}
Pranjal Awasthi, Anqi Mao, Mehryar Mohri, and Yutao Zhong.
\newblock {DC}-programming for neural network optimizations.
\newblock \emph{Journal of Global Optimization}, pages 1--17, 2024.

\bibitem[Bansal et~al.(2021)Bansal, Nushi, Kamar, Horvitz, and
  Weld]{bansal2021most}
Gagan Bansal, Besmira Nushi, Ece Kamar, Eric Horvitz, and Daniel~S Weld.
\newblock Is the most accurate ai the best teammate? optimizing ai for
  teamwork.
\newblock In \emph{Proceedings of the AAAI Conference on Artificial
  Intelligence}, pages 11405--11414, 2021.

\bibitem[Bartlett and Wegkamp(2008)]{bartlett2008classification}
Peter~L Bartlett and Marten~H Wegkamp.
\newblock Classification with a reject option using a hinge loss.
\newblock \emph{Journal of Machine Learning Research}, 9\penalty0 (8), 2008.

\bibitem[Berkson(1944)]{Berkson1944}
Joseph Berkson.
\newblock Application of the logistic function to bio-assay.
\newblock \emph{Journal of the American Statistical Association}, 39:\penalty0
  357–--365, 1944.

\bibitem[Berkson(1951)]{Berkson1951}
Joseph Berkson.
\newblock Why {I} prefer logits to probits.
\newblock \emph{Biometrics}, 7\penalty0 (4):\penalty0 327–--339, 1951.

\bibitem[Bounsiar et~al.(2007)Bounsiar, Grall, and
  Beauseroy]{BounsiarGrallBeauseroy2007}
Adbenour Bounsiar, Edith Grall, and Pierre Beauseroy.
\newblock Kernel based rejection method for supervised classification.
\newblock In \emph{WASET}, 2007.

\bibitem[Cao et~al.(2022)Cao, Cai, Feng, Gu, Gu, An, Niu, and
  Sugiyama]{caogeneralizing}
Yuzhou Cao, Tianchi Cai, Lei Feng, Lihong Gu, Jinjie Gu, Bo~An, Gang Niu, and
  Masashi Sugiyama.
\newblock Generalizing consistent multi-class classification with rejection to
  be compatible with arbitrary losses.
\newblock In \emph{Advances in neural information processing systems}, 2022.

\bibitem[Cao et~al.(2023)Cao, Mozannar, Feng, Wei, and An]{cao2023defense}
Yuzhou Cao, Hussein Mozannar, Lei Feng, Hongxin Wei, and Bo~An.
\newblock In defense of softmax parametrization for calibrated and consistent
  learning to defer.
\newblock In \emph{Advances in Neural Information Processing Systems}, 2023.

\bibitem[Carlini and Wagner(2017)]{carlini2017towards}
Nicholas Carlini and David Wagner.
\newblock Towards evaluating the robustness of neural networks.
\newblock In \emph{IEEE Symposium on Security and Privacy (SP)}, pages 39--57,
  2017.

\bibitem[Charoenphakdee et~al.(2021)Charoenphakdee, Cui, Zhang, and
  Sugiyama]{charoenphakdee2021classification}
Nontawat Charoenphakdee, Zhenghang Cui, Yivan Zhang, and Masashi Sugiyama.
\newblock Classification with rejection based on cost-sensitive classification.
\newblock In \emph{International Conference on Machine Learning}, pages
  1507--1517, 2021.

\bibitem[Chen et~al.(2024)Chen, Li, Sun, and Wang]{chen2024learning}
Guanting Chen, Xiaocheng Li, Chunlin Sun, and Hanzhao Wang.
\newblock Learning to make adherence-aware advice.
\newblock In \emph{International Conference on Learning Representations}, 2024.

\bibitem[Cheng et~al.(2023)Cheng, Cao, Wang, Wei, An, and
  Feng]{cheng2023regression}
Xin Cheng, Yuzhou Cao, Haobo Wang, Hongxin Wei, Bo~An, and Lei Feng.
\newblock Regression with cost-based rejection.
\newblock In \emph{Advances in Neural Information Processing Systems}, 2023.

\bibitem[Chow(1970)]{chow1970optimum}
C~Chow.
\newblock On optimum recognition error and reject tradeoff.
\newblock \emph{IEEE Transactions on information theory}, 16\penalty0
  (1):\penalty0 41--46, 1970.

\bibitem[Chow(1957)]{Chow1957}
C.K. Chow.
\newblock An optimum character recognition system using decision function.
\newblock \emph{IEEE T. C.}, 1957.

\bibitem[Chzhen et~al.(2021)Chzhen, Denis, Hebiri, and Lorieul]{chzhen2021set}
Evgenii Chzhen, Christophe Denis, Mohamed Hebiri, and Titouan Lorieul.
\newblock Set-valued classification--overview via a unified framework.
\newblock \emph{arXiv preprint arXiv:2102.12318}, 2021.

\bibitem[Cortes et~al.(2016{\natexlab{a}})Cortes, DeSalvo, and
  Mohri]{CortesDeSalvoMohri2016}
Corinna Cortes, Giulia DeSalvo, and Mehryar Mohri.
\newblock Learning with rejection.
\newblock In \emph{International Conference on Algorithmic Learning Theory},
  pages 67--82, 2016{\natexlab{a}}.

\bibitem[Cortes et~al.(2016{\natexlab{b}})Cortes, DeSalvo, and
  Mohri]{CortesDeSalvoMohri2016bis}
Corinna Cortes, Giulia DeSalvo, and Mehryar Mohri.
\newblock Boosting with abstention.
\newblock In \emph{Advances in Neural Information Processing Systems}, pages
  1660--1668, 2016{\natexlab{b}}.

\bibitem[Cortes et~al.(2023)Cortes, DeSalvo, and Mohri]{CortesDeSalvoMohri2023}
Corinna Cortes, Giulia DeSalvo, and Mehryar Mohri.
\newblock Theory and algorithms for learning with rejection in binary
  classification.
\newblock \emph{Annals of Mathematics and Artificial Intelligence}, pages
  1--39, 2023.

\bibitem[Crammer and Singer(2001)]{crammer2001algorithmic}
Koby Crammer and Yoram Singer.
\newblock On the algorithmic implementation of multiclass kernel-based vector
  machines.
\newblock \emph{Journal of machine learning research}, 2\penalty0
  (Dec):\penalty0 265--292, 2001.

\bibitem[Denis and Hebiri(2020)]{denis2020consistency}
Christophe Denis and Mohamed Hebiri.
\newblock Consistency of plug-in confidence sets for classification in
  semi-supervised learning.
\newblock \emph{Journal of Nonparametric Statistics}, 32\penalty0 (1):\penalty0
  42--72, 2020.

\bibitem[Denis et~al.(2022)Denis, Hebiri, Njike, and Siebert]{denis2022active}
Christophe Denis, Mohamed Hebiri, Boris~Ndjia Njike, and Xavier Siebert.
\newblock Active learning algorithm through the lens of rejection arguments.
\newblock \emph{arXiv preprint arXiv:2208.14682}, 2022.

\bibitem[Dubuisson and Masson(1993)]{dubuisson1993statistical}
Bernard Dubuisson and Mylene Masson.
\newblock A statistical decision rule with incomplete knowledge about classes.
\newblock \emph{Pattern recognition}, 26\penalty0 (1):\penalty0 155--165, 1993.

\bibitem[El-Yaniv and Wiener(2012)]{el2012active}
Ran El-Yaniv and Yair Wiener.
\newblock Active learning via perfect selective classification.
\newblock \emph{Journal of Machine Learning Research}, 13\penalty0 (2), 2012.

\bibitem[El-Yaniv et~al.(2010)]{el2010foundations}
Ran El-Yaniv et~al.
\newblock On the foundations of noise-free selective classification.
\newblock \emph{Journal of Machine Learning Research}, 11\penalty0 (5), 2010.

\bibitem[Elkan(2001)]{elkan2001foundations}
Charles Elkan.
\newblock The foundations of cost-sensitive learning.
\newblock In \emph{International joint conference on artificial intelligence},
  pages 973--978, 2001.

\bibitem[Filippova(2020)]{Filippova2020}
Katja Filippova.
\newblock Controlled hallucinations:learning to generate faithfully from noisy
  data.
\newblock In \emph{Findings of EMNLP 2020}, 2020.

\bibitem[Fumera and Roli(2002)]{FumeraRoli2002}
Giorgio Fumera and Fabio Roli.
\newblock Support vector machines with embedded reject option.
\newblock In \emph{ICPR}, 2002.

\bibitem[Fumera et~al.(2000)Fumera, Roli, and Giacinto]{FumeraRoliGiacinto2000}
Giorgio Fumera, Fabio Roli, and Giorgio Giacinto.
\newblock Multiple reject thresholds for improving classification reliability.
\newblock In \emph{ICAPR}, 2000.

\bibitem[Gangrade et~al.(2021)Gangrade, Kag, and
  Saligrama]{gangrade2021selective}
Aditya Gangrade, Anil Kag, and Venkatesh Saligrama.
\newblock Selective classification via one-sided prediction.
\newblock In \emph{International Conference on Artificial Intelligence and
  Statistics}, pages 2179--2187, 2021.

\bibitem[Geifman and El-Yaniv(2017)]{geifman2017selective}
Yonatan Geifman and Ran El-Yaniv.
\newblock Selective classification for deep neural networks.
\newblock In \emph{Advances in neural information processing systems}, 2017.

\bibitem[Geifman and El-Yaniv(2019)]{geifman2019selectivenet}
Yonatan Geifman and Ran El-Yaniv.
\newblock Selectivenet: A deep neural network with an integrated reject option.
\newblock In \emph{International conference on machine learning}, pages
  2151--2159, 2019.

\bibitem[Ghosh et~al.(2017)Ghosh, Kumar, and Sastry]{ghosh2017robust}
Aritra Ghosh, Himanshu Kumar, and P~Shanti Sastry.
\newblock Robust loss functions under label noise for deep neural networks.
\newblock In \emph{Proceedings of the AAAI conference on artificial
  intelligence}, 2017.

\bibitem[Goodfellow et~al.(2014)Goodfellow, Shlens, and
  Szegedy]{goodfellow2014explaining}
Ian~J Goodfellow, Jonathon Shlens, and Christian Szegedy.
\newblock Explaining and harnessing adversarial examples.
\newblock \emph{arXiv preprint arXiv:1412.6572}, 2014.

\bibitem[Grandvalet et~al.(2008)Grandvalet, Keshet, Rakotomamonjy, and
  Canu]{GrandvaletKeshetRakotomamonjyCanu2008}
Yves Grandvalet, Joseph Keshet, Alain Rakotomamonjy, and Stephane Canu.
\newblock Suppport vector machines with a reject option.
\newblock In \emph{NIPS}, 2008.

\bibitem[He et~al.(2016)He, Zhang, Ren, and Sun]{he2016deep}
Kaiming He, Xiangyu Zhang, Shaoqing Ren, and Jian Sun.
\newblock Deep residual learning for image recognition.
\newblock In \emph{Proceedings of the IEEE conference on computer vision and
  pattern recognition}, pages 770--778, 2016.

\bibitem[Herbei and Wegkamp(2005)]{HerbeiWegkamp2005}
Radu Herbei and Marten Wegkamp.
\newblock Classification with reject option.
\newblock \emph{Can. J. Stat.}, 2005.

\bibitem[Krizhevsky(2009)]{Krizhevsky09learningmultiple}
Alex Krizhevsky.
\newblock Learning multiple layers of features from tiny images.
\newblock Technical report, Toronto University, 2009.

\bibitem[Landgrebe et~al.(2005)Landgrebe, Tax, Paclik, and
  Duin]{LandgrebeTaxPaclikDuin2005}
Thomas Landgrebe, David Tax, Pavel Paclik, and Robert Duin.
\newblock Interaction between classification and reject performance for
  distance-based reject-option classifiers.
\newblock \emph{PRL}, 2005.

\bibitem[Le~Capitaine and Frelicot(2010)]{le2010optimum}
Hoel Le~Capitaine and Carl Frelicot.
\newblock An optimum class-rejective decision rule and its evaluation.
\newblock In \emph{International Conference on Pattern Recognition}, pages
  3312--3315, 2010.

\bibitem[Lee et~al.(2004)Lee, Lin, and Wahba]{lee2004multicategory}
Yoonkyung Lee, Yi~Lin, and Grace Wahba.
\newblock Multicategory support vector machines: Theory and application to the
  classification of microarray data and satellite radiance data.
\newblock \emph{Journal of the American Statistical Association}, 99\penalty0
  (465):\penalty0 67--81, 2004.

\bibitem[Lei(2014)]{lei2014classification}
Jing Lei.
\newblock Classification with confidence.
\newblock \emph{Biometrika}, 101\penalty0 (4):\penalty0 755--769, 2014.

\bibitem[Li et~al.(2008)Li, Littman, and Walsh]{li2008knows}
Lihong Li, Michael~L Littman, and Thomas~J Walsh.
\newblock Knows what it knows: a framework for self-aware learning.
\newblock In \emph{International conference on Machine learning}, pages
  568--575, 2008.

\bibitem[Li et~al.(2024)Li, Liu, Sun, and Wang]{li2024no}
Xiaocheng Li, Shang Liu, Chunlin Sun, and Hanzhao Wang.
\newblock When no-rejection learning is optimal for regression with rejection.
\newblock In \emph{International Conference on Artificial Intelligence and
  Statistics}, 2024.

\bibitem[Long and Servedio(2013)]{long2013consistency}
Phil Long and Rocco Servedio.
\newblock Consistency versus realizable {H}-consistency for multiclass
  classification.
\newblock In \emph{International Conference on Machine Learning}, pages
  801--809, 2013.

\bibitem[Loshchilov and Hutter(2016)]{loshchilov2016sgdr}
Ilya Loshchilov and Frank Hutter.
\newblock {SGDR}: Stochastic gradient descent with warm restarts.
\newblock \emph{arXiv preprint arXiv:1608.03983}, 2016.

\bibitem[Madras et~al.(2018)Madras, Pitassi, and Zemel]{madras2018predict}
David Madras, Toni Pitassi, and Richard Zemel.
\newblock Predict responsibly: improving fairness and accuracy by learning to
  defer.
\newblock In \emph{Advances in Neural Information Processing Systems}, 2018.

\bibitem[Madry et~al.(2017)Madry, Makelov, Schmidt, Tsipras, and
  Vladu]{madry2017towards}
Aleksander Madry, Aleksandar Makelov, Ludwig Schmidt, Dimitris Tsipras, and
  Adrian Vladu.
\newblock Towards deep learning models resistant to adversarial attacks.
\newblock \emph{arXiv preprint arXiv:1706.06083}, 2017.

\bibitem[Mao et~al.(2023{\natexlab{a}})Mao, Mohri, Mohri, and
  Zhong]{mao2023two}
Anqi Mao, Christopher Mohri, Mehryar Mohri, and Yutao Zhong.
\newblock Two-stage learning to defer with multiple experts.
\newblock In \emph{Advances in Neural Information Processing Systems},
  2023{\natexlab{a}}.

\bibitem[Mao et~al.(2023{\natexlab{b}})Mao, Mohri, and
  Zhong]{MaoMohriZhong2023characterization}
Anqi Mao, Mehryar Mohri, and Yutao Zhong.
\newblock {H}-consistency bounds: Characterization and extensions.
\newblock In \emph{Advances in Neural Information Processing Systems},
  2023{\natexlab{b}}.

\bibitem[Mao et~al.(2023{\natexlab{c}})Mao, Mohri, and
  Zhong]{MaoMohriZhong2023cross}
Anqi Mao, Mehryar Mohri, and Yutao Zhong.
\newblock Cross-entropy loss functions: Theoretical analysis and applications.
\newblock In \emph{International conference on Machine learning},
  2023{\natexlab{c}}.

\bibitem[Mao et~al.(2023{\natexlab{d}})Mao, Mohri, and
  Zhong]{MaoMohriZhong2023predictor}
Anqi Mao, Mehryar Mohri, and Yutao Zhong.
\newblock Predictor-rejector multi-class abstention: Theoretical analysis and
  algorithms.
\newblock \emph{arXiv preprint}, 2023{\natexlab{d}}.

\bibitem[Mao et~al.(2023{\natexlab{e}})Mao, Mohri, and
  Zhong]{MaoMohriZhong2023ranking}
Anqi Mao, Mehryar Mohri, and Yutao Zhong.
\newblock {H}-consistency bounds for pairwise misranking loss surrogates.
\newblock In \emph{International conference on Machine learning},
  2023{\natexlab{e}}.

\bibitem[Mao et~al.(2023{\natexlab{f}})Mao, Mohri, and
  Zhong]{MaoMohriZhong2023rankingabs}
Anqi Mao, Mehryar Mohri, and Yutao Zhong.
\newblock Ranking with abstention.
\newblock In \emph{ICML 2023 Workshop The Many Facets of Preference-Based
  Learning}, 2023{\natexlab{f}}.

\bibitem[Mao et~al.(2023{\natexlab{g}})Mao, Mohri, and
  Zhong]{MaoMohriZhong2023structured}
Anqi Mao, Mehryar Mohri, and Yutao Zhong.
\newblock Structured prediction with stronger consistency guarantees.
\newblock In \emph{Advances in Neural Information Processing Systems},
  2023{\natexlab{g}}.

\bibitem[Mao et~al.(2023{\natexlab{h}})Mao, Mohri, and Zhong]{mao2023ranking}
Anqi Mao, Mehryar Mohri, and Yutao Zhong.
\newblock Ranking with abstention.
\newblock In \emph{ICML 2023 Workshop The Many Facets of Preference-Based
  Learning}, 2023{\natexlab{h}}.

\bibitem[Mao et~al.(2024{\natexlab{a}})Mao, Mohri, and
  Zhong]{MaoMohriZhong2024deferral}
Anqi Mao, Mehryar Mohri, and Yutao Zhong.
\newblock Principled approaches for learning to defer with multiple experts.
\newblock In \emph{International Symposium on Artificial Intelligence and
  Mathematics}, 2024{\natexlab{a}}.

\bibitem[Mao et~al.(2024{\natexlab{b}})Mao, Mohri, and
  Zhong]{MaoMohriZhong2024predictor}
Anqi Mao, Mehryar Mohri, and Yutao Zhong.
\newblock Predictor-rejector multi-class abstention: Theoretical analysis and
  algorithms.
\newblock In \emph{International Conference on Algorithmic Learning Theory},
  2024{\natexlab{b}}.

\bibitem[Mao et~al.(2024{\natexlab{c}})Mao, Mohri, and Zhong]{mao2024h}
Anqi Mao, Mehryar Mohri, and Yutao Zhong.
\newblock {${\mathscr H}$}-consistency guarantees for regression.
\newblock \emph{arXiv preprint arXiv:2403.19480}, 2024{\natexlab{c}}.

\bibitem[Mao et~al.(2024{\natexlab{d}})Mao, Mohri, and
  Zhong]{mao2024regression}
Anqi Mao, Mehryar Mohri, and Yutao Zhong.
\newblock Regression with multi-expert deferral.
\newblock \emph{arXiv preprint arXiv:2403.19494}, 2024{\natexlab{d}}.

\bibitem[Mao et~al.(2024{\natexlab{e}})Mao, Mohri, and Zhong]{mao2024top}
Anqi Mao, Mehryar Mohri, and Yutao Zhong.
\newblock Top-$ k $ classification and cardinality-aware prediction.
\newblock \emph{arXiv preprint arXiv:2403.19625}, 2024{\natexlab{e}}.

\bibitem[Maynez et~al.(2020)Maynez, Narayan, Bohnet, and McDonald]{maynez2020}
Joshua Maynez, Shashi Narayan, Bernd Bohnet, and Ryan McDonald.
\newblock On faithfulness and factuality in abstractive summarization.
\newblock In \emph{Proceedings of the 58th Annual Meeting of the Association
  for Computational Linguistics}, pages 1906--1919, Online, July 2020.
  Association for Computational Linguistics.
\newblock \doi{10.18653/v1/2020.acl-main.173}.

\bibitem[Melvin et~al.(2008)Melvin, Weston, Leslie, and Noble]{Melvin2008}
Iain Melvin, Jason Weston, Christina~S. Leslie, and William~S. Noble.
\newblock Combining classifiers for improved classification of proteins from
  sequence or structure.
\newblock \emph{BMCB}, 2008.

\bibitem[Mohri et~al.(2024)Mohri, Andor, Choi, Collins, Mao, and
  Zhong]{MohriAndorChoiCollinsMaoZhong2024learning}
Christopher Mohri, Daniel Andor, Eunsol Choi, Michael Collins, Anqi Mao, and
  Yutao Zhong.
\newblock Learning to reject with a fixed predictor: Application to
  decontextualization.
\newblock In \emph{International Conference on Learning Representations}, 2024.

\bibitem[Mohri et~al.(2018)Mohri, Rostamizadeh, and
  Talwalkar]{MohriRostamizadehTalwalkar2018}
Mehryar Mohri, Afshin Rostamizadeh, and Ameet Talwalkar.
\newblock \emph{Foundations of Machine Learning}.
\newblock {MIT} Press, second edition, 2018.

\bibitem[Mozannar and Sontag(2020)]{mozannar2020consistent}
Hussein Mozannar and David Sontag.
\newblock Consistent estimators for learning to defer to an expert.
\newblock In \emph{International Conference on Machine Learning}, pages
  7076--7087, 2020.

\bibitem[Mozannar et~al.(2023)Mozannar, Lang, Wei, Sattigeri, Das, and
  Sontag]{pmlr-v206-mozannar23a}
Hussein Mozannar, Hunter Lang, Dennis Wei, Prasanna Sattigeri, Subhro Das, and
  David Sontag.
\newblock Who should predict? exact algorithms for learning to defer to humans.
\newblock In \emph{International Conference on Artificial Intelligence and
  Statistics}, pages 10520--10545, 2023.

\bibitem[Narasimhan et~al.(2022)Narasimhan, Jitkrittum, Menon, Rawat, and
  Kumar]{narasimhanpost}
Harikrishna Narasimhan, Wittawat Jitkrittum, Aditya~Krishna Menon, Ankit~Singh
  Rawat, and Sanjiv Kumar.
\newblock Post-hoc estimators for learning to defer to an expert.
\newblock In \emph{Advances in Neural Information Processing Systems}, 2022.

\bibitem[Narasimhan et~al.(2023)Narasimhan, Menon, Jitkrittum, and
  Kumar]{narasimhan2023learning}
Harikrishna Narasimhan, Aditya~Krishna Menon, Wittawat Jitkrittum, and Sanjiv
  Kumar.
\newblock Learning to reject meets ood detection: Are all abstentions created
  equal?
\newblock \emph{arXiv preprint arXiv:2301.12386}, 2023.

\bibitem[Nesterov(1983)]{nesterov1983method}
Yurii~E Nesterov.
\newblock A method for solving the convex programming problem with convergence
  rate $o(1/k^2)$.
\newblock \emph{Dokl. akad. nauk Sssr}, 269:\penalty0 543--547, 1983.

\bibitem[Netzer et~al.(2011)Netzer, Wang, Coates, Bissacco, Wu, and
  Ng]{Netzer2011}
Yuval Netzer, Tao Wang, Adam Coates, Alessandro Bissacco, Bo~Wu, and Andrew~Y
  Ng.
\newblock Reading digits in natural images with unsupervised feature learning.
\newblock In \emph{Advances in Neural Information Processing Systems}, 2011.

\bibitem[Ni et~al.(2019)Ni, Charoenphakdee, Honda, and Sugiyama]{NiCHS19}
Chenri Ni, Nontawat Charoenphakdee, Junya Honda, and Masashi Sugiyama.
\newblock On the calibration of multiclass classification with rejection.
\newblock In \emph{Advances in Neural Information Processing Systems}, pages
  2582--2592, 2019.

\bibitem[Okati et~al.(2021)Okati, De, and Rodriguez]{okati2021differentiable}
Nastaran Okati, Abir De, and Manuel Rodriguez.
\newblock Differentiable learning under triage.
\newblock \emph{Advances in Neural Information Processing Systems},
  34:\penalty0 9140--9151, 2021.

\bibitem[Pereira and Pires(2005)]{SantosPires2005}
Carla~S Pereira and Ana Pires.
\newblock On optimal reject rules and {ROC} curves.
\newblock \emph{PRL}, 2005.

\bibitem[Pietraszek(2005)]{Pietraszek2005}
Tadeusz Pietraszek.
\newblock Optimizing abstaining classifiers using {ROC}.
\newblock In \emph{ICML}, 2005.

\bibitem[Puchkin and Zhivotovskiy(2021)]{puchkin2021exponential}
Nikita Puchkin and Nikita Zhivotovskiy.
\newblock Exponential savings in agnostic active learning through abstention.
\newblock In \emph{Conference on Learning Theory}, pages 3806--3832, 2021.

\bibitem[Raghu et~al.(2019{\natexlab{a}})Raghu, Blumer, Corrado, Kleinberg,
  Obermeyer, and Mullainathan]{raghu2019algorithmic}
Maithra Raghu, Katy Blumer, Greg Corrado, Jon Kleinberg, Ziad Obermeyer, and
  Sendhil Mullainathan.
\newblock The algorithmic automation problem: Prediction, triage, and human
  effort.
\newblock \emph{arXiv preprint arXiv:1903.12220}, 2019{\natexlab{a}}.

\bibitem[Raghu et~al.(2019{\natexlab{b}})Raghu, Blumer, Sayres, Obermeyer,
  Kleinberg, Mullainathan, and Kleinberg]{raghu2019direct}
Maithra Raghu, Katy Blumer, Rory Sayres, Ziad Obermeyer, Bobby Kleinberg,
  Sendhil Mullainathan, and Jon Kleinberg.
\newblock Direct uncertainty prediction for medical second opinions.
\newblock In \emph{International Conference on Machine Learning}, pages
  5281--5290, 2019{\natexlab{b}}.

\bibitem[Ramaswamy et~al.(2018)Ramaswamy, Tewari, and
  Agarwal]{ramaswamy2018consistent}
Harish~G Ramaswamy, Ambuj Tewari, and Shivani Agarwal.
\newblock Consistent algorithms for multiclass classification with an abstain
  option.
\newblock \emph{Electronic Journal of Statistics}, 12\penalty0 (1):\penalty0
  530--554, 2018.

\bibitem[Reid and Williamson(2010)]{reid2010composite}
Mark~D Reid and Robert~C Williamson.
\newblock Composite binary losses.
\newblock \emph{The Journal of Machine Learning Research}, 11:\penalty0
  2387--2422, 2010.

\bibitem[Schreuder and Chzhen(2021)]{schreuder2021classification}
Nicolas Schreuder and Evgenii Chzhen.
\newblock Classification with abstention but without disparities.
\newblock In \emph{Uncertainty in Artificial Intelligence}, pages 1227--1236.
  PMLR, 2021.

\bibitem[Steinwart(2007)]{steinwart2007compare}
Ingo Steinwart.
\newblock How to compare different loss functions and their risks.
\newblock \emph{Constructive Approximation}, 26\penalty0 (2):\penalty0
  225--287, 2007.

\bibitem[Tax and Duin(2008)]{tax2008growing}
David~MJ Tax and Robert~PW Duin.
\newblock Growing a multi-class classifier with a reject option.
\newblock \emph{Pattern Recognition Letters}, 29\penalty0 (10):\penalty0
  1565--1570, 2008.

\bibitem[Tortorella(2001)]{Tortorella2001}
Francesco Tortorella.
\newblock An optimal reject rule for binary classifiers.
\newblock In \emph{ICAPR}, 2001.

\bibitem[Tsipras et~al.(2018)Tsipras, Santurkar, Engstrom, Turner, and
  Madry]{tsipras2018robustness}
Dimitris Tsipras, Shibani Santurkar, Logan Engstrom, Alexander Turner, and
  Aleksander Madry.
\newblock Robustness may be at odds with accuracy.
\newblock \emph{arXiv preprint arXiv:1805.12152}, 2018.

\bibitem[Verhulst(1838)]{Verhulst1838}
Pierre~François Verhulst.
\newblock Notice sur la loi que la population suit dans son accroissement.
\newblock \emph{Correspondance math\'ematique et physique}, 10:\penalty0
  113–--121, 1838.

\bibitem[Verhulst(1845)]{Verhulst1845}
Pierre~François Verhulst.
\newblock Recherches math\'ematiques sur la loi d'accroissement de la
  population.
\newblock \emph{Nouveaux M\'emoires de l'Acad\'emie Royale des Sciences et
  Belles-Lettres de Bruxelles}, 18:\penalty0 1–--42, 1845.

\bibitem[Verma and Nalisnick(2022)]{verma2022calibrated}
Rajeev Verma and Eric Nalisnick.
\newblock Calibrated learning to defer with one-vs-all classifiers.
\newblock In \emph{International Conference on Machine Learning}, pages
  22184--22202, 2022.

\bibitem[Verma et~al.(2023)Verma, Barrej{\'o}n, and
  Nalisnick]{verma2023learning}
Rajeev Verma, Daniel Barrej{\'o}n, and Eric Nalisnick.
\newblock Learning to defer to multiple experts: Consistent surrogate losses,
  confidence calibration, and conformal ensembles.
\newblock In \emph{International Conference on Artificial Intelligence and
  Statistics}, pages 11415--11434, 2023.

\bibitem[Weston and Watkins(1998)]{weston1998multi}
Jason Weston and Chris Watkins.
\newblock Multi-class support vector machines.
\newblock Technical report, Citeseer, 1998.

\bibitem[Wiener and El-Yaniv(2011)]{wiener2011agnostic}
Yair Wiener and Ran El-Yaniv.
\newblock Agnostic selective classification.
\newblock In \emph{Advances in neural information processing systems}, 2011.

\bibitem[Wiener and El-Yaniv(2015)]{wiener2015agnostic}
Yair Wiener and Ran El-Yaniv.
\newblock Agnostic pointwise-competitive selective classification.
\newblock \emph{Journal of Artificial Intelligence Research}, 52:\penalty0
  171--201, 2015.

\bibitem[Wiener et~al.(2015)Wiener, Hanneke, and
  El-Yaniv]{wiener2015compression}
Yair Wiener, Steve Hanneke, and Ran El-Yaniv.
\newblock A compression technique for analyzing disagreement-based active
  learning.
\newblock \emph{J. Mach. Learn. Res.}, 16:\penalty0 713--745, 2015.

\bibitem[Wilder et~al.(2021)Wilder, Horvitz, and Kamar]{wilder2021learning}
Bryan Wilder, Eric Horvitz, and Ece Kamar.
\newblock Learning to complement humans.
\newblock In \emph{International Joint Conferences on Artificial Intelligence},
  pages 1526--1533, 2021.

\bibitem[Yuan and Wegkamp(2010)]{yuan2010classification}
Ming Yuan and Marten Wegkamp.
\newblock Classification methods with reject option based on convex risk
  minimization.
\newblock \emph{Journal of Machine Learning Research}, 11\penalty0 (1), 2010.

\bibitem[Yuan and Wegkamp(2011)]{WegkampYuan2011}
Ming Yuan and Marten Wegkamp.
\newblock {SVM}s with a reject option.
\newblock In \emph{Bernoulli}, 2011.

\bibitem[Zagoruyko and Komodakis(2016)]{zagoruyko2016wide}
Sergey Zagoruyko and Nikos Komodakis.
\newblock Wide residual networks.
\newblock \emph{arXiv preprint arXiv:1605.07146}, 2016.

\bibitem[Zhang and Chaudhuri(2016{\natexlab{a}})]{ZhangChaudhuri2016}
Chicheng Zhang and Kamalika Chaudhuri.
\newblock The extended {L}ittlestone's dimension for learning with mistakes and
  abstentions.
\newblock In \emph{Conference on Learning Theory}, 2016{\natexlab{a}}.

\bibitem[Zhang and Chaudhuri(2016{\natexlab{b}})]{zhang2016extended}
Chicheng Zhang and Kamalika Chaudhuri.
\newblock The extended littlestone’s dimension for learning with mistakes and
  abstentions.
\newblock In \emph{Conference on Learning Theory}, pages 1584--1616,
  2016{\natexlab{b}}.

\bibitem[Zhang and Agarwal(2020)]{zhang2020bayes}
Mingyuan Zhang and Shivani Agarwal.
\newblock Bayes consistency vs. {H}-consistency: The interplay between
  surrogate loss functions and the scoring function class.
\newblock In \emph{Advances in Neural Information Processing Systems}, 2020.

\bibitem[Zhang and Sabuncu(2018)]{zhang2018generalized}
Zhilu Zhang and Mert Sabuncu.
\newblock Generalized cross entropy loss for training deep neural networks with
  noisy labels.
\newblock In \emph{Advances in neural information processing systems}, 2018.

\bibitem[Zheng et~al.(2023)Zheng, Wu, Bao, Cao, Li, and
  Zhu]{zheng2023revisiting}
Chenyu Zheng, Guoqiang Wu, Fan Bao, Yue Cao, Chongxuan Li, and Jun Zhu.
\newblock Revisiting discriminative vs. generative classifiers: Theory and
  implications.
\newblock \emph{arXiv preprint arXiv:2302.02334}, 2023.

\bibitem[Zhu and Nowak(2022)]{zhu2022efficient}
Yinglun Zhu and Robert Nowak.
\newblock Efficient active learning with abstention.
\newblock \emph{arXiv preprint arXiv:2204.00043}, 2022.

\bibitem[Ziyin et~al.(2019)Ziyin, Wang, Liang, Salakhutdinov, Morency, and
  Ueda]{ziyin2019deep}
Liu Ziyin, Zhikang Wang, Paul~Pu Liang, Ruslan Salakhutdinov, Louis-Philippe
  Morency, and Masahito Ueda.
\newblock Deep gamblers: Learning to abstain with portfolio theory.
\newblock \emph{arXiv preprint arXiv:1907.00208}, 2019.

\end{thebibliography}

\newpage
\appendix

\renewcommand{\contentsname}{Contents of Appendix}
\tableofcontents
\addtocontents{toc}{\protect\setcounter{tocdepth}{3}} 
\clearpage


\section{Related work}
\label{app:related-work}

The problem of abstention (or rejection) has been studied in several
publications in the past.  \citet{Chow1957,chow1970optimum} studied
the trade-off between error rate and rejection rate and also presented
an analysis of the Bayes optimal decision for this setting.  Later,
\cite{FumeraRoliGiacinto2000} suggested a multiple thresholds rule
when the a posteriori probabilities were affected by
errors. \citet{Tortorella2001} introduced an optimal rejection rule
for binary classifiers based on the Receiver Operating Characteristic
curve (ROC curve) and \citet{SantosPires2005} compared their approach
with that of \citet{chow1970optimum}.
A number of other publications suggested various rejection techniques
to decrease the misclassification rate, though without a theoretical
analysis \citep{FumeraRoli2002, Pietraszek2005,
  BounsiarGrallBeauseroy2007, LandgrebeTaxPaclikDuin2005, Melvin2008}.
Classification with a rejection option that incurs a cost was examined
by \citet{HerbeiWegkamp2005}, who gave the optimal rule for these
ternary functions.

\citet{el2010foundations} and \citet{wiener2011agnostic} proposed and
studied a framework for \emph{selective classification} based on a
classifier and a selector and an objective function defined as the
average loss on accepted samples, normalized by the average selection.
Several related connections with abstention have been studied,
including active learning
\citep{el2012active,wiener2015compression,wiener2015agnostic,
  puchkin2021exponential,denis2022active,zhu2022efficient}, rejection
in the multi-class setting
\citep{dubuisson1993statistical,tax2008growing,le2010optimum},
reinforcement learning \citep{li2008knows}, online learning
\citep{zhang2016extended}, modern confidence-based rejection
techniques \citep{geifman2017selective}, neural network architectures
for abstention \citep{geifman2019selectivenet}, loss functions derived
from the doubling rate of gambling \citep{ziyin2019deep},
disparity-free methods \citep{schreuder2021classification}, the
abstention problem within a "confidence set" framework
\citep{gangrade2021selective,chzhen2021set} and out-of-distribution
detection \citep{narasimhan2023learning}.

A standard method for abstention adopted in the past, which covers a
very large number of publications and dates back to the early work of
\citet{Chow1957,chow1970optimum}, is the so-called
\emph{confidence-based abstention}. This consists of first learning a
predictor and then abstaining when the score returned by the predictor
falls below some fixed threshold.  \citet{bartlett2008classification}
formulated a loss function for this setting taking into consideration
the abstention cost $c$ and suggested to learn a predictor using a
\emph{double hinge loss} that they showed benefits from consistency
results. Several other publications followed up on this approach
\citep{GrandvaletKeshetRakotomamonjyCanu2008,
  yuan2010classification,WegkampYuan2011}. \citet{yuan2010classification}
investigated the necessary and sufficient condition for consistency of
convex risk minimization with respect to the abstention loss and
obtained the corresponding excess error bounds in the same
setting. Other variants of this framework have also been studied in
\citep{lei2014classification, denis2020consistency}.

However, \citet*{CortesDeSalvoMohri2016,CortesDeSalvoMohri2023} argued
that, in general, confidence-based abstention is suboptimal, unless
the predictor learned is the Bayes classifier. They showed that, in
general, even in simple cases, no threshold-based abstention can
achieve the desired result. They introduced a novel framework for
abstention that consists of learning \emph{simultaneously} both a
predictor $h$ and a rejector $r$ that, in general, can be distinct
from a threshold-based function. They further defined a
\emph{predictor-rejector formulation} loss function for the pair $(h,
r)$, taking into consideration the abstention cost $c$. The authors
gave Rademacher complexity-based generalization bounds for this
learning problem. They also suggested several surrogate loss functions
for the abstention loss in the binary classification setting, and
further showed that these surrogate losses benefitted from consistency
guarantees. They designed algorithms based on these surrogate losses,
which they showed empirically outperform confidence-based abstention
baselines.  This work had multiple follow-up studies, including a
theoretical and algorithmic study of boosting with abstention
\citep{CortesDeSalvoMohri2016bis} and a study of the extension of the
results to multi-class setting \citep{NiCHS19}. These authors argued
that the design of calibrated or Bayes-consistent surrogate losses in
the multi-class classification setting based on the predictor-rejector
abstention loss of \citet{CortesDeSalvoMohri2016} was difficult and
left that as an open problem. Recently,
 \citet{MaoMohriZhong2023predictor} positively resolved this open problem by introducing new families of predictor-rejector surrogate losses, along with significantly stronger \emph{$\sH$-consistency bounds} guarantees. These are strong hypothesis
set-specific and non-asymptotic consistency guarantees for the surrogate losses, which
upper-bound the estimation error of the abstention loss function in
terms of the estimation error of the surrogate loss \citep{awasthi2022Hconsistency,awasthi2022multi,zheng2023revisiting,MaoMohriZhong2023cross,MaoMohriZhong2023characterization}. They have also been extended to the ranking setting \citep{MaoMohriZhong2023ranking,mao2023ranking}, structured prediction setting \citep{MaoMohriZhong2023structured}, regression setting \citep{mao2024regression}, top-$k$ classification setting \citep{mao2024top}, and adversarial classification setting \citep{goodfellow2014explaining,madry2017towards,tsipras2018robustness,carlini2017towards,awasthi2021calibration,awasthi2021finer,awasthi2021existence,awasthi2024dc} in recent work \citep{AwasthiMaoMohriZhong2023theoretically,MaoMohriZhong2023cross}.

\cite{cheng2023regression} applied the predictor-rejector framework to regression with abstention and introduced Bayes-consistent surrogates. Furthermore,
\citet{MohriAndorChoiCollinsMaoZhong2024learning} explored the
predictor-rejector framework from the perspective of learning with a fixed predictor,
applying their novel algorithms to decontextualization
tasks. \citet{li2024no} investigated the Bayes-consistency of
no-rejection learning in the setting of regression with
abstention. \citet{ramaswamy2018consistent} also studied the
confidence-based abstention in the multi-class classification, where
they show certain multi-class hinge loss formulations and a new
constructed polyhedral binary encoded predictions (BEP) surrogate loss
are Bayes-consistent. \citet{charoenphakdee2021classification}
proposed a cost-sensitive approach for the multi-class abstention,
where they decomposed the multi-class problem into multiple binary
cost-sensitive classification problems
\citep{elkan2001foundations}. They proposed a family of cost-sensitive
one-versus-all surrogate losses, which are Bayes-consistent in that
setting.

\citet{mozannar2020consistent} proposed instead for the multi-class
abstention setting a \emph{score-based formulation}, where, in
addition to the standard scoring functions associated to each label, a
new scoring function is associated to a new rejection label.
Rejection takes places when the score given to the rejection label is
higher than other scores and the rejector is therefore implicitly
defined via this specific rule. The authors suggested a surrogate loss
for their approach based on the cross-entropy (logistic loss with
softmax applied to neural networks outputs), which they proved to be
Bayes-consistent. More recently, \citet{caogeneralizing} gave a more
general family of Bayes-consistent surrogate losses for the score-based
formulation that can be built upon any consistent loss for the
standard multi-class classification problem. Most recent research
by \citet{pmlr-v206-mozannar23a} demonstrates that cross-entropy
score-based surrogate losses are not realizable $\sH$-consistent \citep{long2013consistency,zhang2020bayes}, in relation to
abstention loss. Instead, the authors propose a novel surrogate
loss that is proved to be realizable $\sH$-consistent when $\sH$ is
\emph{closed under scaling}, although its Bayes-consistency remains
unclear. The challenge of devising a surrogate loss that exhibits both
Bayes-consistency and realizable $\sH$-consistency remains an open
problem.

A problem directly related to our study is that of learning to defer,
which can be directly cast as an instance of learning with
abstention. There are several recent publications studying this
formulation of the problem
\citep{madras2018predict,raghu2019algorithmic,raghu2019direct,
  mozannar2020consistent,okati2021differentiable,wilder2021learning,
  bansal2021most,verma2022calibrated,narasimhanpost,verma2023learning,
  mao2023two,cao2023defense,MaoMohriZhong2024deferral,chen2024learning,mao2024regression}. \citet{raghu2019direct,
  wilder2021learning,bansal2021most} studied confidence-based methods
to make deferral decisions, which may be sub-optimal for low capital
models \citep{CortesDeSalvoMohri2016,CortesDeSalvoMohri2023}. To
overcome this limitation, \citet{mozannar2020consistent} proposed
cost-sensitive logistic loss and \citet{verma2022calibrated} proposed
cost-sensitive one-versus-all proper composite loss
\citep{reid2010composite}, both in the score-based
formulation. \citet{verma2023learning} further generalized the
surrogate loss in \citep{verma2022calibrated} to the setting of
deferring with multiple experts. Furthermore,
\citet{MaoMohriZhong2024deferral} introduced a new and more general
family of surrogate losses specifically tailored for this setting and
proved that these surrogate losses benefit from strong
$\sH$-consistency bounds. More recently, \citet{narasimhanpost}
pointed out that the existing surrogate losses for learning to defer
\citep{mozannar2020consistent,verma2022calibrated} may underfit in an
important practical setting and proposed a post-hoc correction for
these loss functions. Moreover, \citet{mao2023two} studied a two-stage
scenario for learning to defer with multiple experts, where a
predictor is first trained using a standard loss function such as
cross-entropy, and a deferral function is subsequently learned. They
introduced a novel family of surrogate loss functions and algorithms
for this crucial scenario, supported by $\sH$-consistency bounds. \citet{cao2023defense} introduced a new Bayes-consistent, asymmetric softmax-based surrogate loss, designed to yield valid estimates while avoiding the problem of unboundedness. Most recently, \citet{chen2024learning} incorporated deferral into a
sequential decision-making model, leading to improved theoretical
convergence and empirical performance. \citet{mao2024regression} proposed the framework of regression with multiple-expert deferral and novel surrogate losses that benefit from $\sH$-consistency bounds in that context.

\section{Discussion on experiments}
\label{app:experimemts}

This section presents a detailed analysis of the experimental results.

For CIFAR-10, the two-stage score-based abstention surrogate loss
outperforms the cross-entropy scored-based abstention surrogate loss
($\mu = 1.0$) used in \citep{mozannar2020consistent} by 1.26\%, and
outperforms the cross-entropy scored-based abstention surrogate loss
($\mu = 1.7$) used in \citep{caogeneralizing} by 0.4\%. Our results
for the score-based surrogate losses are also consistent with those of
\citet{caogeneralizing}, who showed that the scored-based abstention
loss \eqref{eq:sur-score} with $\ell_{\mu}$ adopted as the generalized
cross-entropy loss ($\mu=1.7$) performs better than the scored-based
abstention loss with $\ell_{\mu}$ adopted as the logistic loss
($\mu=1$). This agrees with our theoretical analysis based on
$\sH$-consistency bounds and minimizability gaps in
Theorem~\ref{Thm:bound_comp_sum} and
Theorem~\ref{Thm:gap-upper-bound-determi}, since both losses have the
same square-root functional form while the magnitude of the
minimizability gap decreases with $\mu$ in light of the fact that
$\sE_{\lsc_{\mu}}^*(\sH)$ is close for both losses.

Table~\ref{tab:comparison} also shows that on SVHN, using deeper
neural networks than \citep{caogeneralizing}, the cross-entropy
scored-based abstention loss ($\mu=1.7$) actually performs worse than
the cross-entropy scored-based abstention loss ($\mu=1$) in
\citep{mozannar2020consistent}, in contrast with the opposite results
observed in \citep{caogeneralizing} when using shallower neural
networks. This is consistent with our theoretical analysis based on
their $\sH$-consistency bounds (Theorem~\ref{Thm:bound_comp_sum}): the
minimizability gaps are basically the same while the dependency of the
multiplicative constant on the number of classes appears for
$\mu=1.7$, which makes the scored-based abstention loss
\eqref{eq:sur-score} with $\ell_{\mu}$ adopted as the generalized
cross-entropy loss ($\mu=1.7$) less favorable. Here too, the two-stage
score-based abstention surrogate loss is superior to both, with an
abstention loss 1.23\% lower than that of \citep{caogeneralizing} and
0.68\% lower than that of \citep{mozannar2020consistent}.

To further test the algorithms, we also carried out experiments on
CIFAR-100, with deeper neural networks. Table~\ref{tab:comparison}
shows that score-based abstention loss with generalized cross-entropy
adopted in \citep{caogeneralizing} does not perform well in this
case. In contrast, the score-based abstention loss with the logistic
loss adopted in \citep{mozannar2020consistent} performs better and
surpasses it by 4.59\%. Our two-stage score-based abstention loss is
still the most favorable, here too, with 0.86\% lower abstention loss
than that of \citep{mozannar2020consistent}. As with the case of SVHN,
the inferior performance of the cross-entropy scored-based abstention
surrogate loss ($\mu = 1.7$) can be seen from the dependency of the
multiplicative constant on the number of classes in $\sH$-consistency
bounds (Theorem~\ref{Thm:bound_comp_sum}), which is worse when the
number of classes is much larger as in the case of CIFAR-100.

\section{Proofs for score-based abstention losses}
\label{app:score}

To begin with the proof, we first introduce some notation. Recall
that we denote by $p(x, y) = \sD(Y = y \!\mid\! X = x)$ the
conditional probability of $Y=y$ given $X = x$. For simplicity of the
notation, we let $p(x,n+1)=1-c$ and denote by $ y_{\max}\in \sY \bigcup \curl*{n+1}$ the label associated to an input $x\in
\sX$, defined as $ y_{\max}=n+1$ if $1-c \geq \max_{y \in \sY} p(x,
y)$; otherwise, $ y_{\max}$ is defined as an element in $\sY$ with
the highest conditional probability, $ y_{\max} = \argmax_{y \in
  \sY} p(x, y)$, with the same deterministic strategy for breaking
ties as that of $ \hh(x)$.  Thus, the generalization error for a
score-based abstention surrogate loss can be rewritten as $
\sE_{\lsc}( h) = \mathbb{E}_{X} \bracket*{\sC_{\lsc}( h, x)} $,
where $\sC_{\lsc}( h,x)$ is the conditional $\lsc$-risk, defined by
\begin{align*}
\sC_{\lsc}( h,x) = \sum_{y\in \sY\bigcup \curl*{n+1}} p(x, y)  \ell( h,x, y).
\end{align*}
We denote by $\sC_{\lsc}^*( \sH,x) = \inf_{ h\in
  \sH}\sC_{\lsc}( h,x)$ the minimal conditional
$\lsc$-risk. Then, the minimizability gap can be rewritten as follows:
\begin{align*}
\sM_{\lsc}( \sH)
 = \sE^*_{\lsc}( \sH) - \mathbb{E}_{X} \bracket* {\sC_{\lsc}^*( \sH, x)}.
\end{align*}
We further refer to $\sC_{\lsc}( h,x)-\sC_{\lsc}^*( \sH,x)$ as
the calibration gap and denote it by $\Delta\sC_{\lsc, \sH}(
h,x)$.  We first prove a lemma on the calibration gap of the
score-based abstention loss. For any $x \in \sX$, we will denote by
$\mathsf H(x)$ the set of labels generated by hypotheses in $ \sH$:
$\mathsf H(x) = \curl*{ \hh(x) \colon  h \in  \sH}$.
\begin{restatable}{lemma}{CalibrationGapScore}
\label{lemma:calibration_gap_score}
For any $x \in \sX$,
the minimal conditional $\labsc$-risk and
the calibration gap for $\labsc$ can be expressed as follows:
\begin{align*}
\sC^*_{\labsc}( \sH,x) & = 1 - \max_{y\in \mathsf H(x)} p(x, y)\\
\Delta\sC_{\labsc, \sH}( h,x) & = \max_{y\in \mathsf H(x)} p(x, y) - p(x, \hh(x)).
\end{align*}
\end{restatable}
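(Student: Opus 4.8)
The plan is to compute the conditional $\labsc$-risk $\sC_{\labsc}(h, x) = \sum_{y \in \sY} p(x, y)\,\labsc(h, x, y)$ in closed form and then read off both claimed identities. First I would establish the unified expression $\sC_{\labsc}(h, x) = 1 - p(x, \hh(x))$ (under the appendix convention $p(x, n+1) = 1 - c$) by a two-case analysis on whether $h$ abstains at $x$. Starting from the rewriting $\labsc(h, x, y) = \1_{\hh(x) \neq y} + (1-c)\1_{\hh(x) \neq n+1} + (c - 1)$ derived in Section~\ref{sec:score-general} and using $\sum_{y \in \sY} p(x, y) = 1$, the conditional risk equals $\sum_{y \in \sY} p(x, y)\1_{\hh(x) \neq y} + (1-c)\1_{\hh(x) \neq n+1} + (c-1)$. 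When $\hh(x) \in \sY$, the first sum is $1 - p(x, \hh(x))$ and $\1_{\hh(x) \neq n+1} = 1$, so the $(1-c)$ and $(c-1)$ terms cancel, leaving $1 - p(x, \hh(x))$. When $\hh(x) = n+1$, the first sum is $1$ (every label in $\sY$ differs from $n+1$) and $\1_{\hh(x) \neq n+1} = 0$, giving $1 + (c-1) = c = 1 - p(x, n+1)$, which is again $1 - p(x, \hh(x))$. Both cases collapse to $\sC_{\labsc}(h, x) = 1 - p(x, \hh(x))$.

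Next I would pass to the infimum over $h \in \sH$. The crucial observation is that $\sC_{\labsc}(h, x)$ depends on $h$ only through the predicted label $\hh(x)$, which ranges exactly over the finite set $\mathsf H(x) = \curl*{\hh(x) \colon h \in \sH} \subseteq \sY \cup \curl*{n+1}$. Hence the infimum is attained and $\sC^*_{\labsc}(\sH, x) = \inf_{h \in \sH}\bracket*{1 - p(x, \hh(x))} = 1 - \max_{y \in \mathsf H(x)} p(x, y)$, which is the first identity. The calibration gap then follows by subtraction: $\Delta\sC_{\labsc, \sH}(h, x) = \sC_{\labsc}(h, x) - \sC^*_{\labsc}(\sH, x) = \max_{y \in \mathsf H(x)} p(x, y) - p(x, \hh(x))$.

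The argument is a direct computation, so I do not anticipate a serious obstacle. The only points requiring care are the bookkeeping in the case analysis---ensuring the $(1-c)$ and $(c-1)$ terms cancel in the non-abstaining case, and checking that the convention $p(x, n+1) = 1 - c$ is exactly what makes the abstaining case ($\sC_{\labsc} = c$) agree with the formula $1 - p(x, \hh(x))$---and the passage from the pointwise infimum over $\sH$ to a maximum over the finite label set $\mathsf H(x)$, which is what guarantees attainment of the infimum and yields the clean maximum.
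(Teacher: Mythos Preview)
Your proposal is correct and follows essentially the same approach as the paper: compute $\sC_{\labsc}(h,x)=1-p(x,\hh(x))$, then take the infimum over $\sH$ (equivalently, maximize $p(x,\cdot)$ over $\mathsf H(x)$) and subtract. The only cosmetic difference is that the paper works directly from the original definition of $\labsc$ and compresses the case analysis into a single line, whereas you start from the rewriting of $\labsc$ derived in Section~\ref{sec:score-general} and spell out the two cases explicitly.
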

\begin{proof}
The conditional
$\labsc$-risk of $ h$ can be expressed as follows:
\begin{align*}
\sC_{\labsc}( h, x)
 = \sum_{y\in \sY}p(x, y)\1_{ \hh(x)\neq y}\1_{ \hh(x)\neq n + 1}+c \1_{ \hh(x) = n + 1}=1-p(x,  \hh(x)).
\end{align*}
Then, the minimal conditional $\labsc$-risk is given by
\[
\sC_{\labsc}^*(\sH,x) = 1 - \max_{y\in \mathsf H(x)} p(x, y),
\]
and the calibration gap can be expressed as follows:
\begin{align*}
  \Delta \sC_{\labsc,\sH}( h, x)
  = \sC_{\labsc}( h, x)-\sC_{\labsc}^*(\sH,x)= \max_{y\in \mathsf H(x)} p(x, y)-p(x,  \hh(x)).
\end{align*}
This completes the proof.
\end{proof}

Note that when $ \sH$ is symmetric, $\mathsf H(x)= \sY \bigcup \curl*{n+1}$. By
Lemma~\ref{lemma:calibration_gap_score}, in those cases, we obtain the
following result,
\begin{corollary}
\label{cor:calibration_gap_score}
Assume that $ \sH$ is symmetric. Then, for any $x \in \sX$,
the minimal conditional $\labsc$-risk and
the calibration gap for $\labsc$ can be expressed as follows:
\begin{align*}
\sC^*_{\labsc}( \sH,x) & =  1- p(x, y_{\max})\\
\Delta\sC_{\labsc, \sH}( h,x) & =  p(x, y_{\max}) - p(x, \hh(x)).
\end{align*}
\end{corollary}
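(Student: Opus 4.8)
The plan is to derive Corollary~\ref{cor:calibration_gap_score} directly from Lemma~\ref{lemma:calibration_gap_score} by reducing the maximum over the hypothesis-generated label set $\mathsf H(x)$ to a maximum over the entire augmented label set $\sY \cup \curl*{n+1}$. Lemma~\ref{lemma:calibration_gap_score} already supplies $\sC^*_{\labsc}(\sH, x) = 1 - \max_{y \in \mathsf H(x)} p(x, y)$ and $\Delta \sC_{\labsc, \sH}(h, x) = \max_{y \in \mathsf H(x)} p(x, y) - p(x, \hh(x))$, so the entire content of the corollary is the identification $\max_{y \in \mathsf H(x)} p(x, y) = p(x, y_{\max})$, which follows once we know $\mathsf H(x) = \sY \cup \curl*{n+1}$.

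The substantive step I would carry out is establishing this reduction under the symmetry assumption. By definition of symmetry, the set of realized score vectors $\curl*{\bracket*{h(x,1),\ldots,h(x,n+1)}\colon h \in \sH}$ equals the product set $\curl*{\bracket*{f_1(x),\ldots,f_{n+1}(x)}\colon f_1,\ldots,f_{n+1} \in \sF}$, which is of the form $V^{n+1}$ for $V = \curl*{f(x)\colon f \in \sF}$ and is therefore invariant under any permutation of its coordinates. Consequently, for any target label $y^* \in \sY \cup \curl*{n+1}$, I would take a realized score vector and permute its largest coordinate into position $y^*$; the permuted vector is again realized by some $h \in \sH$, and by the definition of $\hh(x)$ this hypothesis satisfies $\hh(x) = y^*$. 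Hence every label is attainable and $\mathsf H(x) = \sY \cup \curl*{n+1}$, which is precisely the fact noted in the text preceding the corollary.

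With the reduction in hand, the remainder is immediate: $\max_{y \in \mathsf H(x)} p(x, y) = \max_{y \in \sY \cup \curl*{n+1}} p(x, y)$, and by the convention $p(x, n+1) = 1 - c$ together with the definition of $y_{\max}$ as the maximizer of $p(x, \cdot)$ over $\sY \cup \curl*{n+1}$ (using the same tie-breaking rule), this maximum equals $p(x, y_{\max})$. Substituting into the two expressions from Lemma~\ref{lemma:calibration_gap_score} yields $\sC^*_{\labsc}(\sH, x) = 1 - p(x, y_{\max})$ and $\Delta \sC_{\labsc, \sH}(h, x) = p(x, y_{\max}) - p(x, \hh(x))$, as claimed.

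The main obstacle is the reduction $\mathsf H(x) = \sY \cup \curl*{n+1}$ rather than the final substitution. The delicate point is the interaction of ties with the definition of $\hh(x)$, since abstention is chosen by the asymmetric convention $h(x,n+1) \geq \max_{y \in \sY} h(x,y)$ and the argmax is otherwise resolved by a fixed deterministic tie-breaking rule. I would resolve this by selecting, when $V$ contains at least two values, a score vector whose maximal coordinate is \emph{strictly} larger than all others and placing it in the desired position $y^*$; then the argmax is unique, the tie-breaking rule is irrelevant, and $\hh(x) = y^*$ holds for each target label, with the permutation closure afforded by symmetry guaranteeing such a vector is always available.
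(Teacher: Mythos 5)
Your proposal is correct and takes essentially the same route as the paper: the paper's entire proof consists of the remark that symmetry yields $\mathsf H(x) = \sY \cup \curl*{n+1}$ followed by an appeal to Lemma~\ref{lemma:calibration_gap_score}, which is exactly the reduction you perform. The only difference is that you actually justify the set equality (via the permutation/product-set structure and the strict-maximum construction that sidesteps tie-breaking), a step the paper asserts without proof.
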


\subsection{Proof of \texorpdfstring{$\sH$}{H}-Consistency bounds
  for Cross-Entropy Score-Based Surrogates (Theorem~\ref{Thm:bound_comp_sum})}
\label{app:bound_comp_sum}
\BoundCompSum*
\begin{proof}
The main proof idea is similar for each case of $\mu$: we will lower
bound the calibration gap of $\lsc_{\mu}$ by that of $\labsc$ by
carefully selecting a hypothesis $ h_{\lambda}$ in the hypothesis
set $ \sH$. In particular, we analyze different cases as follows.
\paragraph{The Case Where $\mu \in [0,1)$} 

For any $ h \in \sH$ and $x\in \sX$, choose hypothesis $
h_{\lambda} \in  \sH$ such that
\begin{align*}
 h_{\lambda}(x, y) = 
\begin{cases}
   h(x, y) & \text{if $y \not \in \curl*{ y_{\max},  \hh(x)}$}\\
  \log\paren*{\exp\bracket*{ h(x, y_{\max})} + \lambda} & \text{if $y =  \hh(x)$}\\
  \log\paren*{\exp\bracket*{ h(x, \hh(x))} -\lambda} & \text{if $y = y_{\max}$},
\end{cases} 
\end{align*}
where $\lambda = \frac{\exp\bracket*{ h(x, \hh(x))}p(x,
  \hh(x))^{\frac1{2-\mu}}-\exp\bracket*{ h(x, y_{\max})}p(x,
  y_{\max})^{\frac1{2-\mu}}}{p(x, y_{\max})^{\frac1{2-\mu}}+p(x,
  \hh(x))^{\frac1{2-\mu}}}$. The existence of such a $ h_{\lambda}$
in the hypothesis set $ \sH$ is guaranteed by the assumption that $ \sH$
is symmetry and complete. Thus, the calibration gap can be expressed
and lower-bounded as follows:
\begin{align*}
&(1-\mu)\Delta \sC_{\lsc_{\mu}, \sH}( h, x)\\
& = (1-\mu)\paren*{\sC_{\lsc_{\mu}}( h, x) - \sC^*_{\lsc_{\mu}}( \sH, x)}\\
& \geq (1-\mu)\paren*{\sC_{\lsc_{\mu}}( h, x) - \sC_{\lsc_{\mu}}( h_{\lambda}, x)}\\
& = p(x, y_{\max}) \paren*{\bracket*{\sum_{y'\in  \sY\bigcup \curl*{n+1}} e^{ h(x, y')- h(x, y_{\max})}}^{1 - \mu }-1} +p(x, \hh(x)) \paren*{\bracket*{\sum_{y'\in  \sY\bigcup \curl*{n+1}} e^{ h(x, y')- h(x, \hh(x))}}^{1 - \mu }-1}\\
&  -p(x, y_{\max}) \paren*{\bracket*{\sum_{y'\in  \sY\bigcup \curl*{n+1}}e^{ h(x, y')- h(x, \hh(x))+\lambda}}^{1 - \mu }-1} -p(x, \hh(x)) \paren*{\bracket*{\sum_{y'\in  \sY\bigcup \curl*{n+1}}e^{ h(x, y')- h(x, y_{\max})-\lambda}}^{1 - \mu }-1}\\
& = p(x, y_{\max})\bracket*{\sum_{y'\in  \sY\bigcup \curl*{n+1}} e^{ h(x, y')-e^{ h(x, y_{\max})}}}^{1 - \mu } \\
& \qquad - p(x, y_{\max})\bracket*{\frac{\sum_{y'\in  \sY\bigcup \curl*{n+1}}e^{ h(x, y')}\bracket*{p(x, y_{\max})^{\frac1{2 - \mu }}+p(x, \hh(x))^{\frac1{2 - \mu }}}}{\bracket*{e^{ h(x, y_{\max})} + e^{ h(x, \hh(x))}}p(x, y_{\max})^{\frac1{2 - \mu }}}}^{1 - \mu }\\
& \quad +p(x, \hh(x))\bracket*{\sum_{y'\in  \sY\bigcup \curl*{n+1}} e^{ h(x, y')- h(x, \hh(x))}}^{1 - \mu }\\
& \qquad - p(x, \hh(x))\bracket*{\frac{\sum_{y'\in  \sY\bigcup \curl*{n+1}}e^{ h(x, y')}\bracket*{p(x, y_{\max})^{\frac1{2 - \mu }}+p(x, \hh(x))^{\frac1{2 - \mu }}}}{\bracket*{e^{ h(x, y_{\max})} + e^{ h(x, \hh(x))}}p(x, \hh(x))^{\frac1{2 - \mu }}}}^{1 - \mu }\\
&\geq p(x, y_{\max})\bracket*{e^{ h(x, \hh(x))- h(x, y_{\max})}+1}^{1 - \mu } - p(x, y_{\max})\bracket*{\frac{p(x, y_{\max})^{\frac1{2 - \mu }}+p(x, \hh(x))^{\frac1{2 - \mu }}}{p(x, y_{\max})^{\frac1{2 - \mu }}}}^{1 - \mu }\\
& \quad +p(x, \hh(x))\bracket*{e^{ h(x, y_{\max})- h(x, \hh(x))}+1}^{1 - \mu } - p(x, \hh(x))\bracket*{\frac{p(x, y_{\max})^{\frac1{2 - \mu }}+p(x, \hh(x))^{\frac1{2 - \mu }}}{p(x, \hh(x))^{\frac1{2 - \mu }}}}^{1 - \mu }
\tag{$\sum_{y'\in  \sY\bigcup \curl*{n+1}} e^{ h(x, y')}\geq e^{ h(x, \hh(x))}+e^{ h(x, y_{\max})}$}\\
&\geq p(x, y_{\max})2^{1 - \mu } - p(x, y_{\max})^{\frac1{2 - \mu }} \bracket*{p(x, y_{\max})^{\frac1{2 - \mu }}+p(x, \hh(x))^{\frac1{2 - \mu }}}^{1 - \mu }\\
&\quad +p(x, \hh(x))2^{1 - \mu } - p(x, \hh(x))^{\frac1{2 - \mu }}\bracket*{p(x, y_{\max})^{\frac1{2 - \mu }}+p(x, \hh(x))^{\frac1{2 - \mu }}}^{1 - \mu }
\tag{minimum is attained when $e^{ h(x, \hh(x))}=e^{ h(x, y_{\max})}$}\\
& = 2^{1-\mu}\paren*{p(x, y_{\max})+p(x, \hh(x))}-\bracket*{p(x, y_{\max})^{\frac1{2 - \mu }}+p(x, \hh(x))^{\frac1{2 - \mu }}}^{2 - \mu }\\
& = 2^{2-\mu}\bracket*{\paren*{\frac{p(x, y_{\max})+p(x, \hh(x))}{2}}-\bracket*{\frac{p(x, y_{\max})^{\frac1{2 - \mu }}+p(x, \hh(x))^{\frac1{2 - \mu }}}{2}}^{2 - \mu }}\\
& \geq \frac{1-\mu}{(2-c)2^{\mu}(2-\mu)}\paren*{p(x, y_{\max}) - p(x, \hh(x))}^2
\tag{$p(x, y_{\max})+p(x, \hh(x))\leq 2-c$ and by analyzing the Taylor expansion}\\
& = \frac{1-\mu}{(2-c)2^{\mu}(2-\mu)}\Delta\sC_{\labsc, \sH}( h,x)^2 \tag{Corollary~\ref{cor:calibration_gap_score}}
\end{align*}
Thus, we have
\begin{align*}
\sE_{\labsc}( h) - \sE_{\labsc}^*( \sH) + \sM_{\labsc}( \sH)
& = \E_{X}\bracket*{\Delta \sC_{\labsc,\sH}( h, x)}\\
& \leq \E_X\bracket*{\Gamma_{\mu}\paren*{\Delta \sC_{\lsc_{\mu},\sH}( h, x)}}\\
& \leq \Gamma_{\mu}\paren*{\E_X\bracket*{\Delta \sC_{\lsc_{\mu},\sH}( h, x)}}
\tag{$\Gamma_{\mu}$ is concave}\\
& = \Gamma_{\mu}\paren*{\sE_{\lsc_{\mu}}( h)-\sE_{\lsc_{\mu}}^*( \sH) +\sM_{\lsc_{\mu}}( \sH)},
\end{align*}
where $\Gamma_{\mu}(t)=\sqrt{(2-c)2^{\mu}(2-\mu) t}$.

\paragraph{The Case Where $\mu =1$} 

For any $ h \in \sH$ and $x\in \sX$, choose hypothesis $ h_{\lambda} \in  \sH$ such that
\begin{align*}
 h_{\lambda}(x, y) = 
\begin{cases}
   h(x, y) & \text{if $y \not \in \curl*{ y_{\max},  \hh(x)}$}\\
  \log\paren*{\exp\bracket*{ h(x, y_{\max})} + \lambda} & \text{if $y =  \hh(x)$}\\
  \log\paren*{\exp\bracket*{ h(x, \hh(x))} -\lambda} & \text{if $y = y_{\max}$}
\end{cases} 
\end{align*}
where $\lambda = \frac{\exp\bracket*{ h(x, \hh(x))}p(x,
  \hh(x))-\exp\bracket*{ h(x,  y_{\max})}p(x,
  y_{\max})}{p(x, y_{\max})+p(x, \hh(x))}$. The existence of
such a $ h_{\lambda}$ in hypothesis set $ \sH$ is guaranteed by
the fact that $ \sH$ is symmetry and complete. Thus, the
calibration gap can be expressed and lower-bounded as follows:
\begin{align*}
& \Delta \sC_{\lsc_{\mu}, \sH}( h, x)\\
& = \sC_{\lsc_{\mu}}( h, x) - \sC^*_{\lsc_{\mu}}( \sH, x)\\
& \geq \sC_{\lsc_{\mu}}( h, x) - \sC_{\lsc_{\mu}}( h_{\lambda}, x)\\
& =-p(x, y_{\max}) \log\bracket*{e^{ h(x, y_{\max})}}-p(x, \hh(x)) \log\bracket*{e^{ h(x, \hh(x))}}\\
& \qquad +p(x, y_{\max})\log\bracket*{ e^{ h(x, \hh(x))}-\lambda}+p(x, \hh(x))\log\bracket*{e^{ h(x, y_{\max})}+\lambda}\\
& = p(x, y_{\max})\log\bracket*{\frac{\bracket*{e^{ h(x, y_{\max})} + e^{ h(x, \hh(x))}}p(x, y_{\max})}{e^{ h(x, y_{\max})}\bracket*{p(x, y_{\max})+p(x, \hh(x))}}}+p(x, \hh(x))\log\bracket*{\frac{\bracket*{e^{ h(x, y_{\max})} + e^{ h(x, \hh(x))}}p(x, \hh(x))}{ e^{ h(x, \hh(x))}\bracket*{p(x, y_{\max})+p(x, \hh(x))}}}\\
&\geq p(x, y_{\max})\log\bracket*{\frac{2p(x, y_{\max})}{p(x, y_{\max})+p(x, \hh(x))}} + p(x, \hh(x))\log\bracket*{\frac{2p(x, \hh(x))}{p(x, y_{\max})+p(x, \hh(x))}}
\tag{minimum is attained when $e^{ h(x, \hh(x))}=e^{ h(x, y_{\max})}$}\\
& \geq \bracket*{p(x, y_{\max})+p(x, \hh(x))} \times \frac12\bracket*{ \abs*{\frac{p(x, y_{\max})}{p(x, y_{\max})+p(x, \hh(x))}-\frac12}+\abs*{\frac{p(x, \hh(x))}{p(x, y_{\max})+p(x, \hh(x))}-\frac12}}^2
\tag{Pinsker’s inequality \citep[Proposition~E.7]{MohriRostamizadehTalwalkar2018}}\\
& = \bracket*{p(x, y_{\max})+p(x, \hh(x))} \times \frac12 \bracket*{\frac{p(x, y_{\max})-p(x, \hh(x))}{p(x, y_{\max})+p(x, \hh(x))}}^2
\tag{$p(x, y_{\max})\geq p(x, \hh(x))$}\\\\
& \geq \frac1{2(2-c)} \paren*{p(x, y_{\max}) - p(x, \hh(x))}^2\\
\tag{$p(x, y_{\max})+p(x, \hh(x))\leq 2-c$}\\
& = \frac1{2(2-c)}\Delta\sC_{\labsc, \sH}( h,x)^2 \tag{Corollary~\ref{cor:calibration_gap_score}}
\end{align*}
Thus, we have
\begin{align*}
\sE_{\labsc}( h) - \sE_{\labsc}^*( \sH) + \sM_{\labsc}( \sH)
& = \E_{X}\bracket*{\Delta \sC_{\labsc,\sH}( h, x)}\\
& \leq \E_X\bracket*{\Gamma_{\mu}\paren*{\Delta \sC_{\lsc_{\mu},\sH}( h, x)}}\\
& \leq \Gamma_{\mu}\paren*{\E_X\bracket*{\Delta \sC_{\lsc_{\mu},\sH}( h, x)}}
\tag{$\Gamma_{\mu}$ is concave}\\
& = \Gamma_{\mu}\paren*{\sE_{\lsc_{\mu}}( h)-\sE_{\lsc_{\mu}}^*( \sH) +\sM_{\lsc_{\mu}}( \sH)},
\end{align*}
where $\Gamma_{\mu}(t)=\sqrt{2(2-c)t }$.

\paragraph{The Case Where $\mu \in [2,\plus \infty)$} 

For any $ h \in \sH$ and $x\in \sX$, choose hypothesis $ h_{\lambda} \in  \sH$ such that
\begin{align*}
 h_{\lambda}(x, y) = 
\begin{cases}
   h(x, y) & \text{if $y \not \in \curl*{ y_{\max},  \hh(x)}$}\\
  \log\paren*{\exp\bracket*{ h(x, y_{\max})} + \lambda} & \text{if $y =  \hh(x)$}\\
  \log\paren*{\exp\bracket*{ h(x, \hh(x))} -\lambda} & \text{if $y = y_{\max}$}
\end{cases} 
\end{align*}
where $\lambda = -\exp\bracket*{ h(x, y_{\max})}$. The existence of
such a $ h_{\lambda}$ in hypothesis set $ \sH$ is guaranteed by
the fact that $ \sH$ is symmetry and complete. Thus, the
calibration gap can be expressed and lower-bounded as follows:
\begin{align*}
& (\mu-1)\Delta \sC_{\lsc_{\mu}, \sH}( h, x)\\
& = (\mu-1) \paren*{ \sC_{\lsc_{\mu}}( h, x) - \sC^*_{\lsc_{\mu}}( \sH, x)}\\
& \geq (\mu-1)\paren*{\sC_{\lsc_{\mu}}( h, x) - \sC_{\lsc_{\mu}}( h_{\lambda}, x)}\\
& =p(x, y_{\max}) \paren*{1-\bracket*{\frac{e^{ h(x, y_{\max})}}{\sum_{y'\in  \sY\bigcup \curl*{n+1}}e^{ h(x, y')}}}^{\mu-1}} +p(x, \hh(x)) \paren*{1-\bracket*{\frac{e^{ h(x, \hh(x))}}{\sum_{y'\in  \sY\bigcup \curl*{n+1}}e^{ h(x, y')}}}^{\mu-1}}\\
&  -p(x, y_{\max}) \paren*{1-\bracket*{\frac{e^{ h(x, \hh(x))}-\mu}{\sum_{y'\in  \sY\bigcup \curl*{n+1}}e^{ h(x, y')}}}^{\mu-1}} -p(x, \hh(x)) \paren*{1-\bracket*{\frac{e^{ h(x, y_{\max})}+\mu}{\sum_{y'\in  \sY\bigcup \curl*{n+1}}e^{ h(x, y')}}}^{\mu-1}}\\
& = p(x, y_{\max})\bracket*{\frac{e^{ h(x, \hh(x))}+e^{ h(x, y_{\max})}}{\sum_{y'\in  \sY\bigcup \curl*{n+1}}e^{h(x, y')}}}^{\mu-1}-p(x, y_{\max})\bracket*{\frac{e^{ h(x, y_{\max})}}{\sum_{y'\in  \sY\bigcup \curl*{n+1}}e^{ h(x, y')}}}^{\mu-1}\\
& \quad -p(x, \hh(x))\bracket*{\frac{e^{ h(x, \hh(x))}}{\sum_{y'\in  \sY\bigcup \curl*{n+1}}e^{ h(x, y')}}}^{\mu-1}\\
&\geq p(x, y_{\max})\bracket*{\frac{e^{ h(x, \hh(x))}}{\sum_{y'\in  \sY\bigcup \curl*{n+1}}e^{ h(x, y')}}}^{\mu-1}-p(x, \hh(x))\bracket*{\frac{e^{ h(x, \hh(x))}}{\sum_{y'\in  \sY\bigcup \curl*{n+1}}e^{ h(x, y')}}}^{\mu-1}
\tag{$(x+y)^{\mu-1}\geq x^{\mu -1 } + y^{\mu-1}$, $\forall\, x, y\geq 0$, $\mu\geq 2$}\\
&\geq \frac{1}{(n+1)^{\mu-1}}
\paren*{p(x, y_{\max}) - p(x, \hh(x))} \tag{$\frac{e^{ h(x, \hh(x))}}{\sum_{y'\in  \sY\bigcup \curl*{n+1}}e^{h(x, y')}}\geq \frac1{n+1}$}\\
& = \frac{1}{(n+1)^{\mu-1}}\Delta\sC_{\labsc, \sH}( h,x) \tag{Corollary~\ref{cor:calibration_gap_score}}
\end{align*}
Thus, we have
\begin{align*}
\sE_{\labsc}( h) - \sE_{\labsc}^*( \sH) + \sM_{\labsc}( \sH)
& = \E_{X}\bracket*{\Delta \sC_{\labsc,\sH}( h, x)}\\
& \leq \E_X\bracket*{\Gamma_{\mu}\paren*{\Delta \sC_{\lsc_{\mu},\sH}( h, x)}}\\
& \leq \Gamma_{\mu}\paren*{\E_X\bracket*{\Delta \sC_{\lsc_{\mu},\sH}( h, x)}}
\tag{$\Gamma_{\mu}$ is concave}\\
& = \Gamma_{\mu}\paren*{\sE_{\lsc_{\mu}}( h)-\sE_{\lsc_{\mu}}^*( \sH) +\sM_{\lsc_{\mu}}( \sH)},
\end{align*}
where $\Gamma_{\mu}(t)=(\mu - 1)(n+1)^{\mu - 1} t$.

\paragraph{The Case Where $\mu \in (1,2)$}

For any $ h \in \sH$ and $x\in \sX$, choose hypothesis $ h_{\lambda} \in  \sH$ such that
\begin{align*}
 h_{\lambda}(x, y) = 
\begin{cases}
   h(x, y) & \text{if $y \not \in \curl*{ y_{\max},  \hh(x)}$}\\
  \log\paren*{\exp\bracket*{ h(x, y_{\max})} + \lambda} & \text{if $y =  \hh(x)$}\\
  \log\paren*{\exp\bracket*{ h(x, \hh(x))} -\lambda} & \text{if $y = y_{\max}$}
\end{cases} 
\end{align*}
where $\lambda = \frac{\exp\bracket*{ h(x, \hh(x))}p(x,
  y_{\max})^{\frac1{\mu-2}}-\exp\bracket*{ h(x,
    y_{\max})}p(x, \hh(x))^{\frac1{\mu-2}}}{p(x,
  y_{\max})^{\frac1{\mu-2}}+p(x, \hh(x))^{\frac1{\mu-2}}}$. The
existence of such a $ h_{\lambda}$ in hypothesis set $ \sH$ is
guaranteed by the fact that $ \sH$ is symmetry and complete. Thus,
the calibration gap can be lower-bounded as follows:
\begin{align*}
& (\mu-1)\Delta \sC_{\lsc_{\mu}, \sH}( h, x)\\
& \geq (\mu-1)\paren*{\sC_{\lsc_{\mu}}( h, x) - \sC_{\lsc_{\mu}}( h_{\lambda}, x)}\\
& = p(x, y_{\max}) \paren*{1-\bracket*{\sum_{y'\in  \sY\bigcup \curl*{n+1}} e^{ h(x, y')- h(x, y_{\max})}}^{1 - \mu }} +p(x, \hh(x)) \paren*{1-\bracket*{\sum_{y'\in  \sY\bigcup \curl*{n+1}} e^{ h(x, y')- h(x, \hh(x))}}^{1 - \mu }}\\
&  -p(x, y_{\max}) \paren*{1-\bracket*{\sum_{y'\in  \sY\bigcup \curl*{n+1}}e^{ h(x, y')- h(x, \hh(x))+\lambda}}^{1 - \mu }} -p(x, \hh(x)) \paren*{1-\bracket*{\sum_{y'\in  \sY\bigcup \curl*{n+1}}e^{ h(x, y')- h(x, y_{\max})-\lambda}}^{1 - \mu }}\\
& = -p(x, y_{\max})\bracket*{\sum_{y'\in  \sY\bigcup \curl*{n+1}} e^{ h(x, y')-e^{ h(x, y_{\max})}}}^{1 - \mu } - p(x, \hh(x))\bracket*{\sum_{y'\in  \sY\bigcup \curl*{n+1}} e^{ h(x, y')- h(x, \hh(x))}}^{1 - \mu }\\
& \quad + p(x, y_{\max})\bracket*{\frac{\sum_{y'\in  \sY\bigcup \curl*{n+1}}e^{ h(x, y')}\bracket*{p(x, y_{\max})^{\frac1{\mu-2}}+p(x, \hh(x))^{\frac1{\mu-2 }}}}{\bracket*{e^{ h(x, y_{\max})} + e^{ h(x, \hh(x))}}p(x, \hh(x))^{\frac1{\mu-2}}}}^{1 - \mu }\\
& \qquad + p(x, \hh(x))\bracket*{\frac{\sum_{y'\in  \sY\bigcup \curl*{n+1}}e^{ h(x, y')}\bracket*{p(x, y_{\max})^{\frac1{\mu-2 }}+p(x, \hh(x))^{\frac1{\mu-2}}}}{\bracket*{e^{ h(x, y_{\max})} + e^{ h(x, \hh(x))}}p(x, y_{\max})^{\frac1{\mu-2}}}}^{1 - \mu }\\
&\geq \frac{1}{(n+1)^{\mu-1}}\paren*{p(x, y_{\max})\bracket*{\frac{\bracket*{e^{h(x, y_{\max})} + e^{h(x, \hh(x))}}p(x, \hh(x))^{\frac1{\mu-2}}}{e^{h(x, \hh(x))}\bracket*{p(x, y_{\max})^{\frac1{\mu-2}}+p(x, \hh(x))^{\frac1{\mu-2}}}}}^{\mu-1}-p(x, y_{\max})\bracket*{e^{h(x, y_{\max})-h(x, \hh(x))}}^{\mu-1}}\\
&\quad +\frac{1}{(n+1)^{\mu-1}}\paren*{p(x, \hh(x))\bracket*{\frac{\bracket*{e^{h(x, y_{\max})} + e^{h(x, \hh(x))}}p(x, y_{\max})^{\frac1{\mu-2}}}{e^{h(x, \hh(x))}\bracket*{p(x, y_{\max})^{\frac1{\mu-2}}+p(x, \hh(x))^{\frac1{\mu-2}}}}}^{\mu-1}-p(x, \hh(x))}
\tag{$\frac{e^{ h(x, \hh(x))}}{\sum_{y'\in  \sY\bigcup \curl*{n+1}} e^{ h(x, y')}}\geq \frac{1}{(n+1)^{\mu-1}}$}\\
&\geq \frac{1}{(n+1)^{\mu-1}}\paren*{p(x, y_{\max})\bracket*{\frac{2p(x, \hh(x))^{\frac1{\mu-2}}}{p(x, y_{\max})^{\frac1{\mu-2}}+p(x, \hh(x))^{\frac1{\mu-2}}}}^{\mu-1}-p(x, y_{\max})}\\
& \quad + \frac{1}{(n+1)^{\mu-1}}\paren*{p(x, \hh(x))\bracket*{\frac{2p(x, y_{\max})^{\frac1{\mu-2}}}{p(x, y_{\max})^{\frac1{\mu-2}}+p(x, \hh(x))^{\frac1{\mu-2}}}}^{\mu-1}-p(x, \hh(x))}
\tag{minimum is attained when $e^{ h(x, \hh(x))}=e^{ h(x, y_{\max})}$}\\
& = \frac{1}{(n+1)^{\mu-1}}\paren*{2^{\mu-1}\bracket*{p(x, y_{\max})^{\frac1{2-\mu}}+p(x, \hh(x))^{\frac1{2-\mu}}}^{2-\mu}-p(x, y_{\max})-p(x, \hh(x))}\\
& = \frac{2}{(n+1)^{\mu-1}}\paren*{\bracket*{\frac{p(x, y_{\max})^{\frac1{2-\mu}}+p(x, \hh(x))^{\frac1{2-\mu}}}{2}}^{2-\mu}-\frac{p(x, y_{\max})+p(x, \hh(x))}{2}}\\
& \geq \frac{\mu-1}{2(2-c)(n+1)^{\mu-1}}\paren*{p(x, y_{\max}) - p(x, \hh(x))}^2
\tag{$p(x, y_{\max})+p(x, \hh(x))\leq 2-c$ and by analyzing the Taylor expansion}\\
& = \frac{\mu-1}{2(2-c)(n+1)^{\mu-1}}\Delta\sC_{\labsc, \sH}( h,x)^2 \tag{Corollary~\ref{cor:calibration_gap_score}}
\end{align*}
Thus, we have
\begin{align*}
\sE_{\labsc}( h) - \sE_{\labsc}^*( \sH) + \sM_{\labsc}( \sH)
& = \E_{X}\bracket*{\Delta \sC_{\labsc,\sH}( h, x)}\\
& \leq \E_X\bracket*{\Gamma_{\mu}\paren*{\Delta \sC_{\lsc_{\mu},\sH}( h, x)}}\\
& \leq \Gamma_{\mu}\paren*{\E_X\bracket*{\Delta \sC_{\lsc_{\mu},\sH}( h, x)}}
\tag{$\Gamma_{\mu}$ is concave}\\
& = \Gamma_{\mu}\paren*{\sE_{\lsc_{\mu}}( h)-\sE_{\lsc_{\mu}}^*( \sH) +\sM_{\lsc_{\mu}}( \sH)},
\end{align*}
where $\Gamma_{\mu}(t)=\sqrt{2(2-c)(n+1)^{\mu-1}t }$.
\end{proof}

\subsection{Characterization of Minimizability
  Gaps (Theorem~\ref{Thm:gap-upper-bound-determi})}
\label{app:gap-upper-bound-determi}
\GapUpperBoundDetermi*
\begin{proof}
Let $s_{ h}(x, y)=\frac{e^{ h(x, y)}}{\sum_{y'\in  \sY\bigcup \curl*{n+1}} h(x, y')}\in [0,1]$, $\forall y\in  \sY$.
By the definition, for any deterministic distribution, $\sM_{\lsc_{\mu}}( \sH)
 = \sE^*_{\lsc_{\mu}}( \sH) - \mathbb{E}_{X} \bracket* {\inf_{ h \in  \sH}\sC_{\lsc_{\mu}}( \sH, x)}$, where 
\begin{align*}
&\sC_{\lsc_{\mu}}( h,x)\\
& = \sum_{y\in \sY\bigcup \curl*{n+1}} p(x, y)  \ell_{\mu}( h,x, y)\\
& =  \ell_{\mu}( h,x, y_{\max}) + (1-c) \ell_{\mu}( h,x,n+1)\\
& =\begin{cases}
\frac{1}{1 - \mu} \paren*{\bracket*{\sum_{y'\in \sY\bigcup \curl*{n+1}} e^{{ h(x, y') -  h(x, y_{\max})}}}^{1 - \mu} - 1} + (1-c) \frac{1}{1 - \mu} \paren*{\bracket*{\sum_{y'\in \sY\bigcup \curl*{n+1}} e^{{ h(x, y') -  h(x, n+1)}}}^{1 - \mu} - 1} & \mu\neq 1  \\
\log\paren*{\sum_{y'\in  \sY\bigcup \curl*{n+1}} e^{ h(x, y') -  h(x, y_{\max})}} + (1-c) \log\paren*{\sum_{y'\in  \sY\bigcup \curl*{n+1}} e^{ h(x, y') -  h(x, n+1)}} & \mu = 1.
\end{cases}\\
& = \begin{cases}
\frac{1}{1 - \mu} \paren*{s_{ h}\paren*{x, y_{\max}}^{\mu-1} - 1} + (1-c) \frac{1}{1 - \mu} \paren*{\bracket*{s_{ h}\paren*{x,n+1}}^{\mu-1} - 1} & \mu\neq 1  \\
-\log\paren*{s_{ h}\paren*{x, y_{\max}}} - (1-c) \log\paren*{s_{ h}\paren*{x,n+1}} & \mu = 1.
\end{cases}
\end{align*}
Since $0\leq s_{ h}(x, y_{\max})+s_{ h}(x,n+1)\leq 1$, by taking the partial derivative, we obtain that the minimum can be attained by
\begin{align}
\label{eq:min}
\begin{cases}
  s^*_{ h}(x, y_{\max})=\frac{1}{1+(1-c)^{\frac{1}{2-\mu}}} \text{ and } s^*_{ h}(x,n+1)
  =\frac{(1-c)^{\frac{1}{2-\mu}}}{1+(1-c)^{\frac{1}{2-\mu}}} & \mu \neq 2\\
s^*_{ h}(x, y_{\max}) = 1 \text{ and } s^*_{ h}(x,n+1)=0 & \mu =2.
\end{cases}
\end{align}
Since $ \sH$ is symmetric and complete, there exists $ h \in 
\sH$ such that \eqref{eq:min} is achieved. Therefore,
\begin{align*}
\inf_{ h \in  \sH}\sC_{\lsc_{\mu}}( \sH, x) 
& =
\begin{cases}
\frac{1}{1 - \mu} \paren*{s^*_{ h}\paren*{x, y_{\max}}^{\mu-1} - 1} + (1-c) \frac{1}{1 - \mu} \paren*{\bracket*{s^*_{ h}\paren*{x,n+1}}^{\mu-1} - 1} & \mu\neq 1  \\
-\log\paren*{s^*_{ h}\paren*{x, y_{\max}}} - (1-c) \log\paren*{s^*_{ h}\paren*{x,n+1}} & \mu = 1
\end{cases}\\
& =
 \begin{cases}
   \frac{1}{1 - \mu} \bracket*{\bracket*{1+\paren*{1-c}^{\frac{1}{2-\mu}}}^{2 - \mu}
     \mspace{-20mu} - (2-c)} & \mu \notin \curl*{1,2}\\
-\log \paren*{\frac{1}{2-c}}-(1-c)\log \paren*{\frac{1-c}{2-c}}
 & \mu=1\\
1-c & \mu =2.
\end{cases}
\end{align*}
Since $\inf_{ h \in  \sH}\sC_{\lsc_{\mu}}( \sH, x) $ is
independent of $x$, we obtain that $\mathbb{E}_{X} \bracket*
{\inf_{ h \in  \sH}\sC_{\lsc_{\mu}}( \sH, x)}=\inf_{ h \in
   \sH}\sC_{\lsc_{\mu}}( \sH, x)$,
which completes the proof.
\end{proof}

\subsection{Proof of General Transformation of \texorpdfstring{$\sH$}{H}-Consistency Bounds (Theorem~\ref{Thm:bound-score})}
\label{app:bound-score}
\BoundScore*
\begin{proof}
By Lemma~\ref{lemma:calibration_gap_score}, the calibration gap of
$\labsc$ can be expressed and upper-bounded as follows:
\begin{align*}
& \Delta \sC_{\labsc,\sH}( h, x)\\
& = \sC_{\labsc}( h, x)-\sC_{\labsc}^*(\sH,x)\\
& = \max_{y\in \mathsf H(x)} p(x, y) - p(x, \hh(x))\\
& = (2 - c) \paren*{\max_{y\in \mathsf H(x)} \ov p(x, y) -  \ov p(x,  \hh(x))}\tag{Let $ \ov p(x, y) = \frac{p(x, y)}{2 - c}\1_{y\in \sY}+\frac{1 - c}{2 - c}\1_{y = n + 1}$}\\
& =(2 - c) \Delta \sC_{\ell_{0-1},\sH}( h, x)\tag{By \citep[Lemma~3]{awasthi2022multi}}\\
& \leq (2 - c)\Gamma\paren*{\Delta \sC_{ \ell, \sH}( h, x)} \tag{By $ \sH$-consistency
bound of $ \ell$}\\
& = (2 - c)\Gamma\paren*{\sum_{y\in \sY \bigcup \curl*{n+1}} \ov p(x, y) \ell( h, x, y)-\inf_{ h \in \sH}\sum_{y\in \sY \bigcup \curl*{n+1}} \ov p(x, y) \ell( h, x, y)} \\
& = (2 - c)\Gamma\paren*{\sum_{y\in \sY}\frac{p(x, y)}{2 - c} \ell( h, x, y)+\frac{1 - c}{2 - c} \ell( h, x, n + 1)-\inf_{ h \in \sH}\paren*{\sum_{y\in \sY}\frac{p(x, y)}{2 - c} \ell( h, x, y)+\frac{1 - c}{2 - c} \ell( h, x, n + 1)}}\tag{Plug in $ \ov p(x, y) = \frac{p(x, y)}{2 - c}\1_{y\in \sY}+\frac{1 - c}{2 - c}\1_{y = n + 1}$} \\
& = (2 - c)\Gamma\paren*{\frac{1}{2 - c}\bracket*{\sum_{y\in \sY}p(x, y)\lsc( h, x, y)-\inf_{ h \in \sH}\sum_{y\in \sY}p(x, y)\lsc( h, x, y)}}\\
& = (2 - c)\Gamma\paren*{\frac{1}{2 - c}\Delta \sC_{\lsc,\sH}( h, x)}.
\end{align*}
Thus, we have
\begin{align*}
\sE_{\labsc}( h) - \sE_{\labsc}^*( \sH) + \sM_{\labsc}( \sH)
& = \E_{X}\bracket*{\Delta \sC_{\labsc,\sH}( h, x)}\\
& \leq \E_X\bracket*{(2 - c)\Gamma\paren*{\frac{1}{2 - c}\Delta \sC_{\lsc,\sH}( h, x)}}\\
& \leq (2 - c) \Gamma\paren*{\frac{1}{2 - c} \E_X\bracket*{\Delta \sC_{\lsc,\sH}( h, x)}}
\tag{$\Gamma$ is concave}\\
& = (2 - c) \Gamma\paren*{\frac{\sE_{\lsc}( h)-\sE_{\lsc}^*( \sH) +\sM_{\lsc}( \sH)}{2 - c}},
\end{align*}
which completes the proof.
\end{proof}

\subsection{Proof of \texorpdfstring{$\sH$}{H}-Consistency Bounds for Two-Stage Surrogates (Theorem~\ref{Thm:bound-general-two-step})}
\label{app:bound-general-two-step}
\BoundGenralTwoStep*
\begin{proof}
For any $h=(h_{\sY},h_{n+1})$,
we can rewrite $\sE_{\labs}(h)-\sE^*_{\labs}\paren*{\sH}+\sM_{\labs}(\sH)$ as 
\begin{equation}
\label{eq:expression-two-step}
\begin{aligned}
& \sE_{\labs}(h)-\sE^*_{\labs}\paren*{\sH}+\sM_{\labs}(\sH)\\
& =  \E_{X}\bracket*{\sC_{\labs}(h,x)-\sC^*_{\labs}(\sH,x)} \\
& =  \E_{X}\bracket*{\sC_{\labs}(h,x)-\inf_{h_{n+1}\in \sH_{n+1}}\sC_{\labs}(h,x)+\inf_{h_{n+1}\in \sH_{n+1}}\sC_{\labs}(h,x)-\sC^*_{\labs}(\sH, x)}\\
& = \E_{X}\bracket*{\sC_{\labs}(h,x)-\inf_{h_{n+1}\in \sH_{n+1}}\sC_{\labs}(h,x)}+\E_{X}\bracket*{\inf_{h_{n+1}\in \sH_{n+1}}\sC_{\labs}(h,x)-\sC^*_{\labs}(\sH,x)}
\end{aligned}
\end{equation}
By the assumptions, we have
\begin{align*}
& \sC_{\labs}(h,x)-\inf_{h_{n+1}\in \sH_{n+1}}\sC_{\labs}(h, x)\\
& = \sum_{y\in \sY}p(x, y)\1_{ \hh_{\sY}(x)\neq y}\1_{ \hh(x)\neq n + 1} + c \1_{ \hh(x) = n + 1}-\inf_{h_{n+1}\in \sH_{n+1}}\paren*{\sum_{y\in \sY}p(x, y)\1_{ \hh_{\sY}(x)\neq y}\1_{ \hh(x)\neq n + 1} + c \1_{ \hh(x) = n + 1}}\\
& = \paren*{\sum_{y\in \sY}p(x, y)\1_{\hh_{\sY}(x)\neq y} + c}\times \bigg[\eta(x)\ell_{0-1}^{\rm{binary}}\paren*{h_{n+1}-\max_{y\in \sY}h_{\sY}(x, y),x,+1}\\
& \quad +(1-\eta(x))\ell_{0-1}^{\rm{binary}}\paren*{h_{n+1}-\max_{y\in \sY}h_{\sY}(x, y),x,-1}\\
& \qquad -\inf_{h_{n+1}\in \sH_{n+1}}\paren*{\eta(x)\ell_{0-1}^{\rm{binary}}\paren*{h_{n+1}-\max_{y\in \sY}h_{\sY}(x, y),x,+1}+(1-\eta(x))\ell_{0-1}^{\rm{binary}}\paren*{h_{n+1}-\max_{y\in \sY}h_{\sY}(x, y),x,-1}}\bigg]\tag{Let $\eta(x) = \frac{\sum_{y\in \sY}p(x, y)\1_{\hh_{\sY}(x)\neq y}}{\sum_{y\in \sY}p(x, y)\1_{\hh_{\sY}(x)\neq y} + c}$}\\
& \leq \paren*{\sum_{y\in \sY}p(x, y)\1_{\hh_{\sY}(x)\neq y} + c}\Gamma_2\bigg[\eta(x)\Phi\paren*{h_{n+1}(x)-\max_{y\in \sY}h_{\sY}(x, y)}+(1-\eta(x))\Phi\paren*{\max_{y\in \sY}h_{\sY}(x, y)-h_{n+1}(x)}\\
&\quad-\inf_{h_{n+1}\in \sH_{n+1}}\paren*{\eta(x)\Phi\paren*{h_{n+1}(x)-\max_{y\in \sY}h_{\sY}(x, y)}+(1-\eta(x))\Phi\paren*{\max_{y\in \sY}h_{\sY}(x, y)-h_{n+1}(x)}}\bigg]\tag{By $\sH_{n+1}^{\tau}$-consistency bounds of $\Phi$ under assumption, $\tau=\max_{y\in \sY}h_{\sY}(x, y)$}\\
& =  \paren*{\sum_{y\in \sY}p(x, y)\1_{\hh_{\sY}(x)\neq y} + c} \Gamma_2\paren*{\frac{\sum_{y\in \sY}p(x, y)\ell_{h_{\sY}}(h_{n+1},x, y)-\inf_{h_{n+1}\in \sH_{n+1}}\sum_{y\in \sY}p(x, y)\ell_{h_{\sY}}(h_{n+1},x, y)}{ \sum_{y\in \sY}p(x, y)\1_{\hh_{\sY}(x)\neq y} + c}}\tag{ $\eta(x) = \frac{\sum_{y\in \sY}p(x, y)\1_{\hh_{\sY}(x)\neq y}}{\sum_{y\in \sY}p(x, y)\1_{\hh_{\sY}(x)\neq y} + c}$ and formulation \eqref{eq:ell-Phi-h}}\\
& =  \paren*{\sum_{y\in \sY}p(x, y)\1_{\hh_{\sY}(x)\neq y} + c} \Gamma_2\paren*{\frac{\sC_{\ell_{h_{\sY}}}(h_{n+1},x)-\sC^*_{\ell_{h_{\sY}}}(\sH_{n+1},x)}{ \sum_{y\in \sY}p(x, y)\1_{\hh_{\sY}(x)\neq y} + c}}\\
& \leq
\begin{cases}
\Gamma_2\paren*{\sC_{\ell_{h_{\sY}}}(h_{n+1},x)-\sC^*_{\ell_{h_{\sY}}}(\sH_{n+1},x)} & \text{when $\Gamma_2$ is linear}\\
(1+c)\Gamma_2\paren*{\frac {\sC_{\ell_{h_{\sY}}}(h_{n+1},x)-\sC^*_{\ell_{h_{\sY}}}(\sH_{n+1},x)}{c}} & \text{otherwise}
\end{cases}\\
\tag{$c\leq \sum_{y\in \sY}p(x, y)\1_{\hh_{\sY}(x)\neq y} + c\leq 1+c$ and $\Gamma_2$ is non-decreasing}\\
& = \begin{cases}
\Gamma_2\paren*{\Delta\sC_{\ell_{h_{\sY}},\sH_{n+1}}(h_{n+1},x)}  & \text{when $\Gamma_2$ is linear}\\
(1+c)\Gamma_2\paren*{\frac {\Delta\sC_{\ell_{h_{\sY}},\sH_{n+1}}(h_{n+1},x)}{c}} & \text{otherwise}
\end{cases}
\end{align*}
and 
\begin{align*}
& \inf_{h_{n+1}\in \sH_{n+1}}\sC_{\labs}(h,x)-\sC^*_{\labs}(\sH,x)\\
& = \inf_{h_{n+1}\in \sH_{n+1}}\sC_{\labs}(h,x)-\inf_{h_{\sY}\in \sH_{\sY},h_{n+1}\in \sH_{n+1}}\sC_{\labs}(h,x)\\
& = \inf_{h_{n+1}\in \sH_{n+1}} \paren*{\sum_{y\in \sY}p(x, y)\1_{ \hh_{\sY}(x)\neq y}\1_{ \hh(x)\neq n + 1} + c \1_{ \hh(x) = n + 1}}\\
&\quad -\inf_{h_{\sY}\in \sH_{\sY},h_{n+1}\in \sH_{n+1}} \paren*{\sum_{y\in \sY}p(x, y)\1_{ \hh_{\sY}(x)\neq y}\1_{ \hh(x)\neq n + 1} + c \1_{ \hh(x) = n + 1}}\\
& = \inf_{h_{n+1}\in \sH_{n+1}} \paren*{\sum_{y\in \sY}p(x, y)\1_{ \hh_{\sY}(x)\neq y}\1_{ \hh(x)\neq n + 1} + c \1_{ \hh(x) = n + 1}}\\
&\quad -\inf_{h_{n+1}\in \sH_{n+1}} \paren*{\inf_{h_{\sY}\in \sH_{\sY}}\sum_{y\in \sY}p(x, y)\1_{ \hh_{\sY}(x)\neq y}\1_{ \hh(x)\neq n + 1} + c \1_{ \hh(x) = n + 1}}\\
& = \min\curl*{\sum_{y\in \sY}p(x, y)\1_{\hh_{\sY}(x)\neq y},c}-\min\curl*{\inf_{h_{\sY}\in \sH_{\sY}}\sum_{y\in \sY}p(x, y)\1_{\hh_{\sY}(x)\neq y},c}\\
& \leq \sum_{y\in \sY}p(x, y)\1_{\hh_{\sY}(x)\neq y} -\inf_{h_{\sY}\in \sH_{\sY}}\sum_{y\in \sY}p(x, y)\1_{\hh_{\sY}(x)\neq y}\\
& = \sC_{\ell_{0-1}}(h_{\sY}, x)-\sC^*_{\ell_{0-1}}(\sH_{\sY},x)\\
& = \Delta\sC_{\ell_{0-1},\sH_{\sY}}(h_{\sY}, x)\\
& \leq \Gamma_1\paren*{\Delta\sC_{\ell,\sH_{\sY}}(h_{\sY}, x)}.
\tag{By $\sH_{\sY}$-consistency bounds of $\ell$ under assumption}
\end{align*}
Therefore, by \eqref{eq:expression-two-step}, we obtain
\begin{align*}
& \sE_{\labs}(h)-\sE^*_{\labs}\paren*{\sH_{\sY}}+\sM_{\labs}(\sH_{\sY})\\
& \leq 
\begin{cases}
\E_X\bracket*{\Gamma_2\paren*{\Delta\sC_{\ell_{h_{\sY}},\sH_{n+1}}(h_{n+1},x)}} + \E_X\bracket*{\Gamma_1\paren*{\Delta\sC_{\ell,\sH_{\sY}}(h_{\sY}, x)}} & \text{when $\Gamma_2$ is linear}\\
(1+c)\E_X\bracket*{\Gamma_2\paren*{\frac {\Delta\sC_{\ell_{h_{\sY}},\sH_{n+1}}(h_{n+1},x)}{c}}} + \E_X\bracket*{\Gamma_1\paren*{\Delta\sC_{\ell,\sH_{\sY}}(h_{\sY}, x)}} & \text{otherwise}
\end{cases}\\
& \leq 
\begin{cases}
\Gamma_2\paren*{\E_X\bracket*{\Delta\sC_{\ell_{h_{\sY}},\sH_{n+1}}(h_{n+1},x)}} + \Gamma_1\paren*{\E_X\bracket*{\Delta\sC_{\ell,\sH_{\sY}}(h_{\sY}, x)}} & \text{when $\Gamma_2$ is linear}\\
(1+c)\Gamma_2\paren*{\frac1c\E_X\bracket*{\Delta\sC_{\ell_{h_{\sY}},\sH_{n+1}}(h_{n+1},x)}} + \Gamma_1\paren*{\E_X\bracket*{\Delta\sC_{\ell,\sH_{\sY}}(h_{\sY}, x)}} & \text{otherwise}
\end{cases}
\tag{$\Gamma_1$ and $\Gamma_2$ are concave}\\
& =
\begin{cases}
\Gamma_1\paren*{\sE_{\ell}(h)-\sE_{\ell}^*(\sH_{\sY}) +\sM_{\ell}(\sH_{\sY})} + \Gamma_2\paren*{\sE_{\ell_{h_{\sY}}}(h_{n+1})-\sE_{\ell_{h_{\sY}}}^*(\sH_{n+1}) +\sM_{\ell_{h_{\sY}}}(\sH_{n+1})} & \text{when $\Gamma_2$ is linear}\\
(\Gamma_1\paren*{\sE_{\ell}(h)-\sE_{\ell}^*(\sH_{\sY}) +\sM_{\ell}(\sH_{\sY})} + (1+c)\Gamma_2\paren*{\frac{\sE_{\ell_{h_{\sY}}}(h_{n+1})-\sE_{\ell_{h_{\sY}}}^*(\sH_{n+1}) +\sM_{\ell_{h_{\sY}}}(\sH_{n+1})}{c}} & \text{otherwise},
\end{cases}
\end{align*}
which completes the proof.

\end{proof}

\subsection{Proof of Realizable \texorpdfstring{$\sH$}{H}-Consistency
  for Two-Stage Surrogates
  (Theorem~\ref{Thm:bound-general-two-step-realizable})}
\label{app:bound-general-two-step-realizable}

\begin{definition}[\textbf{Realizable $\sH$-consistency}]
\label{def:rel-consistency} Let $\hat h$ denote a hypothesis attaining the infimum of the expected surrogate loss, $\sE_{\sfL}(\hat h) = \sE^*_{\sfL}(\sH)$. A score-based abstention surrogate loss $\sfL$ is said to be
\emph{realizable $\sH$-consistent} with respect to the abstention loss
$\labs$ if, for any distribution in which an optimal hypothesis $h^*$
exists in $\sH$ with an abstention loss of zero (i.e.,
$\sE_{\labs}(h^*)=0$), we have $\sE_{\labs}(\hat h) = 0$.
\end{definition}

Next, we demonstrate that our proposed two-stage score-based surrogate
losses are not only Bayes-consistent, as previously established in
Section~\ref{sec:two-stage}, but also realizable $\sH$-consistent,
which will be shown in
Theorem~\ref{Thm:bound-general-two-step-realizable}. This effectively
addresses the open question posed by \citet{pmlr-v206-mozannar23a} in
the context of score-based multi-class abstention and highlights the
benefits of the two-stage formulation.

\begin{restatable}[\textbf{Realizable $\sH$-consistency for
      two-stage surrogates}]{theorem}{BoundGenralTwoStepRealizable}
\label{Thm:bound-general-two-step-realizable}

Given a hypothesis set $\sH=\sH_{\sY}\times \sH_{n+1}$ that is closed
under scaling.  Let $\Phi$ be a function that satisfies the condition
$\lim_{t\to \plus \infty}\Phi(t)=0$ and $\Phi(t) \geq 1_{t \leq 0}$
for any $ t \in \Rset$. Assume that $\hat h = (\hat h_{\sY}, \hat
h_{n+1})\in \sH$ attains the infimum of the expected surrogate loss,
$\sE_{\ell}(\hat h_{\sY}) = \inf_{h_{\sY} \in
  \sH_{\sY}}\sE_{\ell}(h_{\sY})$ and $\sE_{\ell_{\hat h_{\sY}}}(\hat
h_{n+1}) = \inf_{h \in \sH}\sE_{\ell_{h_{\sY}}}(h_{n+1})$. Then, for
any distribution in which an optimal hypothesis
$h^*=(h^*_{\sY},h^*_{n+1})$ exists in $\sH$ with $\sE_{\labs}(h^*)=0$,
we have $\sE_{\labs}(\hat h) = 0$.
\end{restatable}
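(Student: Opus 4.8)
The plan is to reduce the realizable guarantee to a single \emph{pointwise domination} between the second-stage surrogate and the abstention loss. Concretely, I would first show that for every $(x,y)$, $\ell_{\hat h_{\sY}}(\hat h_{n+1},x,y)\ge \labs(\hat h,x,y)$. Writing $\hat g(x)=\max_{y\in\sY}\hat h_{\sY}(x,y)$ and recalling that $\hat h$ abstains at $x$ exactly when $\hat h_{n+1}(x)\ge\hat g(x)$, I would apply the assumption $\Phi(t)\ge\1_{t\le 0}$ to the two arguments appearing in \eqref{eq:ell-Phi-h}: the factor $\Phi(\hat h_{n+1}(x)-\hat g(x))$ is at least $\1_{\hat h_{n+1}(x)\le\hat g(x)}$, which in turn dominates the non-abstention indicator, while $\Phi(\hat g(x)-\hat h_{n+1}(x))$ is at least the abstention indicator $\1_{\hat h_{n+1}(x)\ge\hat g(x)}$. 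On the non-abstention region the label predicted by $\hat h$ coincides with that of $\hat h_{\sY}$, so the two lower bounds reassemble exactly into the two terms of $\labs(\hat h,x,y)$. Taking expectations yields $\sE_{\labs}(\hat h)\le\sE_{\ell_{\hat h_{\sY}}}(\hat h_{n+1})$, so it suffices to prove that this surrogate value is zero.

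Next I would unpack the realizable assumption. Since $\sE_{\labs}(h^*)=0$ and $c\in(0,1)$, Lemma~\ref{lemma:calibration_gap_score} shows that the conditional risk $\sC_{\labs}(h^*,x)$, which equals $1$ minus the probability of the label predicted by $h^*$, vanishes for almost every $x$; because $p(x,n+1)=1-c<1$, the optimal hypothesis never abstains and its predicted label $y^*(x)\in\sY$ satisfies $p(x,y^*(x))=1$. Hence the distribution is deterministic, $h^*_{\sY}$ classifies every point correctly, and the abstention margin $g^*(x)-h^*_{n+1}(x)$ is strictly positive almost everywhere, where $g^*(x)=\max_{y\in\sY}h^*_{\sY}(x,y)$.

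The heart of the argument is to show that the surrogate infimum is zero, which I would do by scaling $h^*$. Since $\sH$ is closed under scaling, $\alpha h^*=(\alpha h^*_{\sY},\alpha h^*_{n+1})\in\sH$ for every $\alpha>0$, and scaling preserves the argmax, so $\alpha h^*_{\sY}$ still classifies correctly and the misclassification term of $\ell_{\alpha h^*_{\sY}}$ vanishes almost everywhere. What remains is $c\,\E_{x}\!\left[\Phi\!\left(\alpha\big(g^*(x)-h^*_{n+1}(x)\big)\right)\right]$; as the margin is positive a.e., the argument tends to $+\infty$, so $\lim_{t\to\infty}\Phi(t)=0$ together with dominated convergence (using $0\le\Phi\le\Phi(0)$ on the positive half-line, valid since $\Phi$ is decreasing) forces this expectation to $0$ as $\alpha\to\infty$. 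Therefore $\inf_{h\in\sH}\sE_{\ell_{h_{\sY}}}(h_{n+1})=0$, and since $\hat h$ attains this infimum by hypothesis, $\sE_{\ell_{\hat h_{\sY}}}(\hat h_{n+1})=0$. Combining with the domination inequality and the nonnegativity of $\labs$ gives $0\le\sE_{\labs}(\hat h)\le 0$, i.e.\ $\sE_{\labs}(\hat h)=0$.

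The main obstacle is precisely this infimum-zero step and the interplay between the two stages. The scaling construction produces a competitor $(\alpha h^*_{\sY},\alpha h^*_{n+1})$ whose abstention margin is measured against $\alpha g^*$; if one instead reads the second-stage infimum with $\hat h_{\sY}$ held fixed, the relevant margin is against $\hat g$, and one must first argue—via the first-stage minimality $\sE_{\ell}(\hat h_{\sY})=\inf_{h_{\sY}}\sE_{\ell}(h_{\sY})$, a bound such as $\ell\ge(\log 2)\,\ell_{0-1}$ for the logistic loss, and scaling of $h^*_{\sY}$ to drive $\inf\sE_{\ell}$ to $0$—that $\hat h_{\sY}$ also classifies correctly almost everywhere, and then exhibit a hypothesis in $\sH_{n+1}$ with arbitrarily large margin against $\hat g$. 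Secondary care is needed to justify passing the limit inside the expectation and to treat ties in the abstention rule at the boundary $\hat h_{n+1}(x)=\hat g(x)$; both become routine once the dominating bound $\Phi(0)$ and the tie-breaking convention are fixed.
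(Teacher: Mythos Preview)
Your proposal is correct and follows essentially the same route as the paper: establish the pointwise domination $\labs(\hat h,x,y)\le \ell_{\hat h_{\sY}}(\hat h_{n+1},x,y)$ from $\Phi(t)\ge \1_{t\le 0}$, then drive the joint surrogate infimum to zero by scaling $h^*$ and invoking $\lim_{t\to\infty}\Phi(t)=0$ with dominated convergence. The paper additionally treats a separate case in which $h^*$ abstains somewhere and hence $c=0$; under the paper's standing assumption $c\in(0,1)$ that case is vacuous, so your simplification is justified. Your worry in the last paragraph about whether the second-stage infimum is taken with $\hat h_{\sY}$ fixed is already resolved by the theorem's hypothesis $\sE_{\ell_{\hat h_{\sY}}}(\hat h_{n+1})=\inf_{h\in\sH}\sE_{\ell_{h_{\sY}}}(h_{n+1})$, which is the joint infimum and is exactly what the paper uses as well.
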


\begin{proof}
By the assumptions, $\ell_{h_{\sY}}$ serves as an upper bound for
$\labs$ and thus $\sE_{\labs}(\hat h) \leq \sE_{\ell_{\hat
    h_{\sY}}}(\hat h_{n + 1})$. If abstention happens, that is
$h^*_{n+1}(x) > \max_{y\in \sY}h^*_{\sY}(x, y)$ for some point $x$,
then we must have $c=0$ by the realizability assumption. Therefore,
there exists an optimal $h^{**}$ such that $h^{**}_{n+1}(x) >
\max_{y\in \sY}h^{**}_{\sY}(x, y)$ for all $x \in \sX$ without
incurring any cost.  Then, by the Lebesgue dominated convergence
theorem and the assumption that $\sH$ is closed under scaling,
\begin{align*}
& \sE_{\labs}(\hat h)\\
& \leq \sE_{\ell_{\hat h_{\sY}}}(\hat h_{n + 1}) \\
& \leq \lim_{\alpha \to \plus\infty}\sE_{\ell_{\alpha h^{**}_{\sY}}}(\alpha h^{**}_{n+1})\\
& =\lim_{\alpha \to \plus\infty}\mathbb{E}\bracket*{ \ell_{\alpha h^{**}_{\sY}}\paren*{\alpha h^{**}_{n+1}, x, y}  }\\
& =\lim_{\alpha\to \plus\infty}\mathbb{E}\bracket*{ \1_{ \hh^{**}_{\sY}(x) \neq y} \Phi\paren*{\alpha\paren*{h^{**}_{n+1}(x)-\max_{y\in \sY}h^{**}_{\sY}(x, y)}} + c \Phi\paren*{\alpha\paren*{\max_{y\in \sY} h^{**}_{\sY}(x, y)-h^{**}_{n+1}(x)}}}\\
& =\lim_{\alpha\to \plus\infty}\mathbb{E}\bracket*{\1_{ \hh^{**}_{\sY}(x) \neq y} \Phi\paren*{\alpha\paren*{ h^{**}_{n+1}(x)-\max_{y\in \sY} h^{**}_{\sY}(x, y)}}} \tag{$c=0$}\\
& =0.  \tag{using $\lim_{t\to \plus \infty}\Phi(t)=0$ and the Lebesgue dominated convergence theorem}
\end{align*}
If abstention does not happen, that is $h^*_{n+1}(x)-\max_{y\in \sY}h^*_{\sY}(x, y)<0$ for all $x \in \sX$, then we must have $h^*_{\sY}(x, y)-\max_{y'\neq y}h^*_{\sY}(x, y')>0$ for all $x \in \sX$ and $y \in \sY$ by the realizability assumption. Then, by the Lebesgue dominated convergence theorem and the assumption that $\sH$ is closed under scaling,
\begin{align*}
& \sE_{\labs}(\hat h)\\
& \leq \sE_{\ell_{\hat h_{\sY}}}(\hat h_{n + 1})\\
& \leq \lim_{\alpha \to \plus\infty}\sE_{\ell_{\alpha h^*_{\sY}}}(\alpha h^{*}_{n+1})\\
& = \lim_{\alpha \to \plus\infty}\mathbb{E}\bracket*{ \ell_{ \alpha h^{*}_{\sY}}\paren*{\alpha h^{*}_{n+1}, x, y}  }\\
& = \lim_{\alpha\to \plus\infty}\mathbb{E}\bracket*{ \1_{ \hh^{*}_{\sY}(x) \neq y} \Phi\paren*{\alpha\paren*{h^*_{n+1}(x)-\max_{y\in \sY}h^*_{\sY}(x, y)}} + c \Phi\paren*{\alpha\paren*{\max_{y\in \sY} h^*_{\sY}(x, y)-h^*_{n+1}(x)}}}\\
& = \lim_{\alpha\to \plus\infty}\mathbb{E}\bracket*{ c \Phi\paren*{\alpha\paren*{\max_{y\in \sY} h^*_{\sY}(x, y)-h^*_{n+1}(x)}}} \tag{$h^{*}_{\sY}(x, y)-\max_{y'\neq y} h^{*}_{\sY}(x, y')>0$}\\
& = 0.  \tag{using $\lim_{t\to \plus \infty}\Phi(t)=0$ and the Lebesgue dominated convergence theorem}
\end{align*}
By combining the above two analysis, we conclude the proof.
\end{proof}
\ignore{
\begin{proof}
For any distribution in which an optimal hypothesis
$h^*=(h^*_{\sY},h^*_{n+1})$ exists in $\sH$ with $\sE_{\labs}(h^*)=0$,
we have for any $x\in \sX$, either $c=0$ and $h^*_{n+1}(x)> \max_{y\in
  \sY}h^*_{\sY}(x, y)$, or there exists $y_{\max}$ such that
$p(x, y_{\max})=1$, $h^*_{\sY}(x, y_{\max})> \max_{y'\neq
  y_{\max}}h^*_{\sY}(x, y')$ and $\max_{y\in
  \sY}h^*_{\sY}(x, y)>h^*_{n+1}(x)$.  Since $\sH$ is closed under
scaling, $\alpha h^*\in \sH$ for any $\alpha>0$. Using the fact that
$\lim_{t\to \plus \infty}\Phi(t)=0$ and $\lim_{\alpha \to \plus
  \infty}\sE_{\ell}(\alpha h^*_{\sY})=0$ for $\ell$ being the logistic
loss, we obtain
$\sE_{\ell}^*(\sH_{\sY})=\sE_{\ell_{h_{\sY}}}^*(\sH_{n+1})=0$. By
establishing that $\ell_{h_{\sY}}$ serves as an upper bound for
$\labs$, we conclude the proof.
\end{proof}
}

\section{Significance of \texorpdfstring{$\sH$}{H}-consistency bounds with minimizabiliy gaps}
\label{app:better-bounds}

As previously highlighted, the minimizabiliy gap can be upper bounded
by the approximation error $\sA_{\sfL}(\sH)= \sE^*_{\lsc}(\sH) -
\E_x\bracket[\big]{\inf_{ h \in \sH_{\rm{all}}} \E_y \bracket{\lsc( h,
    X, y) \mid X =
    x}}=\sE^*_{\sfL}(\sH)-\sE^*_{\sfL}(\sH_{\rm{all}})$. However, it
is a finer quantity than the approximation error, and as such, it can
potentially provide more significant guarantees.  To elaborate, as
shown by \citep{awasthi2022Hconsistency,awasthi2022multi}, for a
target loss function $\sfL_2$ and a surrogate loss function $\sfL_1$,
the excess error bound $\sE_{\sfL_2} (h) -
\sE^*_{\sfL_2}(\sH_{\rm{all}})\leq \Gamma\paren*{ \sE_{\sfL_1} (h) -
  \sE^*_{\sfL_1}(\sH_{\rm{all}})}$ can be reformulated as
\begin{align*}
\sE_{\sfL_2} (h) - \sE^*_{\sfL_2}(\sH) +\sA_{\sfL_2}(\sH)\leq \Gamma\paren*{ \sE_{\sfL_1} (h) - \sE^*_{\sfL_1}(\sH)+\sA_{\sfL_1}(\sH)},
\end{align*}
where $\Gamma$ is typically linear or the square-root function modulo
constants.  On the other hand, an $\sH$-consistency bound can be
expressed as follows:
\begin{equation*}
\sE_{\sfL_2} (h) - \sE^*_{\sfL_2}(\sH) +  \sM_{\sfL_2}(\sH)  \leq \Gamma\paren*{ \sE_{\sfL_1} (h) - \sE^*_{\sfL_1}(\sH) + \sM_{\sfL_1}(\sH}.
\end{equation*}
For a target loss function $\sfL_2$ with discrete outputs, such as the
zero-one loss or the deferral loss, we have
$\E_{x}\bracket[\big]{\inf_{h \in\sH}\E_{y}\bracket*{\sfL_2(h, x,
    y)\mid X = x}}=\E_x\bracket[\big]{\inf_{h \in \sH_{\rm{all}}} \E_{y}
  \bracket*{\sfL_2(h,x, y)\mid X = x}}$ when the hypothesis set generates labels
that cover all possible outcomes for each input (See
\citep[Lemma~3]{awasthi2022multi},
Lemma~\ref{lemma:calibration_gap_score} in
Appendix~\ref{app:score}). Consequently, we have
$\sM_{\sfL_2}(\sH) = \sA_{\sfL_2}(\sH)$. However, for a surrogate loss function
$\sfL_1$, the minimizability gap is upper bounded by the approximation
error, $\sM_{\sfL_1}(\sH)\leq \sA_{\sfL_1}(\sH)$, and is generally
finer.

Let us consider a straightforward binary classification example where the conditional distribution is denoted as $\eta(x)=D(Y=1 | X = x)$. We will define $\sH$ as a set of functions $h$, such that $|h(x)| \leq \Lambda$ for all $x \in \sX$, for some $\Lambda > 0$, and it is also possible to achieve any value in the range $[-\Lambda, +\Lambda]$. For the exponential-based margin loss, which we define as $\sfL(h, x, y) = e^{-yh(x)}$, we obtain the following equation:
\begin{equation*}
\E_{y}[\sfL(h, x, y)\mid X = x] = \eta(x)
e^{-h(x)} + (1 - \eta(x)) e^{h(x)}.
\end{equation*}
Upon observing this, it becomes apparent that the infimum over all measurable functions can be expressed in the following way, for all $x$:
\begin{equation*}
\inf_{h
  \in \sH_{\mathrm{all}}}\E_{y}[\sfL(h, x, y)\mid X = x] =
2\sqrt{\eta(x)(1-\eta(x))},
\end{equation*}
while the infimum over $\sH$, $\inf_{h \in \sH}\E_{y}[\sfL(h, x, y)\mid X = x]$, depends on $\Lambda$ and can be expressed as
  \begin{equation*}
   \inf_{h \in \sH}\E_{y}[\sfL(h, x, y)\mid X = x]=\begin{cases}
   \max\curl*{\eta(x),1 - \eta(x)}
e^{-\Lambda} + \min\curl*{\eta(x),1 - \eta(x)} e^{\Lambda} & \Lambda<\frac{1}{2} \abs*{\log \frac{\eta(x)}{1
    -\eta(x)}}\\
    2\sqrt{\eta(x)(1-\eta(x))} & \text{otherwise}.
   \end{cases} 
  \end{equation*}
Thus, in the deterministic scenario, the discrepancy between the
approximation error $\sA_{\sfL}(\sH)$ and the minimizability gap
$\sM_{\sfL}(\sH)$ is:
\begin{equation*}
  \sA_{\sfL}(\sH)-\sM_{\sfL}(\sH)
  = \E_{x}\bracket*{\inf_{h \in\sH}\E_{y}\bracket*{\sfL(h, x, y)\mid X = x}
    - \inf_{h \in
      \sH_{\rm{all}}} \E_{y} \bracket*{\sfL(h,x, y)\mid X = x}}
  = e^{-\Lambda}.
\end{equation*}
Therefore, for a surrogate loss, the minimizability gap can be
strictly less than the approximation error.  In summary, an
$\sH$-consistency bound can be more significant than the excess error
bound as $\sM_{\sfL_2}(\sH) = \sA_{\sfL_2}(\sH)$ when $\sfL_2$
represents the zero-one loss or deferral loss, and $\sM_{\sfL_1}(\sH)
\leq \sA_{\sfL_1}(\sH)$. They can also be directly used to derive
finite sample estimation bounds for a surrogate loss minimizer, which
are more favorable and relevant than a similar finite sample guarantee
that could be derived from an excess error bound (see
Section~\ref{sec:finite-sample}).

\vfill

\end{document}